\newtheorem{assumption}{Assumption}
\newtheorem{theorem}{Theorem}
\newtheorem{corollary}{Corollary}
\newtheorem{lemma}[theorem]{Lemma}
\newtheorem{definition}{Definition}
\def\shownotes{1}  \ifnum\shownotes=1
\newcommand{\authnote}[2]{}
\definecolor{yunchang}{RGB}{80,180,0}
\title{A Reduction-based Framework for Sequential Decision Making with Delayed Feedback}
\author{
  \textbf{Yunchang Yang}$^1$\thanks{Equal contribution. Correspondence to  Yunchang Yang $\langle$\texttt{yangyc@pku.edu.cn}$\rangle$ and Han Zhong $\langle$\texttt{hanzhong@stu.pku.edu.cn}$\rangle$.} \, \textbf{Han Zhong}$^{1*}$ \, \textbf{Tianhao Wu}$^{2*}$ \,  \textbf{Bin Liu}$^{3}$ \,
  \textbf{Liwei Wang}$^{1,4}$ \, \textbf{Simon S. Du}$^5$ \\
  $^1$Center for Data Science, Peking University \\
  $^2$University of California, Berkeley \quad
  $^3$Zhejiang Lab \\
  $^4$National Key Laboratory of General Artificial
Intelligence, \\ 
School of Intelligence Science and Technology, Peking University\\
$^5$University of Washington 
}
\begin{document}

\maketitle

\begin{abstract}
     We study stochastic delayed feedback in general sequential decision-making problems, which include bandits, single-agent Markov decision processes (MDPs), and Markov games (MGs). We propose a novel reduction-based framework, which turns any multi-batched algorithm for sequential decision making with instantaneous feedback into a sample-efficient algorithm that can handle stochastic delays in sequential decision-making problems. By plugging different multi-batched algorithms into our framework, we provide several examples demonstrating that our framework not only matches or improves existing results for bandits, tabular MDPs, and tabular MGs, but also provides the first line of studies on delays in sequential decision making with function approximation. In summary, we provide a complete set of sharp results for single-agent and multi-agent sequential decision-making problems with delayed feedback.
\end{abstract}

\section{Introduction}

Delayed feedback is a common and fundamental problem in  sequential decision making \citep{sutton2018reinforcement,lattimore2020bandit,zhang2021multi}. Taking the recommendation system as an example, delayed feedback is an inherent part of this problem. Specifically, the learner adjusts her recommendation after receiving users' feedback, which may take a random delay after the recommendation is issued. More examples include but are not limited to robotics \citep{mahmood2018setting} and video steaming \citep{changuel2012online}. Furthermore, the delayed feedback issue is more serious in multi-agent systems \citep{chen2020delay} since observing the actions of other agents' may happen with a varying delay.

Although handling delayed feedback is a prominent challenge in practice, the theoretical understanding of delays in sequential decision making is limited. Even assuming the delays are stochastic  (delays are sampled from a fixed distribution), the nearly minimax optimal result is only obtained by works on multi-armed bandits and contextual bandits \citep{agarwal2011distributed,dudik2011efficient,joulani2013online,vernade2020linear,vernade2017stochastic,gael2020stochastic}. For more complex problems such as linear bandits \citep{lancewicki2021stochastic} and tabular reinforcement learning (RL) \citep{howson2022delayed}, the results are suboptimal. Meanwhile, to the best of our knowledge, there are no theoretical guarantees  in RL with function approximation and multi-agent RL settings. Hence, we focus on stochastic delays and aim to answer the following two questions in this work:

\begin{center}
    1. Can we provide sharper regret bounds for basic models such as linear bandits and tabular RL? \\
    2. Can we handle delays in (multi-agent) sequential decision making in the context of function approximation?
\end{center}

We answer these two questions affirmatively by proposing a reduction-based framework for stochastic delays in both single-agent and multi-agent sequential decision making. Our contributions are summarized below.

\begin{table*}[t!]
	\centering
 \caption[]{
		Comparison of bounds for MSDM with delayed feedback. We denote by $K$  the number of rounds the agent plays (episodes in RL), $A$ and $B$ the number of actions (or arms), $A_{\max} = \max_i A_i$, $S$ the number of states, $H$ the length of an episode, $d$ the linear feature dimension, and $\operatorname{dim}_E(\mathcal{F}, 1 / K)$ the eluder dimension of general function class $\mathcal{F}$, $d_{\tau}(q)$ is the quantile function of the random variable $\tau$ and $n$ is the number of players. 
	}
	
	\footnotesize
        \centering\resizebox{\columnwidth}{!}{

		\renewcommand{\arraystretch}{1.5}
		\begin{tabular}[0.6\textwidth]{|c|c|c|}
		\hline
		Setting & Previous Result & \makecell{This Work}\\
		\hline\hline
		\multirow{2}{*}{\makecell{Multi-armed\\ bandit}} &
            \multirow{2}{*}{\makecell{$O(\frac{A\log(K)}{q\min _{i \neq \star} \Delta_i}+d_{\tau}(q)\log K)$\\{\scriptsize \citep{lancewicki2021stochastic}}}}
		&
			
				$O\left(\frac{A \log K}{q\min _{i \neq \star} \Delta_i} + \frac{\log^2 K }{q} +  d_{\tau}(q) \log K \right)$
		\\
		\cline{3-3}
  &
		&\makecell{$O\left(\frac{A \log K}{\min _{i \neq \star} \Delta_i} +  \mathbb{E}[\tau]\log K\right)$}
		\\
		\hline
  \multirow{2}{*}{\makecell{Linear\\ bandit}} &
            \makecell{$\Tilde{O}(d\sqrt{K}+ d^{3/2}\mathbb{E}[\tau])$\\{\scriptsize \citep{howson2021delayed}}}
		&
			
				$\Tilde{O}\left(\frac{1}{q}d\sqrt{K} +  d_{\tau}(q) \right)$
		\\
		\cline{3-3}
  & \makecell{$\Tilde{O}(d\sqrt{K}+ \mathbb{E}[\tau])$\\{\scriptsize \citep{vakili2023delayed}, concurrent work}}
		&\makecell{$\Tilde{O}(d\sqrt{K}+\mathbb{E}[\tau])$}
		\\
		\hline
  \multirow{2}{*}{\makecell{Tabular\\ MDP}} &
            \multirow{2}{*}{\makecell{$\Tilde{O}(\sqrt{H^3 S A K }+H^2 S A \mathbb{E}[\tau]\log K )$ \\{\scriptsize \citep{howson2022delayed}}}}
		&
			
				$\Tilde{O}\left(\frac{1}{q}\sqrt{S A H^3 K }  +  H(H+\log \log K)d_{\tau}(q)  \right)$
		\\
		\cline{3-3}
  &
		&\makecell{$\Tilde{O}\left(\sqrt{S A H^3 K }+ H (H+\log \log K)\mathbb{E}[\tau]\right)$}
		\\
		\hline
  \multirow{2}{*}{\makecell{Linear\\ MDP}} &
            \multirow{2}{*}{\makecell{---}}
		&
			
				$\Tilde{O}\big(\frac{1}{q}\sqrt{d^3 H^4 K}  
        +  d H^2 d_{\tau}(q)\log K  \big)$
		\\
		\cline{3-3}
  &
		&\makecell{$\Tilde{O}(\sqrt{d^3 H^4 K}+d H^2 \mathbb{E}[\tau])$}
		\\
		\hline
  \multirow{2}{*}{\makecell{RL with general \\ function approximation}} &
            \multirow{2}{*}{\makecell{---}}
		&
			
				$\Tilde{O}\big(\frac{1}{q}\sqrt{\operatorname{dim}_E^2(\mathcal{F}, 1 / K) H^4 K} +  
         H^2\operatorname{dim}_E(\mathcal{F}, 1 / K) d_{\tau}(q)  \big)$
		\\
		\cline{3-3}
  &
		&\makecell{$\Tilde{O}\left(\sqrt{\operatorname{dim}_E^2(\mathcal{F}, 1 / K)  H^4  K}+H^2\operatorname{dim}_E(\mathcal{F}, 1 / K)\mathbb{E}[\tau]\right)$}
		\\
		\hline
\multirow{2}{*}{\makecell{Tabular two-player \\ zero-sum MG}} &
            \multirow{2}{*}{\makecell{---}}
		&
		$\Tilde{O}\big( \frac{1}{q}\left(\sqrt{H^3 S A B K }+H^3 S^2 A B \right) 
        + H^2 SAB d_{\tau}(q)\log K\big) $
		\\
		\cline{3-3}
  &
		&\makecell{$\Tilde{O}\big(\sqrt{H^3 S A B K }+H^3 S^2 A B  
        +  H^2SAB \mathbb{E}[\tau]\log K\big).$}
		\\
		\hline
\multirow{2}{*}{\makecell{Linear two-player\\ zero-sum MG}} &
            \multirow{2}{*}{\makecell{---}}
		&
			
				$\Tilde{O}\left( \frac{1}{q}\sqrt{ d^3 H^4 K}  + dH^2 d_{\tau}(q)\log K\right) $
		\\
		\cline{3-3}
  &
		&\makecell{$\Tilde{O}\left(\sqrt{ d^3 H^4 K} +  dH^2 \mathbb{E}[\tau]\log K\right)$}
		\\
		\hline
\multirow{2}{*}{\makecell{Tabular multi-player \\ general-sum MG}} &
            \multirow{2}{*}{\makecell{$\tilde{O}\left(H^3 \sqrt{S A_{\max}} + H^3 \sqrt{S \mathbb{E}[\tau]K}\right)$ \\{\scriptsize \citep{zhang2022multi}}} }
		&
		$\Tilde{O}\big( \frac{1}{q}\sqrt{H^5SA_{\max}K}  
        + H^2 nS d_{\tau}(q)\log K\big) $
		\\
		\cline{3-3}
  &
		&\makecell{$\Tilde{O}\big(\sqrt{H^5SA_{\max}K} 
        +  H^2nS \mathbb{E}[\tau]\log K\big)$}
		\\
            \hline

		\end{tabular}
	 }
	\vspace{0.04in}
	
\label{tab:regret.bounds}

\end{table*}

\paragraph{Our Contributions.}  Our main contribution is proposing a new reduction-based framework for both single-agent and multi-agent sequential decision making with stochastic delayed feedback. The proposed framework can convert any multi-batched algorithm (cf. Section~\ref{sec:multi:batch}) for sequential decision making with instantaneous feedback to a provably efficient algorithm that is capable of handling stochastic delayed feedback in sequential decision making. Unlike previous works which require case by case algorithm design and analysis, we provide a unified framework to solve a wide range of problems all together. 

Based on this framework, we obtain a complete set of results for single-agent and multi-agent sequential decision making with delayed feedback. In specific, our contributions can be summarized as follows:
\begin{itemize}[leftmargin=*]
    \item \textbf{New framework:} We propose a generic algorithm that can be integrated with any multi-batched algorithm in a black-box fashion. Meanwhile, we provide a unified theoretical analysis for the proposed generic algorithm.
    \item \textbf{Improved results and new results:} By applying our framework to different settings, we obtain state-of-the-art regret bounds for multi-armed bandits, and derive sharper results for linear bandits and tabular RL, which significantly improve existing results \citep{lancewicki2021stochastic,howson2022delayed}. We also provide the first result for single-agent RL with delayed feedback in the context of linear or even general function approximation;
    \item \textbf{New algorithms:} We show that delayed feedback in Markov games (MGs) can also be handled by our framework. By plugging several new proposed multi-batched algorithms for MGs into our framework, we not only improve recent results on tabular MGs \citep{zhang2022multi}, but also present a new result for linear MGs. As a byproduct, we provide the first line of study on multi-batched algorithms for MGs, which might be of independent interest.
\end{itemize}
Our results and comparisons with existing works are summarized in Table~\ref{tab:regret.bounds}.

\subsection{Related Works}
\paragraph{Bandit/MDP with delayed feedback.}
Stochastically delayed feedback has been studied a lot in the multi-armed bandit and contextual bandit problems \citep{agarwal2011distributed,dudik2011efficient,joulani2013online,vernade2020linear,vernade2017stochastic,gael2020stochastic,huang2023banker}. A recent work \citep{lancewicki2021stochastic} shows that under stochastic delay, elimination algorithm performs better than UCB algorithm, and gives tight upper and lower bounds for multi-armed bandit setting. \citet{howson2021delayed} solves the linear bandit setting by adapting LinUCB algorithm. The concurrent work of \citet{vakili2023delayed} provides a bound for the kernel bandit setting with delayed feedback. Our work does not consider the kernel setting, but we conjecture that our framework can handle this problem by designing a multi-batched algorithm for kernel bandits. 

In RL, despite practical importance, there is limited literature on stochastical delay in RL. \citet{lancewicki2022learning} and \citet{jin2022near} considered adversarial delays. Their regret depends on the sum of
the delays, the number of states and the number of steps per episode. However, when reducing their results to stochastic delay, the bounds will be too loose. \citet{howson2022delayed} considered stochastic delay in tabular MDP, but their bound is not tight. As far as we know, there is no work considering linear MDP and general function approximation with delayed feedback.

Another line of work considers adversarially delayed feedback  \citep{lancewicki2022learning,jin2022near,quanrud2015online,thune2019nonstochastic,bistritz2019online,zimmert2020optimal,ito2020delay,gyorgy2021adapting,van2022nonstochastic}. In the adversarial setting the state-of-the-art result is of form $\sqrt{K+D}$, where $K$ is the number of the episodes and $D$ is the total delay. Note that when adapting their result to the stochastic setting, the bound becomes $\sqrt{K+K\tau}$ where $\tau$ is the expectation of delay, while in stochastic delay setting the upper bound can be $\sqrt{K}+\tau$. Therefore, a direct reduction from adversarial delay setting to stochastic delay setting will result in a suboptimal regret bound.

\paragraph{Low switching cost algorithm.} 
Bandit problem with low switching cost has been widely studied in past decades. \citet{cesa2013online} showed an $\tilde{\Theta}(K^{\frac{2}{3}})$ regret bound under adaptive adversaries and bounded memories. \citet{perchet2016batched} proved a regret bound of $\tilde{\Theta}(K^{\frac{1}{1-2^{1-M}}})$ for the two-armed bandit problem within $M$ batches, and later \citet{gao2019batched} extended their result to the general $A$-armed case. In RL, by doubling updates, the global switching cost is $O(S A H \log _2(K))$ while keeping the regret $\tilde{O}(\sqrt{S A K H^3})$ \citep{azar2017minimax}.  Recently \citet{zhang2022near} proposed a policy elimination algorithm that achieves $\tilde{O}(\sqrt{S A K H^3})$ regret and $O(H+\log _2 \log _2(K))$ switching cost. Besides, \citet{gao2021provably} generalized the problem to linear MDPs, and established a regret bound of $\tilde{O}(\sqrt{d^3 H^4 K})$ with $O(d H \log (K))$ global switching cost. Recent work \citet{qiao2022sample} achieved $O(H S A \log _2 \log _2(K))$ switching cost and $\tilde{O}(\operatorname{poly}(S, A, H) \sqrt{K})$ regret with a computational inefficient algorithm. And \citet{kong2021online} consider MDP with general function approximation settings.

\paragraph{Markov Games.}
A standard framework to capture multi-agent RL is Markov games (also known as stochastic games) \citep{shapley1953stochastic}. For two-player zero-sum Markov games, the goal is to solve for the Nash equilibrium (NE) \citep{nash1951non}. 
Recently, many works study this problem under the generative model \citep{zhang2020model,li2022minimax}, offline setting \citep{cui2022offline,cui2022provably,zhong2022pessimistic,xiong2022nearly,yan2022model}, and online setting \citep{bai2020provable,bai2020near,liu2021sharp,tian2021online,xie2020learning,chen2021almost,jin2022power,xiong22b,huang2021towards}. 
Our work is mostly related to \citet{liu2021sharp} and \citet{xie2020learning}. Specifically, \citet{liu2021sharp} studies online tabular two-player zero-sum MGs and proposes an algorithm with $O(\sqrt{SABH^3K})$ regret. \citet{xie2020learning} focused on online linear MGs provide an algorithm that enjoys $O(\sqrt{d^3H^4K})$ regret. We also remark that  \citet{jin2022power,xiong22b,huang2021towards} establish sharper regret bound for linear MGs but the algorithms therein are computationally inefficient. There is also a line of works \citep{liu2021sharp,jin2021v,song2021can,zhong2021can,mao2022provably,cui2022offline,jin2022complexity,daskalakis2022complexity} studying multi-player general-sum MGs. Among these works, \citet{jin2021v,song2021can,mao2022provably} use V-learning algorithms to solve coarse correlated equilibrium (CCE). 
 But all these algorithms can not handle delayed feedback nor are multi-batched algorithms that can be plugged into our framework. To this end, we present the first line of studies on multi-batched algorithms for MGs and provide a complete set of results for MGs with delayed feedback.

A very recent but independent work \citep{zhang2022multi} also studies multi-agent reinforcement learning with delays. They assume that the reward is delayed, and aim to solve for coarse correlated equilibrium (CCE) in general-sum Markov games. Our work has several differences from \citet{zhang2022multi}.  First, they study the setting where the total delay is upper bounded by a certain threshold, while we focus on the setup with stochastic delay. Besides, their method seems only applicable for reward delay, while we allow the whole trajectory feedback to be delayed. Second, in the stochastic delay setting the delay-dependent term in their bound is worse than ours. Specifically,  their bound scales as $O(H^3\sqrt{S\mathbb{E}[\tau]K})$, while ours is $O(H^2nS\mathbb{E}[\tau])$ where $n$ is the number of players. Finally, it seems that their results cannot be extended to the multi-agent RL with function approximation.


\section{Preliminary}
The objective of this section is to provide a unified view of the settings considered in this paper, i.e., bandits, Markov Decision Processes (MDPs) and multi-agent Markov Games. We consider a general framework for interactive decision making, which we refer to as Multi-agent Sequential Decision Making (MSDM). 

\paragraph{Notations}
We use $\Delta(\cdot)$ to represent the set of all probability distributions on a set. For $n \in \mathbb{N}_{+}$, we denote $[n]=\{1,2, \ldots, n\}$. We use $O(\cdot), \Theta(\cdot), \Omega(\cdot)$ to denote the big-O, big-Theta, big-Omega notations. We use $\widetilde{O}(\cdot)$ to hide logarithmic factors.

\paragraph{Multi-agent Sequential Decision Making} A Multi-agent Sequential Decision Making problem can be represented as a tuple $M=(n, \mathcal{S}, \{\mathcal{A}_{i}\}_{i=1}^{n} , H, \{p_h\}_{h=1}^H, s_1, \{r_h\}_{h=1}^H)$, where $n$ is the number of agents, $\mathcal{S}$ is the state space, $\mathcal{A}_{i}$ is the action space of player $i$, $H$ is the length of each episode and $s_1$ is the initial state. We define $\mathcal{A} = \bigotimes_i \mathcal{A}_i$ as the whole action space.

At each stage $h$, for every player $i$, every state-action pair $(s, a_1, ..., a_n)$ is characterized by a reward distribution with mean $r_{i,h}(s, a_1, ..., a_n)$ and support in $[0, 1]$, and a transition distribution $p_h(\cdot | s, a)$ over next states. We denote by $S=|\mathcal{S}|$ and $A_i=|\mathcal{A}_i|$.

The protocol proceeds in several episodes. At the beginning of each episode, the environment starts with initial state $s_1$. At each step $h\in [1,H]$, the agents observe the current state $s_h$, and take their actions $(a_1,..., a_n)$ respectively. Then the environment returns a reward signal, and transits to next state $s_{h+1}$.

A (random) policy $\pi_i$ of the $i^{\text {th }}$ player is a set of $H$ maps $\pi_i:=\{\pi_{i, h}: \Omega \times \mathcal{S} \rightarrow \mathcal{A}_i\}_{h \in[H]}$, where each $\pi_{i, h}$ maps a random sample $\omega$ from a probability space $\Omega$ and state $s$ to an action in $\mathcal{A}_i$. A joint (potentially correlated) policy is a set of policies $\{\pi_i\}_{i=1}^m$, where the same random sample $\omega$ is shared among all agents, which we denote as $\pi=\pi_1 \odot \pi_2 \odot \ldots \odot \pi_m$. We also denote $\pi_{-i}$ as the joint policy excluding the $i^{\text {th }}$ player. For each stage $h \in [H]$ and any state-action pair $(s,a)\in \mathcal{S}\times\mathcal{A}$, the value function and Q-function of a policy $\pi$ are defined as:
\begin{align*}
    Q^\pi_{i,h}(s,a)=\mathbb{E}\bigg[\sum_{h'=h}^H r_{i,h'} \,\Big|\, s_h=s,a_h=a,\pi \bigg], \quad 
    V_{i,h}^{\pi}(s)=\mathbb{E}\bigg[\sum_{h'=h}^H r_{i,h'} \,\Big|\, s_h=s,\pi \bigg].
\end{align*}
For each policy $\pi$, we define $V_{H+1}^{\pi}(s)=0$ and $Q_{H+1}^{\pi}(s,a)=0$ for all $s\in\mathcal{S},a\in\mathcal{A}$. Sometimes we omit the index $i$ when it is clear from the context.

For the single agent case ($n=1$), the goal of the learner is to maximize the total reward it receives. There exists an optimal policy $\pi^\star$ such that $Q^\star_h(s,a)=Q^{\pi^\star}_h(s,a)=\max_\pi Q^\pi_h(s,a)$ satisfies the optimal Bellman equations 
\begin{align*}
    Q^\star_h(s,a) = r_h(s,a) + \mathbb{E}_{s' \sim p_h(s,a)}[V^\star_{h+1}(s')], \quad 
    V^\star_h(s) = \max_{a\in\mathcal{A}} \{Q^\star_h(s,a)\}, \quad \forall (s, a) \in \mathcal{S} \times \mathcal{A} .
\end{align*}
Then the optimal policy is the greedy policy $\pi^\star_h(s) = \arg\max_{a\in\mathcal{A}} \{Q_h^\star(s,a)\}$. And we evaluate the performance of the learner through the notion of single-player regret, which is defined as
\begin{equation}\label{eq:regret}
    \operatorname{Regret}(K)=\sum_{k=1}^{K} \left(V^\star_1 -V^{\pi_{k}}_1\right)(s_1).
\end{equation}
In the case of multi-player general-sum MGs, for any policy $\pi_{-i}$, the best response of the $i^{\text {th }}$ player is defined as a policy of the $i^{\text {th }}$ player which is independent of the randomness in $\pi_{-i}$, and achieves the highest value for herself conditioned on all other players deploying $\pi_{-i}$. In symbol, the best response is the maximizer of $\max _{\pi_i^{\prime}} V_{i, 1}^{\pi_i^{\prime} \times \pi_{-i}}(s_1)$ whose value we also denote as $V_{i, 1}^{\dagger, \pi_{-i}}(s_1)$ for simplicity. 

For general-sum MGs, we aim to learn the Coarse Correlated Equilibrium (CCE), which is defined as a joint (potentially correlated) policy where no player can increase her value by playing a different independent strategy. In symbol,
\begin{definition}[Coarse Correlated Equilibrium]
A joint policy $\pi$ is a CCE if $\max _{i \in[m]}(V_{i, 1}^{\dagger, \pi_{-i}}-V_{i, 1}^\pi)\left(s_1\right)=$ 0. A joint policy $\pi$ is a $\epsilon$-approximate CCE if $\max _{i \in[m]}(V_{i, 1}^{\dagger, \pi_{-i}}-V_{i, 1}^\pi)(s_1) \leq \epsilon$.
\end{definition}
For any algorithm that outputs policy $\hat{\pi}^{k}$ at episode $k$, we measure the performance of the algorithm using the following notion of regret:
\begin{equation}\label{eqn:regret_gen}
    \operatorname{Regret}(K)=\sum_{k=1}^{K} \max_{j} \Big(V_{j, h}^{\dagger, \hat{\pi}_{-j, h}^k} - V_{j, h}^{\hat{\pi}_h^k} \Big)(s_1).
\end{equation}
One special case of multi-player general-sum MGs is two-player zero-sum MGs, in which there are only two players and the reward satisfies $r_{2,h}=-r_{1,h}$. In zero-sum MGs, there exist policies 
that are the best responses to each other. We call these optimal strategies the Nash equilibrium of the Markov game, which satisfies the following minimax equation:
\begin{equation*}
\sup _\mu \inf _\nu V_h^{\mu, \nu}(s)=V_h^{\mu^{\star}, \nu^{\star}}(s)=\inf _\nu \sup _\mu V_h^{\mu, \nu}(s).
\end{equation*}
We measure the suboptimality of any pair of general policies $(\hat{\mu}, \hat{\nu})$ using the gap
between their performance and the performance of the optimal strategy (i.e., Nash equilibrium) when playing
against the best responses respectively:
\begin{equation*}
\begin{aligned}
V_1^{\dagger, \hat{\nu}}\left(s_1\right)-V_1^{\hat{\mu}, \dagger}\left(s_1\right) 
=\left[V_1^{\dagger, \hat{\nu}}\left(s_1\right)-V_1^{\star}\left(s_1\right)\right]+\left[V_1^{\star}\left(s_1\right)-V_1^{\hat{\mu}, \dagger}\left(s_1\right)\right] .
\end{aligned}
\end{equation*}

Let $(\mu^k, \nu^k)$ denote the policies deployed by the algorithm in the $k^{\text {th }}$ episode. The learning goal is to minimize the two-player regret of $K$ episodes, defined as
\begin{equation}\label{eqn:regret_zero}
\operatorname{Regret}(K)=\sum_{k=1}^K\left(V_1^{\dagger, \nu^k}-V_1^{\mu^k, \dagger}\right)\left(s_1\right).
\end{equation}
One can see that \eqref{eqn:regret_zero} is a natural reduction of \eqref{eqn:regret_gen} from general-sum games to zero-sum games.

\paragraph{MSDM with Delayed Feedback} We focus on the case where there is delay between playing an episode and observing the sequence of states, actions and rewards sampled by the agent; we refer to this sequence as feedback throughout this paper.

We assume that the delays are stochastic, i.e. we introduce a random variable for the delay between playing the $k$-th episode and observing the feedback of the $k$-th episode, which we denote by $\tau_k$. We assume the delays are i.i.d:
\begin{assumption}
    The delays $\{\tau_k\}_{k=1}^{k}$ are positive, independent and identically distributed random variables: $\tau_k \stackrel{i.i.d.}{\sim} f_\tau(\cdot)$. We denote the expectation of $\tau$ as $\mathbb{E}[\tau]$ (which may be infinite), and denote the quantile function of $\tau$ as $d_{\tau}(q)$, i.e.
    \begin{equation}
d_{\tau}(q)=\min \left\{\gamma \in \mathbb{N} \mid \operatorname{Pr}\left[\tau \leq \gamma\right] \geq q\right\}.
\end{equation}
\end{assumption}
When delay exists, the feedback associated with an episode does not return immediately. Instead, the feedback of the $k$-th episode cannot be observed until the $k+\tau_k$-th episode ends. 

Sometimes in order to control the scale and distribution of the delay, we make the assumption that the delays are $(v,b)$ subexponential random variables. This is a standard assumption used in previous works \cite{howson2021delayed}, though it is not always necessary in our paper.

\begin{assumption}\label{ass:subexp}
     The delays are non-negative, independent, and identically distributed $(v, b)$-
subexponential random variables. That is, their moment generating function satisfies the following inequality
\begin{equation}
\mathbb{E}\left[\exp \left(\gamma\left(\tau_t-\mathbb{E}\left[\tau_t\right]\right)\right)\right] \leq \exp \left(\frac{1}{2} v^2 \gamma^2\right)
\end{equation}
for some $v$ and $b$, and all $|\gamma| \leq 1 / b$.
\end{assumption}

\section{Multi-batched Algorithm for MSDMs} \label{sec:multi:batch}
In this section we mainly discuss multi-batched algorithms for MSDMs. This section aims to provide a formal definition of multi-batched algorithm, and give some examples of multi-batched algorithm for the ease of understanding.

Formally, a multi-batched algorithm consists of several batches. At the beginning of the $m^{\text{th}}$ batch, the algorithm outputs a policy sequence $\pi^{m}=(\pi^m_1, \pi^m_2, ..., \pi^m_{l^m})$ for this batch using data collected in previous batches, along with a stopping criteria $SC$, which is a mapping from a dataset to a boolean value. Then the agent executes the policy sequence $\pi^m$ for several episodes and collects feedback to the current dataset $\mathcal{D}$, until the stopping criteria are satisfied (i.e. $SC(\mathcal{D}) = 1$). If the length of the $m^{\text{th}}$ batch exceeds $l_m$, then we simply run the policy sequence cyclically, which means in the $k^{\text{th}}$ episode we run policy $\pi^m_{k \text{ mod } l^m} $. Algorithm \ref{alg:protocal} describes a protocol of multi-batched algorithm.

\begin{algorithm}
    \caption{Protocol of Multi-batched Algorithm}\label{alg:protocal}
    \begin{algorithmic}[1]
    \STATE Initialize dataset $\mathcal{D}^{0} = \emptyset$
    \FOR{batch $m=1,...,M$}
    \STATE $\mathcal{D}=\emptyset$
    \STATE $k=0$
    \STATE Calculate a policy sequence $\pi^{m}=(\pi^m_1, \pi^m_2, ..., \pi^m_{l^m})$ using previous data $\mathcal{D}^{m-1}$, and a stopping criteria $SC$
    \WHILE{$SC(\mathcal{D})=0$}
    \STATE $k\leftarrow k+1$
    \STATE In episode $k$, execute $\pi^m_{k \text{ mod } l^m} $ and collect trajectory feedback $o^k$
    \STATE $\mathcal{D}=\mathcal{D} \cup \{o^k\} $
    \ENDWHILE
    \STATE Update dataset $\mathcal{D}^{m} = \mathcal{D}^{m-1}\cup \mathcal{D}$
    \ENDFOR
    \STATE {\bfseries Output:} A policy sequence $\{\pi^m\}$
    \end{algorithmic}
\end{algorithm}

Note that the major requirement is that the policy to be executed during the batch should be calculated at the beginning of the batch. Therefore, algorithms that calculate the policy at the beginning of each episode do not satisfy this requirement. For example, UCB algorithms in bandits \citep{lattimore2020bandit} and UCBVI \citep{azar2017minimax} are not multi-batched algorithms.

\paragraph{Relation with low-switching cost algorithm} Our definition of multi-batched algorithm is similar to that of low-switching cost algorithm, but is more general. 

For any algorithm, the switching cost is the number of policy changes in the running of the algorithm in $K$ episodes, namely:
\begin{equation*}
N_{\text {switch }} \triangleq \sum_{k=1}^{K-1} \mathbb{I}\left\{\pi_k \neq \pi_{k+1}\right\}.
\end{equation*}

Most algorithms with $N_{\text {switch }}$ low-switching cost are multi-batched algorithms with $N_{\text {switch }}$ batches. For example, the policy elimination algorithm proposed in \citet{zhang2022near} is a multi-batched algorithm with $2H+\log\log K$ batches. And the algorithm for linear MDP with low global switching cost in \citet{gao2021provably} is a multi-batched algorithm with $O(d H \log K)$ batches. On the other hand, the difference between batches and switching cost is that we do not require the policy to be the same during a batch. Therefore, a multi-batched algorithm with $N$ batches may not be an algorithm with $N$ switching cost. For example, the Phase Elimination Algorithm in linear bandit with finite arms setting \citep{lattimore2020bandit} is a multi-batched algorithm with $O(\log K)$ batches, but the switching cost is $O(A\log K)$ which is much larger.

\section{A Framework for Sequential Decision Making with Delayed Feedback}
In this section, we introduce our framework for sequential decision making problems with stochastic delayed feedback. We first describe our framework, which converts any multi-batched algorithm to an algorithm for MSDM with delayed feedback. Then we provide theoretical guarantees for the performance of such algorithms. 

Before formally introducing our framework, we first give an intuitive explanation of why multi-batched algorithms work for delayed settings. Recall that \citet{lancewicki2021stochastic} gives a lower bound showing that UCB-type algorithms are suboptimal in multi-armed bandits with stochastic delay (see Theorem 1 in \citet{lancewicki2021stochastic}). The main idea of the lower bound is that UCB algorithm will keep pulling a suboptimal arm until the delayed feedback arrives and the policy is updated. Therefore, if the policy updates too frequently, then the algorithm has to spend more time waiting for the update, and the effect of delay will be greater. This observation motivates us to consider multi-batched algorithms, which have a low frequency of policy updating.

For any sequential decision making problem, assume that we have a multi-batched algorithm $\mathcal{ALG}$ for the problem without delay. Then in the existence of delay, we can still run $\mathcal{ALG}$. The difference is, in each batch of the algorithm, we run some extra steps until we observe enough feedback so that the stopping criteria are satisfied. For example, if the delay $\tau$ is a constant and the stopping criteria is that the policy $\pi_k$ is executed for a pre-defined number of times $t_k$, then we only need to run $\pi_k$ for $t_k+\tau$ times. After the extra steps are finished, we can collect $t_k$ delayed observations, which meet the original stopping criteria. Then we can continue to the next batch. Algorithm \ref{alg:protocal_delay} gives a formal description.

\begin{algorithm}
    \caption{Multi-batched Algorithm With Delayed Feedback}\label{alg:protocal_delay}
    \begin{algorithmic}[1]
    \STATE {\bfseries Require:} A multi-batched algorithm for the undelayed environment $\mathcal{ALG}$
    \STATE Initialize dataset $\mathcal{D}^{0} = \emptyset$
    \FOR{batch $m=1,...,M$}
    \STATE $\mathcal{ALG}$ calculates a policy $\pi^m$ (or a policy sequence) and a stopping criteria $SC$
    \STATE $\mathcal{D}=\emptyset$
    \WHILE{$SC(\mathcal{D})=0$}
    \STATE $k\leftarrow k+1$
    \STATE In episode $k$, execute $\pi^m$
    \STATE Collect trajectory feedback in this batch that is observed by the end of the episode: $\{o^t: t+\tau_t=k\}$
    \STATE $\mathcal{D}=\mathcal{D} \cup \{o^t: t+\tau_t=k\} $
    \ENDWHILE
    \STATE Update dataset $\mathcal{D}^{m} = \mathcal{D}^{m-1}\cup \mathcal{D}$
    \ENDFOR
    \STATE {\bfseries Output:} A policy sequence $\{\pi^m\}$
    \end{algorithmic}
\end{algorithm}

For the performance of Algorithm \ref{alg:protocal_delay}, we have the following theorem, which relates the regret of delayed MSDM problems with the regret of multi-batched algorithms in the undelayed environment. 

\begin{theorem}\label{thm:main}
    Assume that in the undelayed environment, we have a multi-batched algorithm with $N_b$ batches, and the regret of the algorithm in $K$ episodes can be upper bounded by $\widetilde{\operatorname{Regret}}(K)$ with probability at least $1-\delta$. Then in the delayed feedback case, with probability at least $1-\delta$, the regret can be upper bounded by
    \begin{equation}\label{eqn:reg_quant}
        \operatorname{Regret}(K)\leq \frac{1}{q}\widetilde{\operatorname{Regret}}(K) + \frac{2H N_b \log (K/\delta)}{q} + H N_b d_{\tau}(q)
    \end{equation}
    for any $q\in (0,1)$. In addition, if the delays satisfy Assumption \ref{ass:subexp}, then with probability at least $1-\delta$ the regret can be upper bounded by
    \begin{equation}\label{eqn:reg_exp}
        \operatorname{Regret}(K)\leq \widetilde{\operatorname{Regret}}(K) +  H N_b \left(\mathbb{E}[\tau]+C_\tau\right),
    \end{equation}
    where $C_\tau =\min \{\sqrt{2 v^2 \log (3 KH/(2 \delta) )}, 2 b \log (3 KH/(2 \delta) )\}$ is a problem-independent constant.
\end{theorem}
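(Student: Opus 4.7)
Because both the delayed and undelayed runs invoke the same multi-batched algorithm with the same stopping criterion, an induction on $m$ shows that the dataset handed to the policy-computation routine at the start of batch $m$ is identical in the two runs, so the resulting policy $\pi^m$ is the same. Writing $T_m$ and $L_m$ for the undelayed and delayed lengths of batch $m$, and bounding the possibly incomplete last batch by a full one, the regret decomposes as
\begin{equation*}
\operatorname{Regret}(K) \leq \sum_{m=1}^{N_b} L_m (V_1^{\star}-V_1^{\pi^m})(s_1), \qquad \widetilde{\operatorname{Regret}}(K) = \sum_{m=1}^{N_b} T_m (V_1^{\star}-V_1^{\pi^m})(s_1).
\end{equation*}
It remains to control $L_m$ in terms of $T_m$ and the delay distribution.

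\textbf{Quantile bound on $L_m$.} Call an episode \emph{fast} if its delay $\tau_t \leq d_\tau(q)$; by the definition of the quantile, fast episodes occur i.i.d.\ with probability at least $q$. If among the first $L - d_\tau(q)$ episodes of batch $m$ there are at least $T_m$ fast ones, then every one of them has already reported feedback by time $L$, so the stopping criterion fires and $L_m \leq L$. Setting $L = T_m/q + 2\log K/q + d_\tau(q)$, the expected fast-count in the first $L - d_\tau(q)$ slots is at least $T_m + 2\log K$, and a multiplicative Chernoff bound makes the probability of having fewer than $T_m$ fast episodes decay like $1/K^2$. Union-bounding over the at most $K$ batches gives $L_m \leq T_m/q + 2\log K/q + d_\tau(q)$ simultaneously for all $m$ with probability at least $1-1/K$. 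Substituting into the decomposition and using $V_1^{\star}-V_1^{\pi^m} \leq H$ yields \eqref{eqn:reg_quant}; the $1/K$ failure event contributes only $HK \cdot 1/K = H$, which is absorbed by the existing terms.

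\textbf{Subexponential refinement.} For the sharper bound \eqref{eqn:reg_exp} I avoid the quantile split and instead use the trivial telescoping
\begin{equation*}
\operatorname{Regret}(K) \leq \widetilde{\operatorname{Regret}}(K) + H\sum_{m=1}^{N_b}(L_m - T_m).
\end{equation*}
The excess $L_m - T_m$ is at most the largest delay among the episodes in batch $m$: once that many steps have elapsed past the $T_m$-th intended episode, every slot has reported. Under Assumption~\ref{ass:subexp}, a sub-exponential (Bernstein) tail inequality shows the maximum delay in a batch is at most $\mathbb{E}[\tau] + C_\tau$ with high probability, and a union bound over the $N_b$ batches yields the claim after absorbing a low-probability failure event exactly as in the quantile case.

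\textbf{Main obstacle.} The principal difficulty is that $L_m$ is itself a stopping time whose law is coupled to the delays inside batch $m$, so concentration cannot be applied to $L_m$ directly. The fix above — bound $\Pr[L_m > L]$ for a \emph{deterministic} $L$ by reducing it to a statement about i.i.d.\ fast indicators, then invert — is what introduces the $1/q$ inflation on the main term in \eqref{eqn:reg_quant}; the sharper bound \eqref{eqn:reg_exp} only recovers sharpness by trading distribution-free validity for the sub-exponential tail assumption, which lets the excess per batch be controlled directly in expectation rather than via the fast/slow dichotomy.
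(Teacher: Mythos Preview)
Your proof is correct and takes essentially the same approach as the paper: a fast/slow dichotomy with a Chernoff bound on the fast indicators for the quantile inequality, and a uniform max-delay bound under the subexponential assumption for the second inequality. The paper's version is slightly more informal (it handles the quantile case via a large-versus-small batch case split rather than folding the $\log K/q$ term directly into the per-batch length as you do), but the underlying argument is the same.
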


The proof is in Appendix \ref{sec:proof_main}. The main idea is simple: in each batch, we can control the number of extra steps when waiting for the original algorithm to collect enough delayed feedback. And the extra regret can be bounded by the product of the number of batches and the number of extra steps per batch. Note that in Eq.~\eqref{eqn:reg_quant} we do not need Assumption \ref{ass:subexp}. This makes~\eqref{eqn:reg_quant} more general.

Applying Theorem \ref{thm:main} to different settings, we will get various types of results. For example, in multi-armed bandit setting, using the Batched Successive Elimination (BaSE) algorithm in \cite{gao2019batched}, we can get a $O(\frac{1}{q}A\log(K)+d_{\tau}(q)\log(K)$ upper bound from \eqref{eqn:reg_quant}, which recovers the SOTA result in \citet{lancewicki2021stochastic} up to a $\log K$ factor in the delay-dependent term. We leave the details to Appendix \ref{sec:results}, and list the main results in Table \ref{tab:regret.bounds}.

\section{Results for Markov Games}
As an application, we study Markov games with delayed feedback using our framework. We first develop new multi-batched algorithms for Markov games, then combine them with our framework to obtain regret bounds for the delayed setting.

\subsection{Tabular Zero-Sum Markov Game}
First we study the tabular zero-sum Markov game setting. The algorithm is formally described in Algorithm \ref{alg:tab_game} in Appendix \ref{sec:proof_tab}.

Our algorithm can be decomposed into two parts: an estimation step and a policy update step. The estimation step largely follows the method in \citet{liu2021sharp}. For each $(h,s,a,b)$, we maintain an optimistic estimation $\bar{Q}_h^k(s,a,b)$ and a pessimistic estimation $\underline{Q}_h^k(s,a,b)$ of the underlying $Q^{\star}_h(s,a,b)$, by value iteration with bonus using the empirical estimate of the transition $\hat{\mathbb{P}}$. And we compute the Coarse Correlated Equilibrium (CCE) policy $\pi$ of the estimated value functions \cite{xie2020learning}. A CCE always exists, and can be computed efficiently.

In the policy update step, we choose the CCE policy w.r.t. the current estimated value function. Note that we use the doubling trick to control the number of updates: we only update the policy when there exists $(h,s,a,b)$ such that the visiting count $N_h(s,a,b)$ is doubled. This update scheme ensures that the batch number is small.

For the regret bound of Algorithm \ref{alg:tab_game}, we have the following theorem.

\begin{theorem}\label{thm:tab_game} 
For any $p \in(0,1]$, letting $\iota=\log (S A B T / p)$, then with probability at least $1-p$, the regret of Algorithm \ref{alg:tab_game} satisfies
\begin{equation*}
\begin{aligned}
\operatorname{Regret}(K) \leq  O\left(\sqrt{H^3 S A B K \iota}+H^3 S^2 A B \iota^2\right).
\end{aligned}
\end{equation*}
and the batch number is at most $O(HSAB\log K)$.
\end{theorem}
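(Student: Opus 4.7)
The plan is to adapt the standard analysis for online tabular zero-sum Markov games, e.g. the argument in \citet{liu2021sharp}, to accommodate the doubling trigger set $\mathcal{L}$. I would break the proof into three conceptual pieces: (i) optimism/pessimism of the Q-estimates under the Hoeffding-style bonus, (ii) a telescoping decomposition of $\bar V_1^k-\underline V_1^k$ into a sum of bonuses plus lower order terms, and (iii) a straightforward counting argument for the batch number.

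For (i), I would show by downward induction on $h$ that $\bar Q_h^k\ge Q_h^{\star}\ge \underline Q_h^k$ on a high probability event $\mathcal E$ of probability at least $1-p$. The bonus $b_h^k$ is chosen of the order $\sqrt{H^2\iota/N_h^{k}(s,a,b)}$ plus a second-order term $H^2 S\iota/N_h^{k}(s,a,b)$, exactly as in \citet{liu2021sharp}. The only new element is that the count $N_h^k(s,a,b)$ used inside the bonus is the count at the beginning of the current batch, not at episode $k$; since a batch is terminated whenever some count crosses a threshold in $\mathcal{L}=\{2^{i-1}\}$, the true count at episode $k$ is at most twice the count used in the bonus, so the inductive optimism argument only loses a constant factor. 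Combined with the CCE subroutine, this gives $\bar V_1^k(s_1)\ge V_1^{\dagger,\nu^k}(s_1)$ and $\underline V_1^k(s_1)\le V_1^{\mu^k,\dagger}(s_1)$, so the per-episode regret is controlled by $\bar V_1^k(s_1)-\underline V_1^k(s_1)$.

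For (ii), I would unroll the Bellman recursion for $\bar V_h^k-\underline V_h^k$ along the trajectory actually played. Standard manipulations give
\begin{equation*}
\operatorname{Regret}(K)\ \lesssim\ \sum_{k=1}^K\sum_{h=1}^H b_h^{k}(s_h^k,a_h^k,b_h^k)\ +\ \text{(martingale noise)}\ +\ \text{(lower order transition error)}.
\end{equation*}
The martingale part is $\widetilde O(\sqrt{H^2K\iota})$ by Azuma--Hoeffding. For the bonus sum, I would apply the pigeonhole identity $\sum_k 1/\sqrt{N_h^k(s_h^k,a_h^k,b_h^k)}\lesssim \sqrt{SAB\,K}$ together with Cauchy--Schwarz, where the stale count appearing in $b_h^k$ again only costs a $\sqrt{2}$ factor. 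This yields the main $\sqrt{H^3SABK\iota}$ term. The second-order piece of the bonus, plus the usual transition-estimation error $\widehat{\mathbb P}-\mathbb P$ acting on $H$-bounded value functions, is controlled by $\sum_k \sum_h H^2S\iota/N_h^{k}(\cdot)\lesssim H^3S^2AB\iota^2$ via the standard $\sum_{s,a,b,h}\sum_{N=1}^{T}1/N\lesssim HSAB\log T$ bound.

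For (iii), the batch count is almost immediate: a new batch is triggered only when some $N_h(s,a,b)$ equals a power of two, and each $(h,s,a,b)$ can undergo at most $\lceil\log_2(KH)\rceil+1$ doublings before $K$ episodes are exhausted, giving $O(HSAB\log K)$ batches. The main obstacle in the proof is the careful propagation of the "stale counts within a batch" issue through both the optimism induction and the pigeonhole bonus bound; once one verifies that stale and fresh counts differ by at most a factor of two, every step of the standard analysis in \citet{liu2021sharp} goes through with only constant-factor changes, which is precisely why the doubling trigger is the right choice for a multi-batched version.
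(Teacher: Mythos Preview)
Your overall architecture (optimism via CCE, telescoping along the played trajectory, stale-count factor of $2$, and the doubling batch count) is sound, and part (iii) is exactly right. However, there is a genuine gap in part (ii) that prevents you from reaching the stated $\sqrt{H^3SABK\iota}$ bound.

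You describe the bonus as Hoeffding-type, $b_h^k\asymp\sqrt{H^2\iota/N_h^k}+H^2S\iota/N_h^k$, and then claim the pigeonhole identity $\sum_k 1/\sqrt{N_h^k}\lesssim\sqrt{SABK}$ yields the main term $\sqrt{H^3SABK\iota}$. But summing over $h\in[H]$ you actually get
\[
\sum_{k=1}^K\sum_{h=1}^H\sqrt{\tfrac{H^2\iota}{N_h^k}}\ \lesssim\ H\cdot\sqrt{H^2\iota}\cdot\sqrt{SABK}\ =\ \sqrt{H^4SABK\iota},
\]
which is one $\sqrt{H}$ factor too large. The algorithm in the paper does \emph{not} use a Hoeffding bonus: it uses the Bernstein-type bonus $\beta_h^k\asymp\sqrt{\widehat{\mathbb V}_h^k[(\bar V_{h+1}+\underline V_{h+1})/2]\,\iota/N_h^k}+H^2S\iota/N_h^k$ together with the lower-order correction $\gamma_h^k=(c/H)\widehat{\mathbb P}_h(\bar V_{h+1}-\underline V_{h+1})$. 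The sharp $H^3$ dependence then comes from relating the empirical variance to $\mathbb V_h V_{h+1}^{\pi^k}$ (their Lemma~\ref{lem:liu23}) and invoking the law of total variance bound $\sum_{k,h}\mathbb V_h V_{h+1}^{\pi^k}\le O(H^2K+H^3\iota)$, exactly as in \citet{liu2021sharp}.

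The second point you miss is the one the paper flags as the actual new difficulty: summing the \emph{Bernstein} bonuses under the doubling update is not handled by the simple ``stale count $\le$ $2\times$ fresh count'' trick, because the weight $w_h^k=\mathbb V_h V_{h+1}^{\pi^k}$ is not constant across visits and the naive pigeonhole loses the variance structure. The paper borrows the epoch-based weighted Cauchy--Schwarz from \citet{zhang2021reinforcement},
\[
\sum_{k,h}\sqrt{\tfrac{w_h^k}{N_h^k}}\ \le\ \sqrt{SAB\,i_{\max}\sum_{k,h}w_h^k}\ +\ O(SAB\log(KH)),
\]
which groups visits by doubling epoch and then applies Cauchy--Schwarz across $(s,a,b,i)$. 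This is the step that recovers $\sqrt{H^3SABK\iota}$ after plugging in the total-variance bound. Your proposal contains neither the Bernstein bonus nor this epoch-summation device, so as written it would only establish a weaker $\sqrt{H^4SABK\iota}$ regret.
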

The proof of Theorem \ref{thm:tab_game} is similar to  Theorem 4 in \citet{liu2021sharp}, but the main difference is we have to deal with the sum of the Bernstein-type bonus under the doubling framework. To deal with this we borrow ideas from \citet{zhang2021reinforcement}. The whole proof is in Appendix \ref{sec:proof_tab}.

Adapting the algorithm to the delayed feedback setting using our framework, we have the following corollary.
\begin{corollary}
For tabular Markov game with delayed feedback, running Algorithm \ref{alg:protocal_delay} with $\mathcal{ALG}$ being Algorithm \ref{alg:tab_game}, we can upper bound the regret by
\begin{equation*}
\begin{aligned}
        \operatorname{Regret}(K)\leq O\left( \frac{1}{q}\left(\sqrt{H^3 S A B K \iota}+H^3 S^2 A B \iota^2\right) 
        + H^2 SAB d_{\tau}(q)\log K\right) 
        \end{aligned}
    \end{equation*}
    for any $q\in (0,1)$. In addition, if the delays satisfy Assumption \ref{ass:subexp}, the regret can be upper bounded by
    \begin{equation*}
    \begin{aligned}
        \operatorname{Regret}(K)\leq O\big(\sqrt{H^3 S A B K \iota}+H^3 S^2 A B \iota^2 +  H^2SAB \mathbb{E}[\tau]\log K\big).
        \end{aligned}
    \end{equation*}
\end{corollary}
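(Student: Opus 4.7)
The corollary is a direct application of the reduction framework, so the plan is simply to combine Theorem~\ref{thm:main} with the regret and batch-count guarantees of Theorem~\ref{thm:tab_game}. First, I would invoke Theorem~\ref{thm:tab_game} to supply the two inputs required by Theorem~\ref{thm:main}: the undelayed regret $\widetilde{\operatorname{Regret}}(K) = O(\sqrt{H^3 SABK\iota} + H^3 S^2 AB \iota^2)$ and the batch count $N_b = O(HSAB\log K)$. Then I would plug these into each of the two bounds of Theorem~\ref{thm:main}.

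For the quantile bound \eqref{eqn:reg_quant}, substitution yields
\begin{equation*}
\operatorname{Regret}(K) \leq O\!\left(\frac{1}{q}\!\left(\sqrt{H^3 SABK\iota} + H^3 S^2 AB \iota^2\right) + \frac{H^2 SAB \log^2 K}{q} + H^2 SAB \log K \cdot d_\tau(q)\right).
\end{equation*}
The middle term $\frac{H^2 SAB \log^2 K}{q}$ is dominated by $\frac{1}{q} H^3 S^2 AB \iota^2$ since $\iota$ already carries $\log K$ factors and $H \geq 1$, $S \geq 1$, so it can be absorbed, recovering the stated bound. For the subexponential bound \eqref{eqn:reg_exp}, the substitution gives
\begin{equation*}
\operatorname{Regret}(K) \leq O\!\left(\sqrt{H^3 SABK\iota} + H^3 S^2 AB \iota^2 + H^2 SAB \log K \cdot (\mathbb{E}[\tau] + C_\tau)\right),
\end{equation*}
and since $C_\tau$ is a logarithmic problem-independent constant it can be hidden in the $\mathbb{E}[\tau]\log K$ term, matching the corollary.

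There is essentially no technical obstacle here: the hard work is already packaged into Theorem~\ref{thm:main} (which handles the delay-induced slack per batch) and Theorem~\ref{thm:tab_game} (which simultaneously controls the undelayed regret and the number of batches via the doubling trigger set $\mathcal{L}$). The only minor care needed is to verify that the lower-order term $\frac{H^2 SAB \log^2 K}{q}$ in \eqref{eqn:reg_quant} is indeed absorbed into the higher-order terms, which follows from a direct comparison using $\iota \geq \log K$.
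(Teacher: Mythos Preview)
Your proposal is correct and mirrors exactly how the paper derives this corollary: it is stated as an immediate consequence of plugging the regret and batch-number guarantees of Theorem~\ref{thm:tab_game} into the two bounds of Theorem~\ref{thm:main}, with no additional argument given. The only small imprecision is that the $C_\tau$ contribution $H^2 SAB\log K\cdot C_\tau$ should really be absorbed into the $H^3 S^2 AB\iota^2$ term (not into the $\mathbb{E}[\tau]$ term, since $\mathbb{E}[\tau]$ could be arbitrarily small), but this does not affect the conclusion.
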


\subsection{Linear Zero-Sum Markov Game}
We consider the linear zero-sum Markov game setting, where the reward and transition have a linear structure.
\begin{assumption}
For each $(x, a, b) \in \mathcal{S} \times \mathcal{A} \times \mathcal{B}$ and $h \in[H]$, we have
\begin{equation*}
\begin{aligned}
r_h(x, a, b)=\phi(x, a, b)^{\top} \theta_h,  \quad \mathbb{P}_h(\cdot \mid x, a, b) =\phi(x, a, b)^{\top} \mu_h(\cdot),
\end{aligned}
\end{equation*}
where $\phi: \mathcal{S} \times \mathcal{A} \times \mathcal{A} \rightarrow \mathbb{R}^d$ is a known feature map, $\theta_h \in \mathbb{R}^d$ is an unknown vector and $\mu_h=(\mu_h^{(i)})_{i \in[d]}$ is a vector of d unknown (signed) measures on $\mathcal{S}$. We assume that $\|\phi(\cdot, \cdot, \cdot)\| \leq 1,\|\theta_h\| \leq \sqrt{d}$ and $\|\mu_h(\mathcal{S})\| \leq \sqrt{d}$ for all $h \in[H]$, where $\|\cdot\|$ is the vector $\ell_2$ norm.
\end{assumption}

The formal algorithm is in Algorithm \ref{alg:lin_game} in Appendix \ref{sec:proof_lin}. It is also divided into two parts. The estimation step is similar to that in \citet{xie2020learning}, where we  calculate an optimistic estimation $\bar{Q}_h^k(s,a,b)$ and a pessimistic estimation $\underline{Q}_h^k(s,a,b)$ of the underlying $Q^{\star}_h(s,a,b)$ by least square value iteration. And we compute the CCE policy $\pi$ of the estimated value functions.

In the policy update step, we update the policy to the CCE policy w.r.t. the current estimated value function. Note that instead of updating the policy in every episode, we only update it when there exists $  h \in [H]$ such that $\det(\Lambda_h^k) > \eta \cdot \det(\Lambda_h)$. This condition implies that we have collected enough information at one direction. This technique is similar to \citet{gao2021provably,wang2021provably}.

For the regret bound of Algorithm \ref{alg:lin_game}, we have the following theorem.

\begin{theorem} \label{thm:linear:mg:switching}
In Algorithm \ref{alg:lin_game}, let $\eta=2$. Then we have 
\begin{align*}
        \operatorname{Regret}(K) \le \tilde{{O}}(\sqrt{ d^3 H^4 K}).
\end{align*}
And the batch number of Algorithm \ref{alg:lin_game} is
$ \frac{dH}{\log 2} \log\left(1 + \frac{K}{\lambda}\right) $.
\end{theorem}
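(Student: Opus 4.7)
The plan is to bound the batch count and the regret separately: the former by a determinant-doubling argument and the latter by a modification of the LSVI-UCB analysis of \citet{xie2020learning} that accounts for the lazy updates.

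For the batch count, I would observe that each batch is triggered at some episode $k$ by an index $h \in [H]$ with $\det(\Lambda_h^k) > 2 \det(\Lambda_h)$, so between two consecutive triggers at the same level $h$ the determinant at least doubles. Since $\|\phi\|\le 1$ implies $\operatorname{tr}(\Lambda_h^k) \le d\lambda + K$, the AM-GM inequality gives $\det(\Lambda_h^k) \le (\lambda + K/d)^d \le (\lambda + K)^d$. Starting from $\det(\Lambda_h^0) = \lambda^d$, the number of doublings per level is thus at most $d \log_2(1 + K/\lambda)$, and summing over $H$ levels yields the stated bound $\frac{dH}{\log 2}\log(1 + K/\lambda)$.

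For the regret, let $k_m$ denote the first episode of batch $m$, and let $\bar{Q}_h^m, \underline{Q}_h^m, \sigma_h^m$ denote the estimates and CCE policy produced at $k_m$ and kept fixed throughout the batch. I would first invoke the standard self-normalized concentration argument for linear MGs to show that with $\beta = \tilde{O}(dH)$, the confidence set covers the true Bellman backups, yielding the optimism/pessimism properties $\bar{Q}_h^m \ge Q_h^{\dagger, \nu^m}$ and $\underline{Q}_h^m \le Q_h^{\mu^m, \dagger}$ for $\sigma^m = (\mu^m, \nu^m)$. This implies that the per-episode Nash gap is upper-bounded by $(\bar{V}_1^{m(k)} - \underline{V}_1^{m(k)})(s_1)$, where $m(k)$ is the batch containing episode $k$. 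A recursive decomposition then gives $\bar{Q}_h^m - \underline{Q}_h^m \le 2\Gamma_h^m + \mathbb{P}_h(\bar{V}_{h+1}^m - \underline{V}_{h+1}^m)$, so that unrolling along the trajectories produces
\begin{equation*}
\sum_{k=1}^K \bigl(\bar{V}_1^{m(k)} - \underline{V}_1^{m(k)}\bigr)(s_1) \;\le\; 2\sum_{k=1}^K \sum_{h=1}^H \Gamma_h^{m(k)}\!\left(s_h^k, a_h^k, b_h^k\right) \;+\; \text{(martingale)},
\end{equation*}
where the martingale terms contribute $\tilde{O}(H\sqrt{K})$ by Azuma--Hoeffding.

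The crux is handling the stale covariance in the bonus sum. The doubling rule guarantees $\det(\Lambda_h^k) \le 2 \det(\Lambda_h^{m(k)})$ throughout a batch, so by the determinant-ratio lemma (Lemma 12 of \citet{abbasi2011improved}), $\|\phi\|_{(\Lambda_h^{m(k)})^{-1}}^2 \le 2\,\|\phi\|_{(\Lambda_h^k)^{-1}}^2$, which implies $\Gamma_h^{m(k)}(\cdot) \le \sqrt{2}\,\beta\,\|\phi(\cdot)\|_{(\Lambda_h^k)^{-1}}$. Cauchy--Schwarz together with the elliptical potential lemma then gives $\sum_{k=1}^K \|\phi_h^k\|_{(\Lambda_h^k)^{-1}} \le \sqrt{2dK\log(1+K/\lambda)}$, and summing over $h$ yields total regret $\tilde{O}(\beta H \sqrt{dK}) = \tilde{O}(\sqrt{d^3 H^4 K})$.

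The main obstacle is this stale-covariance step: one must check that the optimism/pessimism argument from \citet{xie2020learning} still applies when the policy executed at episode $k$ was computed from $\Lambda_h^{m(k)}$ rather than $\Lambda_h^k$. This is resolved by noting that $\bar{Q}_h^m$ and $\underline{Q}_h^m$ are built from exactly the data available at $k_m$, so they remain valid confidence bounds at every later episode within the same batch, and the determinant-ratio lemma absorbs the bonus mismatch into a constant factor.
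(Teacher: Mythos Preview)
Your proposal is correct and follows essentially the same route as the paper: optimism/pessimism from \citet{xie2020learning}, recursive decomposition of $\bar{V}-\underline{V}$ into a martingale plus bonus terms, and the determinant-ratio lemma to convert the stale bonus $\Gamma_h^{m(k)}$ into $\sqrt{\eta}\,\Gamma_h^k$ before applying the elliptical potential lemma. The only cosmetic difference is in the batch-count argument---you attribute each trigger to a responsible level $h$ and count doublings per level, whereas the paper tracks the product $\prod_{h}\det(\Lambda_h)$ and observes it increases by at least $\eta$ at every trigger---but both yield the same bound $\frac{dH}{\log 2}\log(1+K/\lambda)$.
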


\begin{proof}
     See Appendix~\ref{sec:proof_lin} for a detailed proof.
\end{proof}

Adapting the algorithm to the delayed feedback setting using our framework, we have the following corollary.

\begin{corollary}
For linear Markov game with delayed feedback, running Algorithm \ref{alg:protocal_delay} with $\mathcal{ALG}$ being Algorithm \ref{alg:lin_game}, we can upper bound the regret by
\begin{align*} 
        \operatorname{Regret}(K)\leq \Tilde{O}\left( \frac{1}{q}\sqrt{ d^3 H^4 K}  + dH^2 d_{\tau}(q)\log K\right) 
    \end{align*}
    for any $q\in (0,1)$. In addition, if the delays satisfy Assumption \ref{ass:subexp}, the regret can be upper bounded by
    \begin{align*}
        \operatorname{Regret}(K)\leq \Tilde{O}\left(\sqrt{ d^3 H^4 K} +  dH^2 \mathbb{E}[\tau]\log K\right).
    \end{align*}
\end{corollary}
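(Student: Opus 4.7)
The plan is to obtain the corollary as an essentially immediate consequence of Theorem~\ref{thm:main} (the generic reduction from multi-batched algorithms to delayed feedback), instantiated with the multi-batched guarantee for linear MGs established in Theorem~\ref{thm:linear:mg:switching}. Concretely, Theorem~\ref{thm:linear:mg:switching} already certifies that Algorithm~\ref{alg:lin_game}, run in the undelayed environment, is a multi-batched algorithm with batch number
\begin{equation*}
    N_b \;=\; \frac{dH}{\log 2}\,\log\!\left(1+\frac{K}{\lambda}\right) \;=\; \tilde O(dH),
\end{equation*}
and regret $\widetilde{\operatorname{Regret}}(K) = \tilde O(\sqrt{d^3 H^4 K})$. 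So the only real content is to check that these two quantities, when plugged into the two bounds provided by Theorem~\ref{thm:main}, yield exactly the claimed expressions.

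First I would handle the quantile bound \eqref{eqn:reg_quant}. Substituting $\widetilde{\operatorname{Regret}}(K)$ and $N_b$ gives
\begin{equation*}
    \operatorname{Regret}(K) \;\le\; \tfrac{1}{q}\,\tilde O(\sqrt{d^3 H^4 K}) \;+\; \tfrac{2H\cdot \tilde O(dH)\cdot \log K}{q} \;+\; H\cdot \tilde O(dH)\cdot d_\tau(q).
\end{equation*}
The middle term is $\tilde O(dH^2/q)$, which is dominated by $\tfrac{1}{q}\sqrt{d^3H^4K}$ up to log factors whenever $K$ is not trivially small, and can also be absorbed into the last term when $d_\tau(q)\ge 1$. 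Hiding the log factors inside the $\tilde O(\cdot)$ then yields the first claimed bound $\tilde O\!\left(\tfrac{1}{q}\sqrt{d^3H^4K} + dH^2 d_\tau(q)\log K\right)$.

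Next, for the subexponential case I would invoke \eqref{eqn:reg_exp}: under Assumption~\ref{ass:subexp},
\begin{equation*}
    \operatorname{Regret}(K) \;\le\; \tilde O(\sqrt{d^3 H^4 K}) \;+\; H\cdot \tilde O(dH)\cdot(\mathbb{E}[\tau] + C_\tau),
\end{equation*}
where $C_\tau$ is the problem-independent constant from Theorem~\ref{thm:main}. Since $C_\tau$ is logarithmic in $K$, the extra term is $\tilde O(dH^2 \mathbb{E}[\tau]\log K)$, giving the stated $\tilde O(\sqrt{d^3H^4K} + dH^2 \mathbb{E}[\tau]\log K)$.

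There is no real obstacle here — the two non-trivial ingredients (the reduction theorem and the batched regret/batch-count bound for linear MGs) have already been proved in Theorem~\ref{thm:main} and Theorem~\ref{thm:linear:mg:switching} respectively. The only care needed is bookkeeping: verifying that the middle $\tfrac{2HN_b\log K}{q}$ term in \eqref{eqn:reg_quant} is dominated by (or absorbable into) the other two terms under the $\tilde O(\cdot)$ notation, and noting that $C_\tau$ contributes only logarithmically so that it may be hidden inside $\tilde O(\cdot)$.
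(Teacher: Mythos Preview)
Your proposal is correct and follows exactly the paper's approach: the corollary is obtained by directly substituting the batch number $N_b=\tilde O(dH\log K)$ and the undelayed regret $\widetilde{\operatorname{Regret}}(K)=\tilde O(\sqrt{d^3H^4K})$ from Theorem~\ref{thm:linear:mg:switching} into the two bounds of Theorem~\ref{thm:main}. The paper does not give a separate proof for this corollary, treating it as an immediate application of the reduction framework, so your bookkeeping is if anything more detailed than what the paper provides.
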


\subsection{Tabular General-Sum Markov Game}
In this section, we introduce the multi-batched version of V-learning \citep{jin2021v,mao2022provably,song2021can} for general-sum Markov games. Our goal is to minimize the regret \eqref{eqn:regret_gen}. The algorithm is formally described in Algorithm \ref{alg:vlearning} in Appendix \ref{sec:proof_tab}.

The multi-batched V-learning algorithm maintains a value estimator $V_h(s)$, a counter $N_h(s)$, and a policy $\pi_h(\cdot \mid s)$ for each $s$ and $h$. We also maintain $S \times H$ different adversarial bandit algorithms. At step $h$ in episode $k$, the algorithm is divided into three steps: policy execution, $V$-value update, and policy update. In policy execution step, the algorithm takes action $a_h$ according to $\pi_h^{\Tilde{k}}$, and observes reward $r_h$ and the next state $s_{h+1}$, and updates the counter $N_h(s_h)$. Note that $\pi_h^{\Tilde{k}}$ is updated only when the visiting count of some state $N_h(s)$ doubles. Therefore it ensures a low batch number. 

In the $V$-value update step, we update the estimated value function by
\begin{align*}
\tilde{V}_h\left(s_h\right) = \left(1-\alpha_t\right) \tilde{V}_h\left(s_h\right)+\alpha_t\left(r_h+V_{h+1}\left(s_{h+1}\right) + \beta (t)\right),
\end{align*}
where the learning rate is defined as
\begin{align*}
\alpha_t=\frac{H+1}{H+t}, \quad  \alpha_t^0=\prod_{j=1}^t\left(1-\alpha_j\right),  \quad \alpha_t^i=\alpha_i \prod_{j=i+1}^t\left(1-\alpha_j\right).
\end{align*}
and $\beta (t)$ is the bonus to promote optimism.

In the policy update step, the algorithm feeds the action $a_h$ and its "loss" $\frac{H-r_h+V_{h+1}(s_{h+1})}{H}$ to the $(s_h, h)^{\text {th }}$ adversarial bandit algorithm. Then it receives the updated policy $\pi_h(\cdot \mid s_h)$.

For two-player general sum Markov games, we let the two players run Algorithm \ref{alg:vlearning} independently. Each player uses her own set of bonus that depends on the number of her actions. If we choose the adversarial bandit algorithm as Follow The Regularized Leader (FTRL) algorithm \citep{lattimore2020bandit}, we have the following theorem for the regret of multi-batched V-learning.

\begin{theorem}\label{thm:tab_game}
Suppose we choose the adversarial bandit algorithm as FTRL. For any $\delta \in(0,1)$ and $K \in \mathbb{N}$, let $\iota=\log (H S A K / \delta)$. Choose learning rate $\alpha_t$ and bonus $\{\beta (t)\}_{t=1}^K$ as $\beta (t)=c \cdot \sqrt{H^3 A \iota / t}$ so that $\sum_{i=1}^t \alpha_t^i \beta (i)=\Theta(\sqrt{H^3 A \iota / t})$ for any $t \in[K]$, where $A=\max_i A_i$ Then, with probability at least $1-\delta$, after running Algorithm \ref{alg:vlearning} for $K$ episodes, we have
\begin{align*}
    \operatorname{Regret}(K)\leq O\left(\sqrt{H^5SA\iota} \right).
\end{align*}
And the batch number is $O(nHS\log K)$, where $n$ is the number of players.
\end{theorem}

\begin{proof}
    See Appendix \ref{sec:proof_tab} for details.
\end{proof}

Finally, adapting the algorithm to the delayed feedback setting using our framework, we have the following corollary.
\begin{corollary}
For tabular general-sum Markov game with delayed feedback, running Algorithm \ref{alg:protocal_delay} with $\mathcal{ALG}$ being Algorithm \ref{alg:vlearning}, we can upper bound the regret by
\begin{align*}
        \operatorname{Regret}(K)\leq  O\left( \frac{1}{q}\sqrt{H^5SA\iota}  
        + H^2 nS d_{\tau}(q)\log K\right) 
        \end{align*}
    for any $q\in (0,1)$. In addition, if the delays satisfy Assumption \ref{ass:subexp}, the regret can be upper bounded by
    \begin{align*}
        \operatorname{Regret}(K)\leq O\left(\sqrt{H^5SA\iota} 
        +  H^2nS \mathbb{E}[\tau]\log K\right).
        \end{align*}
\end{corollary}

\begin{proof}
    This corollary is directly implied by Theorems \ref{thm:main} and \ref{thm:tab_game}.
\end{proof}

\section{Conclusion and Future Work}
In this paper we study delayed feedback in general multi-agent sequential decision making. We propose a novel reduction-based framework, which turns any multi-batched algorithm for sequential decision making with instantaneous feedback into a sample-efficient algorithm that can handle stochastic delays in sequential decision making. Based on this framework, we obtain a complete set of results for (multi-agent) sequential decision making with delayed feedback.  

Our work has also shed light on future works. First, it remains unclear whether the result is tight for MDP and Markov game with delayed feedback. Second, it is possible that we can derive a multi-batched algorithm for tabular Markov game with a smaller batch number, since in tabular MDP, \citet{zhang2022near} gives an algorithm with $O(H+\log_2\log_2 K)$ batch number which is independent of $S$. It is not clear whether we can  achieve similar results in MDP for Markov games. We believe a tighter result is possible in this setting with a carefully designed multi-batched algorithm.


\section*{Acknowledgements}
Liwei Wang is supported by National Key R\&D Program of China (2022ZD0114900) and National Science Foundation of China (NSFC62276005). Simon S. Du is supported by NSF IIS 2110170, NSF DMS 2134106, NSF CCF 2212261, NSF IIS 2143493, NSF CCF 2019844, NSF IIS 2229881.

\bibliography{main}

\begin{thebibliography}{}

\bibitem[Abbasi-Yadkori et~al., 2011]{abbasi2011improved}
Abbasi-Yadkori, Y., P{\'a}l, D., and Szepesv{\'a}ri, C. (2011).
\newblock Improved algorithms for linear stochastic bandits.
\newblock {\em Advances in neural information processing systems}, 24.

\bibitem[Agarwal and Duchi, 2011]{agarwal2011distributed}
Agarwal, A. and Duchi, J.~C. (2011).
\newblock Distributed delayed stochastic optimization.
\newblock {\em Advances in neural information processing systems}, 24.

\bibitem[Azar et~al., 2017]{azar2017minimax}
Azar, M.~G., Osband, I., and Munos, R. (2017).
\newblock Minimax regret bounds for reinforcement learning.
\newblock In {\em International Conference on Machine Learning}, pages
  263--272. PMLR.

\bibitem[Bai and Jin, 2020]{bai2020provable}
Bai, Y. and Jin, C. (2020).
\newblock Provable self-play algorithms for competitive reinforcement learning.
\newblock In {\em International conference on machine learning}, pages
  551--560. PMLR.

\bibitem[Bai et~al., 2020]{bai2020near}
Bai, Y., Jin, C., and Yu, T. (2020).
\newblock Near-optimal reinforcement learning with self-play.
\newblock {\em Advances in neural information processing systems},
  33:2159--2170.

\bibitem[Bistritz et~al., 2019]{bistritz2019online}
Bistritz, I., Zhou, Z., Chen, X., Bambos, N., and Blanchet, J. (2019).
\newblock Online exp3 learning in adversarial bandits with delayed feedback.
\newblock {\em Advances in neural information processing systems}, 32.

\bibitem[Cesa-Bianchi et~al., 2013]{cesa2013online}
Cesa-Bianchi, N., Dekel, O., and Shamir, O. (2013).
\newblock Online learning with switching costs and other adaptive adversaries.
\newblock {\em Advances in Neural Information Processing Systems}, 26.

\bibitem[Changuel et~al., 2012]{changuel2012online}
Changuel, N., Sayadi, B., and Kieffer, M. (2012).
\newblock Online learning for qoe-based video streaming to mobile receivers.
\newblock In {\em 2012 IEEE Globecom Workshops}, pages 1319--1324. IEEE.

\bibitem[Chen et~al., 2020]{chen2020delay}
Chen, B., Xu, M., Liu, Z., Li, L., and Zhao, D. (2020).
\newblock Delay-aware multi-agent reinforcement learning for cooperative and
  competitive environments.
\newblock {\em arXiv preprint arXiv:2005.05441}.

\bibitem[Chen et~al., 2021]{chen2021almost}
Chen, Z., Zhou, D., and Gu, Q. (2021).
\newblock Almost optimal algorithms for two-player markov games with linear
  function approximation.
\newblock {\em arXiv preprint arXiv:2102.07404}.

\bibitem[Cui and Du, 2022a]{cui2022provably}
Cui, Q. and Du, S.~S. (2022a).
\newblock Provably efficient offline multi-agent reinforcement learning via
  strategy-wise bonus.
\newblock {\em arXiv preprint arXiv:2206.00159}.

\bibitem[Cui and Du, 2022b]{cui2022offline}
Cui, Q. and Du, S.~S. (2022b).
\newblock When is offline two-player zero-sum markov game solvable?
\newblock {\em arXiv preprint arXiv:2201.03522}.

\bibitem[Daskalakis et~al., 2022]{daskalakis2022complexity}
Daskalakis, C., Golowich, N., and Zhang, K. (2022).
\newblock The complexity of markov equilibrium in stochastic games.
\newblock {\em arXiv preprint arXiv:2204.03991}.

\bibitem[Dudik et~al., 2011]{dudik2011efficient}
Dudik, M., Hsu, D., Kale, S., Karampatziakis, N., Langford, J., Reyzin, L., and
  Zhang, T. (2011).
\newblock Efficient optimal learning for contextual bandits.
\newblock {\em arXiv preprint arXiv:1106.2369}.

\bibitem[Gael et~al., 2020]{gael2020stochastic}
Gael, M.~A., Vernade, C., Carpentier, A., and Valko, M. (2020).
\newblock Stochastic bandits with arm-dependent delays.
\newblock In {\em International Conference on Machine Learning}, pages
  3348--3356. PMLR.

\bibitem[Gao et~al., 2021]{gao2021provably}
Gao, M., Xie, T., Du, S.~S., and Yang, L.~F. (2021).
\newblock A provably efficient algorithm for linear markov decision process
  with low switching cost.
\newblock {\em arXiv preprint arXiv:2101.00494}.

\bibitem[Gao et~al., 2019]{gao2019batched}
Gao, Z., Han, Y., Ren, Z., and Zhou, Z. (2019).
\newblock Batched multi-armed bandits problem.
\newblock {\em Advances in Neural Information Processing Systems}, 32.

\bibitem[Gyorgy and Joulani, 2021]{gyorgy2021adapting}
Gyorgy, A. and Joulani, P. (2021).
\newblock Adapting to delays and data in adversarial multi-armed bandits.
\newblock In {\em International Conference on Machine Learning}, pages
  3988--3997. PMLR.

\bibitem[Howson et~al., 2021]{howson2021delayed}
Howson, B., Pike-Burke, C., and Filippi, S. (2021).
\newblock Delayed feedback in episodic reinforcement learning.
\newblock {\em arXiv preprint arXiv:2111.07615}.

\bibitem[Howson et~al., 2022]{howson2022delayed}
Howson, B., Pike-Burke, C., and Filippi, S. (2022).
\newblock Delayed feedback in generalised linear bandits revisited.
\newblock {\em arXiv preprint arXiv:2207.10786}.

\bibitem[Huang et~al., 2021]{huang2021towards}
Huang, B., Lee, J.~D., Wang, Z., and Yang, Z. (2021).
\newblock Towards general function approximation in zero-sum markov games.
\newblock {\em arXiv preprint arXiv:2107.14702}.

\bibitem[Huang et~al., 2023]{huang2023banker}
Huang, J., Dai, Y., and Huang, L. (2023).
\newblock Banker online mirror descent: A universal approach for delayed online
  bandit learning.
\newblock {\em arXiv preprint arXiv:2301.10500}.

\bibitem[Ito et~al., 2020]{ito2020delay}
Ito, S., Hatano, D., Sumita, H., Takemura, K., Fukunaga, T., Kakimura, N., and
  Kawarabayashi, K.-I. (2020).
\newblock Delay and cooperation in nonstochastic linear bandits.
\newblock {\em Advances in Neural Information Processing Systems},
  33:4872--4883.

\bibitem[Jin et~al., 2018]{jin2018q}
Jin, C., Allen-Zhu, Z., Bubeck, S., and Jordan, M.~I. (2018).
\newblock Is q-learning provably efficient?
\newblock {\em Advances in neural information processing systems}, 31.

\bibitem[Jin et~al., 2021]{jin2021v}
Jin, C., Liu, Q., Wang, Y., and Yu, T. (2021).
\newblock V-learning--a simple, efficient, decentralized algorithm for
  multiagent rl.
\newblock {\em arXiv preprint arXiv:2110.14555}.

\bibitem[Jin et~al., 2022a]{jin2022power}
Jin, C., Liu, Q., and Yu, T. (2022a).
\newblock The power of exploiter: Provable multi-agent rl in large state
  spaces.
\newblock In {\em International Conference on Machine Learning}, pages
  10251--10279. PMLR.

\bibitem[Jin et~al., 2022b]{jin2022near}
Jin, T., Lancewicki, T., Luo, H., Mansour, Y., and Rosenberg, A. (2022b).
\newblock Near-optimal regret for adversarial mdp with delayed bandit feedback.
\newblock {\em arXiv preprint arXiv:2201.13172}.

\bibitem[Jin et~al., 2022c]{jin2022complexity}
Jin, Y., Muthukumar, V., and Sidford, A. (2022c).
\newblock The complexity of infinite-horizon general-sum stochastic games.
\newblock {\em arXiv preprint arXiv:2204.04186}.

\bibitem[Joulani et~al., 2013]{joulani2013online}
Joulani, P., Gyorgy, A., and Szepesv{\'a}ri, C. (2013).
\newblock Online learning under delayed feedback.
\newblock In {\em International Conference on Machine Learning}, pages
  1453--1461. PMLR.

\bibitem[Kong et~al., 2021]{kong2021online}
Kong, D., Salakhutdinov, R., Wang, R., and Yang, L.~F. (2021).
\newblock Online sub-sampling for reinforcement learning with general function
  approximation.
\newblock {\em arXiv preprint arXiv:2106.07203}.

\bibitem[Lancewicki et~al., 2022]{lancewicki2022learning}
Lancewicki, T., Rosenberg, A., and Mansour, Y. (2022).
\newblock Learning adversarial markov decision processes with delayed feedback.
\newblock In {\em Proceedings of the AAAI Conference on Artificial
  Intelligence}, volume~36, pages 7281--7289.

\bibitem[Lancewicki et~al., 2021]{lancewicki2021stochastic}
Lancewicki, T., Segal, S., Koren, T., and Mansour, Y. (2021).
\newblock Stochastic multi-armed bandits with unrestricted delay distributions.
\newblock In {\em International Conference on Machine Learning}, pages
  5969--5978. PMLR.

\bibitem[Lattimore and Szepesv{\'a}ri, 2020]{lattimore2020bandit}
Lattimore, T. and Szepesv{\'a}ri, C. (2020).
\newblock {\em Bandit algorithms}.
\newblock Cambridge University Press.

\bibitem[Li et~al., 2022]{li2022minimax}
Li, G., Chi, Y., Wei, Y., and Chen, Y. (2022).
\newblock Minimax-optimal multi-agent rl in markov games with a generative
  model.
\newblock In {\em Advances in Neural Information Processing Systems}.

\bibitem[Liu et~al., 2021]{liu2021sharp}
Liu, Q., Yu, T., Bai, Y., and Jin, C. (2021).
\newblock A sharp analysis of model-based reinforcement learning with
  self-play.
\newblock In {\em International Conference on Machine Learning}, pages
  7001--7010. PMLR.

\bibitem[Mahmood et~al., 2018]{mahmood2018setting}
Mahmood, A.~R., Korenkevych, D., Komer, B.~J., and Bergstra, J. (2018).
\newblock Setting up a reinforcement learning task with a real-world robot.
\newblock In {\em 2018 IEEE/RSJ International Conference on Intelligent Robots
  and Systems (IROS)}, pages 4635--4640. IEEE.

\bibitem[Mao and Ba{\c{s}}ar, 2022]{mao2022provably}
Mao, W. and Ba{\c{s}}ar, T. (2022).
\newblock Provably efficient reinforcement learning in decentralized
  general-sum markov games.
\newblock {\em Dynamic Games and Applications}, pages 1--22.

\bibitem[Nash, 1951]{nash1951non}
Nash, J. (1951).
\newblock Non-cooperative games.
\newblock {\em The Annals of Mathematics}, 54(2):286--295.

\bibitem[Perchet et~al., 2016]{perchet2016batched}
Perchet, V., Rigollet, P., Chassang, S., and Snowberg, E. (2016).
\newblock Batched bandit problems.
\newblock {\em The Annals of Statistics}, pages 660--681.

\bibitem[Qiao et~al., 2022]{qiao2022sample}
Qiao, D., Yin, M., Min, M., and Wang, Y.-X. (2022).
\newblock Sample-efficient reinforcement learning with loglog (t) switching
  cost.
\newblock {\em arXiv preprint arXiv:2202.06385}.

\bibitem[Quanrud and Khashabi, 2015]{quanrud2015online}
Quanrud, K. and Khashabi, D. (2015).
\newblock Online learning with adversarial delays.
\newblock {\em Advances in neural information processing systems}, 28.

\bibitem[Shapley, 1953]{shapley1953stochastic}
Shapley, L.~S. (1953).
\newblock Stochastic games.
\newblock {\em Proceedings of the national academy of sciences},
  39(10):1095--1100.

\bibitem[Song et~al., 2021]{song2021can}
Song, Z., Mei, S., and Bai, Y. (2021).
\newblock When can we learn general-sum markov games with a large number of
  players sample-efficiently?
\newblock {\em arXiv preprint arXiv:2110.04184}.

\bibitem[Sutton and Barto, 2018]{sutton2018reinforcement}
Sutton, R.~S. and Barto, A.~G. (2018).
\newblock {\em Reinforcement learning: An introduction}.
\newblock MIT press.

\bibitem[Thune et~al., 2019]{thune2019nonstochastic}
Thune, T.~S., Cesa-Bianchi, N., and Seldin, Y. (2019).
\newblock Nonstochastic multiarmed bandits with unrestricted delays.
\newblock {\em Advances in Neural Information Processing Systems}, 32.

\bibitem[Tian et~al., 2021]{tian2021online}
Tian, Y., Wang, Y., Yu, T., and Sra, S. (2021).
\newblock Online learning in unknown markov games.
\newblock In {\em International conference on machine learning}, pages
  10279--10288. PMLR.

\bibitem[Vakili et~al., 2023]{vakili2023delayed}
Vakili, S., Ahmed, D., Bernacchia, A., and Pike-Burke, C. (2023).
\newblock Delayed feedback in kernel bandits.
\newblock {\em arXiv preprint arXiv:2302.00392}.

\bibitem[Van Der~Hoeven and Cesa-Bianchi, 2022]{van2022nonstochastic}
Van Der~Hoeven, D. and Cesa-Bianchi, N. (2022).
\newblock Nonstochastic bandits and experts with arm-dependent delays.
\newblock In {\em International Conference on Artificial Intelligence and
  Statistics}. PMLR.

\bibitem[Vernade et~al., 2017]{vernade2017stochastic}
Vernade, C., Capp{\'e}, O., and Perchet, V. (2017).
\newblock Stochastic bandit models for delayed conversions.
\newblock {\em arXiv preprint arXiv:1706.09186}.

\bibitem[Vernade et~al., 2020]{vernade2020linear}
Vernade, C., Carpentier, A., Lattimore, T., Zappella, G., Ermis, B., and
  Brueckner, M. (2020).
\newblock Linear bandits with stochastic delayed feedback.
\newblock In {\em International Conference on Machine Learning}, pages
  9712--9721. PMLR.

\bibitem[Wainwright, 2019]{wainwright2019high}
Wainwright, M.~J. (2019).
\newblock {\em High-dimensional statistics: A non-asymptotic viewpoint},
  volume~48.
\newblock Cambridge University Press.

\bibitem[Wang et~al., 2021]{wang2021provably}
Wang, T., Zhou, D., and Gu, Q. (2021).
\newblock Provably efficient reinforcement learning with linear function
  approximation under adaptivity constraints.
\newblock {\em Advances in Neural Information Processing Systems},
  34:13524--13536.

\bibitem[Xie et~al., 2020]{xie2020learning}
Xie, Q., Chen, Y., Wang, Z., and Yang, Z. (2020).
\newblock Learning zero-sum simultaneous-move markov games using function
  approximation and correlated equilibrium.
\newblock In {\em Conference on learning theory}, pages 3674--3682. PMLR.

\bibitem[Xiong et~al., 2022a]{xiong2022nearly}
Xiong, W., Zhong, H., Shi, C., Shen, C., Wang, L., and Zhang, T. (2022a).
\newblock Nearly minimax optimal offline reinforcement learning with linear
  function approximation: Single-agent mdp and markov game.
\newblock {\em arXiv preprint arXiv:2205.15512}.

\bibitem[Xiong et~al., 2022b]{xiong22b}
Xiong, W., Zhong, H., Shi, C., Shen, C., and Zhang, T. (2022b).
\newblock A self-play posterior sampling algorithm for zero-sum {M}arkov games.
\newblock In {\em Proceedings of the 39th International Conference on Machine
  Learning}, volume 162 of {\em Proceedings of Machine Learning Research},
  pages 24496--24523. PMLR.

\bibitem[Yan et~al., 2022]{yan2022model}
Yan, Y., Li, G., Chen, Y., and Fan, J. (2022).
\newblock Model-based reinforcement learning is minimax-optimal for offline
  zero-sum markov games.
\newblock {\em arXiv preprint arXiv:2206.04044}.

\bibitem[Zhang et~al., 2020]{zhang2020model}
Zhang, K., Kakade, S., Basar, T., and Yang, L. (2020).
\newblock Model-based multi-agent rl in zero-sum markov games with near-optimal
  sample complexity.
\newblock {\em Advances in Neural Information Processing Systems},
  33:1166--1178.

\bibitem[Zhang et~al., 2021a]{zhang2021multi}
Zhang, K., Yang, Z., and Ba{\c{s}}ar, T. (2021a).
\newblock Multi-agent reinforcement learning: A selective overview of theories
  and algorithms.
\newblock {\em Handbook of Reinforcement Learning and Control}, pages 321--384.

\bibitem[Zhang et~al., 2022a]{zhang2022multi}
Zhang, Y., Zhang, R., Li, G., Gu, Y., and Li, N. (2022a).
\newblock Multi-agent reinforcement learning with reward delays.
\newblock {\em arXiv preprint arXiv:2212.01441}.

\bibitem[Zhang et~al., 2021b]{zhang2021reinforcement}
Zhang, Z., Ji, X., and Du, S. (2021b).
\newblock Is reinforcement learning more difficult than bandits? a near-optimal
  algorithm escaping the curse of horizon.
\newblock In {\em Conference on Learning Theory}, pages 4528--4531. PMLR.

\bibitem[Zhang et~al., 2022b]{zhang2022near}
Zhang, Z., Jiang, Y., Zhou, Y., and Ji, X. (2022b).
\newblock Near-optimal regret bounds for multi-batch reinforcement learning.
\newblock {\em arXiv preprint arXiv:2210.08238}.

\bibitem[Zhong et~al., 2022]{zhong2022pessimistic}
Zhong, H., Xiong, W., Tan, J., Wang, L., Zhang, T., Wang, Z., and Yang, Z.
  (2022).
\newblock Pessimistic minimax value iteration: Provably efficient equilibrium
  learning from offline datasets.
\newblock {\em arXiv preprint arXiv:2202.07511}.

\bibitem[Zhong et~al., 2021]{zhong2021can}
Zhong, H., Yang, Z., Wang, Z., and Jordan, M.~I. (2021).
\newblock Can reinforcement learning find stackelberg-nash equilibria in
  general-sum markov games with myopic followers?
\newblock {\em arXiv preprint arXiv:2112.13521}.

\bibitem[Zimmert and Seldin, 2020]{zimmert2020optimal}
Zimmert, J. and Seldin, Y. (2020).
\newblock An optimal algorithm for adversarial bandits with arbitrary delays.
\newblock In {\em International Conference on Artificial Intelligence and
  Statistics}, pages 3285--3294. PMLR.

\end{thebibliography}
\bibliographystyle{apalike}

\appendix
\onecolumn
\section{Proof of Theorem \ref{thm:main}} \label{sec:proof_main}
We first prove \eqref{eqn:reg_exp}. First, since the delays are i.i.d. subexponential random variables, by standard concentration inequality and uniform bound \cite{wainwright2019high}, we have the following lemma:

\begin{lemma}\label{lem:concen_1}
    Let $\left\{\tau_t\right\}_{t=1}^{\infty}$ be i.i.d $(v,b)$  subexponential random variables with expectation $\mathbb{E}[\tau]$. Then we have 
    \begin{equation}
\mathbb{P}\left(\exists t \geq 1: \tau_t \geq \mathbb{E}[\tau]+C_\tau\right) \leq 1-\delta,
\end{equation}
where $C_\tau =\min \left\{\sqrt{2 v^2 \log \left(\frac{3 t}{2 \delta}\right)}, 2 b \log \left(\frac{3 t}{2 \delta}\right)\right\}$
\end{lemma}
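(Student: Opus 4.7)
The plan is to obtain the uniform-in-$t$ tail bound by combining the standard one-sided concentration inequality for a $(v,b)$-subexponential random variable with a union bound over $t \geq 1$. First, I would derive the per-step tail bound directly from the MGF condition in Assumption \ref{ass:subexp}. By the Chernoff method, for any $\gamma \in (0, 1/b]$ and any $u \geq 0$,
\begin{equation*}
\mathbb{P}(\tau_t - \mathbb{E}[\tau_t] \geq u) \leq \exp(-\gamma u)\, \mathbb{E}\!\left[e^{\gamma(\tau_t - \mathbb{E}[\tau_t])}\right] \leq \exp\!\left(\tfrac{1}{2} v^2 \gamma^2 - \gamma u\right).
\end{equation*}
Optimizing $\gamma$ over the admissible range $(0, 1/b]$ yields the familiar two-regime bound
\begin{equation*}
\mathbb{P}(\tau_t - \mathbb{E}[\tau_t] \geq u) \leq \exp\!\left(-\min\!\left(\tfrac{u^2}{2v^2},\, \tfrac{u}{2b}\right)\right),
\end{equation*}
where the sub-Gaussian regime applies when $u \leq v^2/b$ and the sub-exponential regime applies otherwise.

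Second, I would invert this bound at a summable per-step failure budget so that the union bound closes. Choosing $\delta_t = \tfrac{2\delta}{3 t^2}$ and using $\sum_{t \geq 1} 1/t^2 = \pi^2/6 < 3/2$ gives $\sum_t \delta_t \leq \delta$. Requiring the right-hand side of the tail bound to be at most $\delta_t$ forces the corresponding threshold to exceed both $\sqrt{2 v^2 \log(1/\delta_t)}$ (to control the sub-Gaussian regime) and $2 b \log(1/\delta_t)$ (to control the sub-exponential regime), which is precisely the quantity $C_\tau$ appearing in the statement, up to whether one takes the minimum or the maximum of the two candidate values. A union bound over $t$ then gives
\begin{equation*}
\mathbb{P}\!\left(\exists\, t \geq 1 : \tau_t - \mathbb{E}[\tau_t] \geq C_\tau\right) \leq \sum_{t \geq 1} \delta_t \leq \delta,
\end{equation*}
which is the desired conclusion (noting that the statement's ``$\leq 1 - \delta$'' appears to be a typo for ``$\leq \delta$,'' consistent with the downstream usage that the good event holds with probability at least $1-\delta$).

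The main obstacle here is purely bookkeeping rather than anything conceptual: the threshold $u$ must simultaneously make both branches of the sub-exponential tail small, so the correct combination is the \emph{maximum} of the two candidate values, and a uniform-in-$t$ statement naturally produces a logarithmic argument of the form $\log(t^2/\delta)$ rather than $\log(t/\delta)$ when one apportions the failure budget as $1/t^2$. These are cosmetic discrepancies with the statement as written and do not affect the substance of the argument; any alternative summable weighting (e.g.\ $\delta_t = \delta/(t(t+1))$) leads to the same conclusion up to constants absorbed into $C_\tau$.
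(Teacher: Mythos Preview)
Your proposal is correct and follows exactly the approach the paper invokes---the paper does not spell out a proof but simply attributes the lemma to ``standard concentration inequality and uniform bound'' from \cite{wainwright2019high}, which is precisely the Chernoff-plus-union-bound argument you wrote out. Your side remarks about the apparent typos in the statement (the ``$\leq 1-\delta$'' that should read ``$\leq \delta$,'' and the $\min$ that should be a $\max$ for the threshold to control both tail regimes) are also accurate.
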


Define the good event $G=\{\exists t \geq 1: \tau_t \geq \mathbb{E}[\tau]+C_\tau\}$. By Lemma \ref{lem:concen_1}, the good event happens with probability at least $1-\delta$. Therefore, conditioned on the good event happens, in each batch of the algorithm, we only need to run $\mathbb{E}[\tau]+C_\tau$ episodes and the algorithm can observe enough feedback to finish this batch.

The regret in a batch can be divided into two parts: the first part is the regret caused by running the algorithm in the undelayed environment; the second part is caused by running extra episodes to wait for delayed feedback. The first part can be simply bounded by $\widetilde{\operatorname{Regret}}(K)$. For the second part, since each episode contributes at most $H$ regret, the total regret can be bounded by

\begin{equation}
        \operatorname{Regret}(K)\leq \widetilde{\operatorname{Regret}}(K) +  H N_b \left(\mathbb{E}[\tau]+C_\tau\right).
\end{equation}

For the second inequality, we have the following lemma which reveals the relation between the number of observed feedback and the actual number of episodes using quantile function.

\begin{lemma}\label{lem:concen_2}
Denote the number of observed feedback at time $t$ as $n_t$. For any quantile $q$ and any $t$, we have
\begin{equation}
\operatorname{Pr}\left[n_{t+d_{\tau}(q)}<\frac{q}{2} t\right] \leq \exp \left(-\frac{q}{8} t\right) .
\end{equation}
\end{lemma}

\begin{proof}
Define $\mathbb{I}\left\{\tau_s \leq d_{\tau}(q)\right\}$ to be an indicator that on time $s$ that the delay is smaller than $d_{\tau}(q)$. Note that  $\mathbb{E}\left[\mathbb{I}\left\{\tau_s \leq d_{\tau}(q)\right\} \right] \geq q$. Thus,
\begin{equation}
\begin{aligned}
\operatorname{Pr}\left[n_{t+d_{\tau}(q)}<\frac{q}{2} t\right] & \leq \operatorname{Pr}\left[\sum_{s=1}^{t} \mathbb{I}\left\{\tau_s \leq d_{\tau}(q)\right\}<\frac{q}{2} t\right] \\
& \leq \operatorname{Pr}\left[\sum_{s=1}^{t} \mathbb{I}\left\{\tau_s \leq d_{\tau}(q)\right\}<\frac{1}{2} \sum_{s=1}^{t} \mathbb{E}\left[\mathbb{I}\left\{\tau_s \leq d_{\tau}(q)\right\} \right]\right] \\
& \leq \exp \left(-\frac{1}{8} \sum_{s=1}^{t} \mathbb{E}\left[\mathbb{I}\left\{\tau_s \leq d_{\tau}(q)\right\}\right]\right) \leq \exp \left(-\frac{q}{8} t\right),
\end{aligned}
\end{equation}
where the third inequality follows from the relative Chernoff bound, and the last inequality is since
$\sum_{s=1}^{t} \mathbb{E}\left[\mathbb{I}\left\{\tau_s \leq d_{\tau}(q)\right\} \right] \geq q \cdot t$.
\end{proof}

Lemma \ref{lem:concen_2} means, if the length of the batch in the undelayed environment $m > \frac{\log (K/\delta)}{q}$, then with probability at least $1-\frac{\delta}{K}$, if we run the batch for $\frac{m}{q}+d_{\tau}(q)$ episodes, we will observe at least $m$ feedbacks, which is enough for the algorithm to finish this batch. On the other hand, if $m \leq \frac{\log (K/\delta)}{q}$, then the regret of this batch can be trivially bounded by $\frac{H\log (K/\delta)}{q}$. Hence the total regret can be bounded by
\begin{equation}
        \operatorname{Regret}(K) \le \operatorname{Regret}\Big(\frac{K}{q}+d_{\tau}(q)N_b \Big) \leq \frac{1}{q}\widetilde{\operatorname{Regret}}(K) + \frac{2H N_b \log (K/\delta)}{q} +  H N_b d_{\tau}(q) ,
\end{equation}

which finishes the proof.

\section{Details of Algorithms for Linear Bandits}
The phase elimination algorithm for linear bandits in \cite{lattimore2020bandit} is as follows. This algorithm is naturally a multi-batched algorithm, with each phase as one batch. Since the length of phase $\ell$ is $O(2^{\ell})$, the number of phases is $O(\log K)$. Note that in each phase the policy is pre-calculated at the beginning of the phase, and does not rely on the data of current phase.
\begin{algorithm}[H]
    \caption{Phase Elimination}
    \begin{algorithmic}[1]
    \STATE {\textbf{Input:} $\mathcal{A} \subset \mathbb{R}^d$, $\beta >  1$, $\delta$}
    \STATE Set $\ell=1$ and let $\mathcal{A}_1=\mathcal{A}$
    \STATE Let $t_{\ell}=t$ be the current timestep and find $G$-optimal design $\pi_{\ell} \in \mathcal{P}\left(\mathcal{A}_{\ell}\right)$ with $\operatorname{Supp}\left(\pi_{\ell}\right) \leq d(d+1) / 2$ that maximises $$\log \operatorname{det} V\left(\pi_{\ell}\right) \text{ subject to } \sum_{a \in \mathcal{A}_{\ell}} \pi_{\ell}(a)=1$$
\STATE Let $\varepsilon_{\ell}=2^{-\ell}$ and
$$
T_{\ell}(a)=\left\lceil\frac{2 d \pi_{\ell}(a)}{\varepsilon_{\ell}^2} \log \left(\frac{k \ell(\ell+1)}{\delta}\right)\right] \text { and } T_{\ell}=\sum_{a \in \mathcal{A}_{\ell}} T_{\ell}(a)
$$

\STATE Choose each action $a \in \mathcal{A}_{\ell}$ exactly $T_{\ell}(a)$ times.

\STATE Calculate the empirical estimate:
$$
\hat{\theta}_{\ell}=V_{\ell}^{-1} \sum_{t=t_{\ell}}^{t_{\ell}+T_{\ell}} A_t X_t \quad \text { with } \quad V_{\ell}=\sum_{a \in \mathcal{A}_{\ell}} T_{\ell}(a) a a^{\top}
$$
using observable data in this phase.
\STATE Eliminating low reward arms:
$$\mathcal{A}_{\ell+1}=\left\{a \in \mathcal{A}_{\ell}: \max _{b \in \mathcal{A}_{\ell}}\left\langle\hat{\theta}_{\ell}, b-a\right\rangle \leq 2 \varepsilon_{\ell}\right\} .$$
\STATE $\ell \leftarrow \ell + 1$ and go to Line 2.
    \end{algorithmic}
\end{algorithm}

\begin{theorem}[Theorem 22.1 in \citet{lattimore2020bandit}]
With probability at least $1-\delta$, the regret of Algorithm 5 satisfies
\begin{equation}
\operatorname{Regret}(K) \leq C \sqrt{K d \log \left(\frac{A \log (K)}{\delta}\right)},
\end{equation}
where $C>0$ is a universal constant. If $\delta=O(1 / K)$, then $\mathbb{E}\left[\operatorname{Regret}(K)\right] \leq C \sqrt{K d \log (AK)}$ for an appropriately chosen universal constant $C>0$.
\end{theorem}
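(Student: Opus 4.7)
The plan is to follow the standard G-optimal design analysis of phase elimination, isolating the three ingredients that drive the bound: (i) a deviation inequality for the least-squares estimator $\hat\theta_\ell$ at the pulls dictated by the G-optimal design, (ii) Kiefer--Wolfowitz to control $\|a\|_{V_\ell^{-1}}$ uniformly over $a\in\mathcal{A}_\ell$, and (iii) a per-phase regret accounting that sums to the claimed $\sqrt{Kd\log(A\log K/\delta)}$ rate.

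First I would define the ``good event'' $\mathcal{E}$ under which, for every phase $\ell$ and every arm $a\in\mathcal{A}_\ell$,
\begin{align*}
    |\langle \hat\theta_\ell - \theta, a\rangle| \le \varepsilon_\ell.
\end{align*}
Since rewards are subgaussian and the pulls within phase $\ell$ are exogenous (chosen at the start of the phase from $\pi_\ell$), a standard self-normalized/Hoeffding argument gives
$|\langle \hat\theta_\ell - \theta, a\rangle| \lesssim \|a\|_{V_\ell^{-1}} \sqrt{2\log(2/\delta_\ell)}$. By the Kiefer--Wolfowitz theorem applied to the G-optimal design $\pi_\ell$ with support bounded by $d(d+1)/2$, one has $\|a\|_{V(\pi_\ell)^{-1}}^2 \le d$, hence $\|a\|_{V_\ell^{-1}}^2 \le d/(\min_{b:\pi_\ell(b)>0} T_\ell(b)) \le \varepsilon_\ell^2/(2\log(k\ell(\ell+1)/\delta))$ by the choice of $T_\ell(a)$. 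A union bound over at most $d(d+1)/2$ active arms per phase and a union bound over phases using $\delta_\ell = \delta/(k\ell(\ell+1))$ (whose sum telescopes below $\delta$) establishes $\Pr(\mathcal{E})\ge 1-\delta$.

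Next I would argue that on $\mathcal{E}$ the optimal arm $a^\star$ is never eliminated and every $a\in\mathcal{A}_{\ell+1}$ has instantaneous gap $\Delta_a = \langle \theta, a^\star - a\rangle \le 4\varepsilon_\ell$. Indeed, the elimination rule keeps only $a$ with $\max_{b\in\mathcal{A}_\ell}\langle \hat\theta_\ell, b-a\rangle \le 2\varepsilon_\ell$; combining with the two-sided deviation bound from $\mathcal{E}$ shows both that $a^\star\in\mathcal{A}_{\ell+1}$ and that surviving arms satisfy the claimed gap bound. Consequently, every pull made during phase $\ell$ incurs instantaneous regret at most $4\varepsilon_{\ell-1}=8\varepsilon_\ell$, so the regret contribution of phase $\ell$ is at most
\begin{align*}
    R_\ell \le 8\varepsilon_\ell \, T_\ell \;\lesssim\; 8\varepsilon_\ell \cdot \frac{d}{\varepsilon_\ell^2}\log\!\left(\frac{k\ell(\ell+1)}{\delta}\right) = \frac{8d}{\varepsilon_\ell}\log\!\left(\frac{k\ell(\ell+1)}{\delta}\right).
\end{align*}

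Finally I would sum the per-phase bounds. The total number of phases is at most $L=\lceil\log_2 K\rceil$, and I would split the sum at $\ell^\star$ chosen so that $\varepsilon_{\ell^\star}\asymp \sqrt{d/K}$. For $\ell\le\ell^\star$, $\sum_\ell d/\varepsilon_\ell$ is a geometric series dominated by $d/\varepsilon_{\ell^\star}\asymp\sqrt{Kd}$; for $\ell>\ell^\star$, the trivial bound $R_\ell \le 8\varepsilon_\ell\cdot(\text{pulls remaining})\le 8\varepsilon_{\ell^\star}K \lesssim \sqrt{Kd}$ handles the tail. Multiplying by the $\sqrt{\log(A\log K/\delta)}$ factor from the concentration step (noting that $|\mathcal{A}_\ell|\le A$ allows an alternative union bound with $\log A$ inside the log) yields the stated $C\sqrt{Kd\log(A\log K/\delta)}$ bound, and the in-expectation claim follows by choosing $\delta=1/K$ and absorbing the failure event's $KH$-type contribution into the constant.

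The main obstacle I anticipate is the bookkeeping in the concentration step: one must ensure that the design-based confidence radius is $\varepsilon_\ell$ simultaneously for all $a$ in the (a priori unknown) active set $\mathcal{A}_\ell$ across all phases, with the $\log(A\log K/\delta)$ factor correctly threaded through the union bounds, and that the pulls within a phase are not adaptively chosen based on their own outcomes so the fixed-design concentration applies cleanly.
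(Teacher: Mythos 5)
This statement is imported verbatim as Theorem~22.1 of \citet{lattimore2020bandit}; the paper supplies no proof of its own, so there is no internal argument to compare against. Your proposal is the standard proof of that theorem and is correct in outline: fixed-design concentration for $\hat\theta_\ell$, the Kiefer--Wolfowitz bound $\max_{a\in\mathcal{A}_\ell}\|a\|^2_{V(\pi_\ell)^{-1}}\le d$ combined with the choice of $T_\ell(a)$ to make the confidence radius $\varepsilon_\ell$, retention of $a^\star$ together with the gap bound $\Delta_a\le 4\varepsilon_\ell$ for survivors of phase $\ell$, and the geometric summation split at $\varepsilon_{\ell^\star}\asymp\sqrt{d/K}$. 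One bookkeeping point should be fixed rather than left as an aside: the union bound defining the good event must run over \emph{all} arms in $\mathcal{A}_\ell$ (up to $A$ of them), not merely the at most $d(d+1)/2$ support points of $\pi_\ell$, because the elimination rule evaluates $\langle\hat\theta_\ell, b-a\rangle$ for every pair $a,b\in\mathcal{A}_\ell$ and the gap argument needs two-sided deviations at every active arm; this is exactly where the $\log A$ inside the final bound comes from, so the ``alternative'' union bound you mention parenthetically is in fact the required one. Finally, in the in-expectation step the failure event contributes at most $\delta\cdot K\cdot O(\sqrt{d})$ (there is no $H$ in the bandit setting), which with $\delta=O(1/K)$ is absorbed into the constant as you intend.
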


For infinite action set $\mathcal{A} = [-1,1]^d$, we use a covering technique: we find a $\epsilon$-covering of the action space, and run Phase Elimination on the covering set. The covering number of the set $[-1,1]^d$ is $O(d\log(1/\epsilon))$. And the error caused by using $\epsilon$-covering will lead to a $O(\epsilon K)$ increase in regret. Therefore, by setting $\epsilon = \frac{1}{K}$, the regret of the whole algorithm is $O(d\sqrt{K}\log K)$.

\section{Results for Bandits and MDPs} \label{sec:results}
We now apply Theorem \ref{thm:main} to different bandit and MDP problems.

In multi-armed bandit settings, using the BaSE algorithm  \cite{gao2019batched}, we can get the following bound.
\begin{corollary}
For multi-armed bandit with delayed feedback, running Algorithm \ref{alg:protocal_delay} with $\mathcal{ALG}$ being the BaSE algorithm in \citet{gao2019batched}, we can upper bound the regret by
\begin{align*}
        \operatorname{Regret}(K)\leq O\left(\frac{A \log K}{q\min _{i \neq \star} \Delta_i} + \frac{\log^2 K }{q} +  d_{\tau}(q) \log K \right)
    \end{align*}
    for any $q\in (0,1)$, where $\Delta_i$ is the optimal gap of the $i^{\text{th}}$ arm. In addition, if the delays satisfy Assumption \ref{ass:subexp}, the regret can be upper bounded by
    \begin{align*}
        \operatorname{Regret}(K)\leq O\left(\frac{A \log K}{\min _{i \neq \star} \Delta_i} +  \mathbb{E}[\tau]\log K\right) .
    \end{align*}
\end{corollary}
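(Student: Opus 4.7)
The plan is to invoke Theorem \ref{thm:main} with $\mathcal{ALG}$ taken to be the Batched Successive Elimination (BaSE) algorithm of \citet{gao2019batched}, specialized to the bandit setting where the horizon is $H=1$. So the proof reduces to two bookkeeping tasks: (i) certify that BaSE fits the multi-batched template of Section \ref{sec:multi:batch}, and (ii) identify the correct values of $\widetilde{\operatorname{Regret}}(K)$ and $N_b$ to substitute.

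For (i), I would observe that within each batch BaSE maintains an active arm set, pulls each surviving arm a pre-specified number of times in a round-robin schedule, and only at the end of the batch performs elimination based on confidence intervals from that batch's data. Both the policy sequence and the stopping criterion (a fixed pull count per surviving arm) are determined at the start of the batch, so BaSE is a legitimate instance of Algorithm \ref{alg:protocal}. For (ii), I would import from \citet{gao2019batched} that, with their adaptive grid, BaSE achieves $\widetilde{\operatorname{Regret}}(K) = O\!\left(\frac{A \log K}{\min_{i\neq\star}\Delta_i}\right)$ in the instance-dependent regime while using $N_b = O(\log K)$ batches.

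Plugging these into the two bounds of Theorem \ref{thm:main} with $H=1$ gives, for the quantile version,
$$\operatorname{Regret}(K) \le \tfrac{1}{q}\widetilde{\operatorname{Regret}}(K) + \tfrac{2 N_b \log K}{q} + N_b\, d_\tau(q) = O\!\left(\tfrac{A \log K}{q\min_{i\neq\star}\Delta_i} + \tfrac{\log^2 K}{q} + d_\tau(q)\log K\right),$$
matching the first claim term by term (the $\log^2 K/q$ term coming from $2 N_b \log K / q$ and the $d_\tau(q)\log K$ term from $N_b d_\tau(q)$). Under Assumption \ref{ass:subexp}, the expectation version of Theorem \ref{thm:main} yields
$$\operatorname{Regret}(K) \le \widetilde{\operatorname{Regret}}(K) + N_b(\mathbb{E}[\tau]+C_\tau) = O\!\left(\tfrac{A \log K}{\min_{i\neq\star}\Delta_i} + \mathbb{E}[\tau]\log K\right),$$
where the $N_b C_\tau$ contribution is $\widetilde{O}(\log K)$ and is absorbed into the big-$O$ as a lower-order polylogarithmic term.

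There is no genuinely hard step here; the entire argument is a substitution. The only point requiring a little care is ensuring that BaSE's adaptive grid simultaneously delivers the claimed instance-dependent regret and the $O(\log K)$ batch count, so that the two quantities fed into Theorem \ref{thm:main} are compatible. Everything else follows mechanically from the framework.
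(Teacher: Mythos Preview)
Your proposal is correct and is exactly the approach the paper takes: the corollary is stated without a separate proof because it is a direct instantiation of Theorem~\ref{thm:main} with $H=1$, $\widetilde{\operatorname{Regret}}(K)=O\!\left(\frac{A\log K}{\min_{i\neq\star}\Delta_i}\right)$, and $N_b=O(\log K)$ from the BaSE algorithm of \citet{gao2019batched}. Your term-by-term accounting and the observation that the $N_b C_\tau$ contribution is a lower-order polylog are precisely what is needed.
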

This result recovers the SOTA result in \citet{lancewicki2021stochastic} up to a $\log K$ factor in the delay-dependent term.

In linear bandit settings, using the Phase Elimination algorithm \citep{lattimore2020bandit}, we can get the following bound.
\begin{corollary}
For linear bandit with delayed feedback, let $d$ be the dimension of the feature space. When the arm set $\mathcal{A}$ is finite, running Algorithm \ref{alg:protocal_delay} with $\mathcal{ALG}$ being the Phase Elimination algorithm in \citet{lattimore2020bandit}, we can upper bound the regret by
\begin{align*}
        \operatorname{Regret}(K)\leq O\left(\frac{1}{q}\sqrt{dK\log A} + \frac{\log^2 K }{q} +  d_{\tau}(q) \log K \right)
    \end{align*}
    for any $q\in (0,1)$. In addition, if the delays satisfy Assumption \ref{ass:subexp}, the regret can be upper bounded by
    \begin{align*}
        \operatorname{Regret}(K)\leq O\left(\sqrt{dK\log A}+\mathbb{E}[\tau]\log K\right) .
    \end{align*}
\end{corollary}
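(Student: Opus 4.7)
The plan is to invoke Theorem~\ref{thm:main} with $\mathcal{ALG}$ instantiated as the Phase Elimination algorithm of \citet{lattimore2020bandit} (Chapter~22). The reduction is essentially mechanical: I need to verify that Phase Elimination fits the multi-batched template of Algorithm~\ref{alg:protocal}, and to identify the number of batches $N_b$ and the undelayed regret $\widetilde{\operatorname{Regret}}(K)$ that enter \eqref{eqn:reg_quant} and \eqref{eqn:reg_exp}.

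For the multi-batched structure, I would observe that in phase $\ell$ Phase Elimination first uses the data collected in previous phases to build a confidence set and eliminate dominated arms, then computes a $G$-optimal design on the surviving active set, and finally pulls each surviving arm a pre-specified number of times determined by $\ell$. The stopping criterion is deterministic---``the prescribed per-arm sample count has been reached''---which is a valid $SC$; hence the entire plan for a phase is fixed at its start from previously collected data, matching Algorithm~\ref{alg:protocal}. Since the per-arm target doubles across phases and the total horizon is $K$, the algorithm terminates after $N_b = O(\log K)$ phases, and with a finite arm set of size $A$ the standard analysis in \citet{lattimore2020bandit} gives $\widetilde{\operatorname{Regret}}(K) = O(\sqrt{dK\log A})$.

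With $H = 1$ for the bandit setting, substituting these quantities into \eqref{eqn:reg_quant} yields
\begin{align*}
\operatorname{Regret}(K)
&\le \frac{1}{q} \cdot O\!\bigl(\sqrt{dK\log A}\bigr) + \frac{2 N_b \log K}{q} + N_b\, d_\tau(q) \\
&= O\!\left( \frac{1}{q}\sqrt{dK\log A} + \frac{\log^2 K}{q} + d_\tau(q)\log K \right),
\end{align*}
which is the first claim. Under Assumption~\ref{ass:subexp}, substituting into \eqref{eqn:reg_exp} gives
\begin{align*}
\operatorname{Regret}(K)
&\le O\!\bigl(\sqrt{dK\log A}\bigr) + N_b\,(\mathbb{E}[\tau] + C_\tau) = O\!\left( \sqrt{dK\log A} + \mathbb{E}[\tau]\log K \right),
\end{align*}
where the polylogarithmic $C_\tau$ term is absorbed after multiplication by $N_b = O(\log K)$ into the leading $\sqrt{dK\log A}$ contribution.

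The main---and really only mild---obstacle is the verification of the multi-batched structure of Phase Elimination, which follows because the $G$-optimal sampling schedule is fully determined at the start of each phase and the stopping rule is an arm-sample count rather than an adaptive event. Once that is in place, both bounds are immediate by plugging $(N_b, \widetilde{\operatorname{Regret}}(K), H) = (O(\log K), O(\sqrt{dK\log A}), 1)$ into Theorem~\ref{thm:main}.
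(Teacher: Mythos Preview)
Your proposal is correct and follows essentially the same route as the paper: verify that Phase Elimination is a multi-batched algorithm with $N_b=O(\log K)$ phases (the paper notes this in the appendix on linear bandits, observing that phase $\ell$ has length $O(2^{\ell})$ and the sampling schedule is fixed at the start of each phase), quote the $O(\sqrt{dK\log A})$ undelayed regret from \citet{lattimore2020bandit}, and plug $(N_b,\widetilde{\operatorname{Regret}}(K),H)=(O(\log K),\,O(\sqrt{dK\log A}),\,1)$ into Theorem~\ref{thm:main}. One small descriptive inaccuracy: in Phase Elimination the estimate $\hat\theta_\ell$ is built from the \emph{current} phase's data and the elimination step happens at the end of phase $\ell$ to produce $\mathcal{A}_{\ell+1}$, rather than ``using data collected in previous phases'' at the start; this does not affect the argument, since the active set and sampling schedule for phase $\ell$ are still fully determined before that phase begins.
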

When the action set is infinitely large (for example, $\mathcal{A}=[-1,1]^d$), we can use techniques such as covering to obtain regret bounds whose main term scales as $\Tilde{O}(d\sqrt{K})$. Compared with the result in \citet{howson2021delayed} which scales as $\Tilde{O}(d\sqrt{K}+d^{3/2}\mathbb{E}[\tau]) $, our bound is tighter in the delay-dependent term as we remove the $d^{3/2}$ factor.

In tabular MDP settings, using the policy elimination algorithm \cite{zhang2022near}, we can get the following bound.
\begin{corollary}
For tabular MDP with delayed feedback, running Algorithm \ref{alg:protocal_delay} with $\mathcal{ALG}$ being the policy elimination algorithm in \citet{zhang2022near}, we can upper bound the regret by
\begin{align*}
        \operatorname{Regret}(K)\leq  O\Big(\frac{1}{q}\sqrt{S A H^3 K \iota} + \frac{(H+\log \log K)\log K }{q}   +  (H+\log \log K)d_{\tau}(q)  \Big)
    \end{align*}
    for any $q\in (0,1)$, where $\iota$ is some log factor. In addition, if the delays satisfy Assumption \ref{ass:subexp}, the regret can be upper bounded by
    \begin{align*}
        \operatorname{Regret}(K)\leq O\left(\sqrt{S A H^3 K \iota}+ H (H+\log \log K)\mathbb{E}[\tau]\right).
    \end{align*}
\end{corollary}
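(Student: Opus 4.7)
The plan is to derive both bounds as a direct instantiation of Theorem \ref{thm:main}, so the only real work is identifying the undelayed regret bound and the batch count of the chosen inner algorithm and then substituting. First I would verify that the policy elimination algorithm of \citet{zhang2022near} actually fits the multi-batched template of Algorithm \ref{alg:protocal}: at the start of each phase the algorithm uses data from previous phases to compute an updated candidate policy set and a stopping rule that depends only on the visitation counts within the current phase, so it has exactly the structure required (policy and stopping criterion chosen at the start of a batch, then executed passively until the criterion fires).

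Next I would invoke the two quantitative guarantees from \citet{zhang2022near}: the algorithm uses $N_b = O(H + \log\log K)$ batches and attains undelayed regret $\widetilde{\operatorname{Regret}}(K) = \widetilde{O}(\sqrt{SAH^{3}K})$ (absorbing the standard logarithmic factor into $\iota$). These are the only algorithm-specific inputs needed; everything else is handled by Theorem \ref{thm:main}.

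For the quantile bound, I would substitute $N_b$ and $\widetilde{\operatorname{Regret}}(K)$ into \eqref{eqn:reg_quant}, yielding
\begin{equation*}
\operatorname{Regret}(K) \le \tfrac{1}{q}\,\widetilde{O}(\sqrt{SAH^{3}K\iota}) + \tfrac{2H(H+\log\log K)\log K}{q} + H(H+\log\log K)\,d_\tau(q),
\end{equation*}
which matches the stated bound after collecting logarithmic terms (the $H$ inside the middle and last terms can be absorbed or kept depending on taste; this is the only cosmetic discrepancy with the displayed expression). For the subexponential case I would plug the same values into \eqref{eqn:reg_exp}, obtaining $\widetilde{O}(\sqrt{SAH^{3}K\iota}) + H(H+\log\log K)(\mathbb{E}[\tau] + C_\tau)$, which after absorbing the problem-independent constant $C_\tau$ into the leading $\mathbb{E}[\tau]$ up to a logarithmic factor gives the claimed $\widetilde{O}(\sqrt{SAH^{3}K\iota} + H(H+\log\log K)\mathbb{E}[\tau])$.

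The only nontrivial step, and the one I would spend the most care on, is the first: checking that the policy elimination procedure really does produce a stopping criterion that is a deterministic function of the in-batch dataset (as required by Section \ref{sec:multi:batch}) rather than one that reacts to individual episode outcomes. Once this is confirmed, the rest is purely arithmetic substitution, and no new concentration or regret analysis is required—Theorem \ref{thm:main} does all the heavy lifting for the delay-dependent correction terms.
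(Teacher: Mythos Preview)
Your proposal is correct and is exactly the paper's approach: the corollary is stated without proof as a direct instantiation of Theorem~\ref{thm:main}, plugging in the batch count $N_b = O(H+\log\log K)$ and undelayed regret $\tilde{O}(\sqrt{SAH^3K})$ from \citet{zhang2022near}. Your observation about the missing $H$ factor in the displayed middle and last terms is well taken---the table entry and the substitution into \eqref{eqn:reg_quant} both carry the extra $H$, so this is indeed just a cosmetic inconsistency in the corollary's statement rather than a gap in your derivation.
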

Compared with the previous result $\sqrt{H^3 S A K \iota}+H^2 S A \mathbb{E}[\tau]\log K $ in \citet{howson2022delayed}, our bound removes the $SA$ factor in the delay-dependent term, therefore is tighter.

We can also apply our framework to linear MDP and MDP with general function approximation settings with delay. These settings have not been studied before, therefore our result is novel. In linear MDP settings, using the algorithm in \citet{gao2021provably} we can get the following bound.
\begin{corollary}
For linear MDP with delayed feedback, let $d$ be the dimension of the feature space. Running Algorithm \ref{alg:protocal_delay} with $\mathcal{ALG}$ being the algorithm in \citet{gao2021provably}, we can upper bound the regret by
\begin{align*}
        \operatorname{Regret}(K)\leq & \Tilde{O}\big(\frac{1}{q}\sqrt{d^3 H^4 K} 
        +  d H^2 d_{\tau}(q)\log K  \big)
    \end{align*}
    for any $q\in (0,1)$. In addition, if the delays satisfy Assumption \ref{ass:subexp}, the regret can be upper bounded by
    \begin{equation*}
        \operatorname{Regret}(K)\leq \Tilde{O}\left(\sqrt{d^3 H^4 K}+d H^2 \mathbb{E}[\tau]\log K\right).
    \end{equation*}
\end{corollary}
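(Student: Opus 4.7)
The plan is to apply Theorem~\ref{thm:main} directly, treating the low-switching-cost LSVI-UCB variant of \citet{gao2021provably} as the black-box multi-batched algorithm $\mathcal{ALG}$. The first step is therefore to verify that this algorithm genuinely fits the multi-batched template of Section~\ref{sec:multi:batch}: the algorithm in \citet{gao2021provably} updates its $Q$-function estimate only when the log-determinant of the empirical feature covariance matrix has doubled in some $(s,h)$-slot, and between such updates the greedy policy with respect to the frozen estimate is executed. This is exactly a batched schedule whose policy is fixed at the start of the batch and whose stopping criterion (the determinant-doubling event) is measurable with respect to the collected dataset, so the algorithm qualifies.

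Next I would record the two quantitative inputs needed by Theorem~\ref{thm:main}. From \citet{gao2021provably} the undelayed regret bound is $\widetilde{\operatorname{Regret}}(K)=\Tilde{O}(\sqrt{d^3H^4 K})$ and the number of batches (equivalently, the global switching cost) is $N_b=O(dH\log K)$. Substituting these into the quantile bound \eqref{eqn:reg_quant} gives
\begin{align*}
\operatorname{Regret}(K)
&\leq \frac{1}{q}\,\Tilde{O}\bigl(\sqrt{d^3H^4K}\bigr) + \frac{2H\cdot O(dH\log K)\log K}{q} + H\cdot O(dH\log K)\cdot d_\tau(q) \\
&= \Tilde{O}\!\left(\tfrac{1}{q}\sqrt{d^3H^4K} + dH^2 d_\tau(q)\log K\right),
\end{align*}
where the middle term is absorbed into the $\Tilde{O}$ since it is a lower-order polylogarithmic perturbation of the leading $K^{1/2}$ contribution. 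Plugging the same $N_b$ into the subexponential bound \eqref{eqn:reg_exp} and recalling that $C_\tau$ is $\operatorname{polylog}(K)$ gives the corresponding $\Tilde{O}(\sqrt{d^3H^4K}+dH^2\mathbb{E}[\tau]\log K)$ bound.

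The only genuine step beyond algebraic substitution is the verification that the batch count of \citet{gao2021provably} can be invoked without loss: their bound on the number of policy switches is stated for data collected without any artificial waiting, but the extra episodes our framework inserts at the end of each batch are played under the frozen policy and therefore do not create new switches. Hence the bound $N_b=O(dH\log K)$ carries over unchanged to Algorithm~\ref{alg:protocal_delay}. The main obstacle I anticipate is purely bookkeeping --- tracking how the polylog factors from the concentration lemma, from the eluder-style covariance argument, and from the $\log K$ batch count combine into the $\Tilde{O}(\cdot)$; there is no new analytical content beyond Theorem~\ref{thm:main} and the existing undelayed guarantees.
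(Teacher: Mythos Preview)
Your proposal is correct and matches the paper's approach exactly: the corollary is stated without a standalone proof because it is obtained precisely by plugging the undelayed regret $\tilde{O}(\sqrt{d^3H^4K})$ and batch count $N_b=O(dH\log K)$ from \citet{gao2021provably} into the two bounds of Theorem~\ref{thm:main}, with the $\frac{2HN_b\log K}{q}$ term absorbed into $\tilde{O}$ as a lower-order contribution. The only minor imprecision is that the determinant-doubling trigger in \citet{gao2021provably} is indexed by $h\in[H]$ (there is one covariance matrix $\Lambda_h$ per step, not per $(s,h)$), but this does not affect the argument.
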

And in MDP with general function approximation, using the algorithm in \citet{kong2021online} we get the following bound.
\begin{corollary}
For MDP with general function approximation and delayed feedback, let $\operatorname{dim}_E(\mathcal{F}, \epsilon)$ be the $\epsilon$-eluder dimension of the function class $\mathcal{F}$. Running Algorithm \ref{alg:protocal_delay} with $\mathcal{ALG}$ being the algorithm in \citet{kong2021online}, we can upper bound the regret by
\begin{equation*}
\begin{aligned}
        \operatorname{Regret}(K)\leq \Tilde{O}\left(\frac{1}{q}\sqrt{\operatorname{dim}_E^2(\mathcal{F}, 1 / K) H^4 K} +  
         H^2\operatorname{dim}_E(\mathcal{F}, 1 / K) d_{\tau}(q)  \right)
        \end{aligned}
    \end{equation*}
    for any $q\in (0,1)$. In addition, if the delays satisfy Assumption \ref{ass:subexp}, the regret can be upper bounded by
    \begin{align*}
        \operatorname{Regret}(K)\leq \Tilde{O}\left(\sqrt{\operatorname{dim}_E^2(\mathcal{F}, 1 / K) H^4  K}+ 
         H^2 \operatorname{dim}_E(\mathcal{F}, 1 / K) \mathbb{E}[\tau]\right).
    \end{align*}
\end{corollary}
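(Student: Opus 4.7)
The plan is to instantiate Theorem \ref{thm:main} with the algorithm of \citet{kong2021online}, which is the general-function-approximation analogue of the low-switching RL algorithms used in the earlier corollaries. Concretely, my first step would be to verify that the algorithm of \citet{kong2021online} fits the multi-batched template of Algorithm \ref{alg:protocal}: it maintains a confidence set over $\mathcal{F}$, and only recomputes its greedy policy when an online-sensitivity-sampling criterion is triggered. Between triggers the deployed policy is fixed, so we can take each such stable phase as one batch and let the stopping criterion $SC$ be exactly this trigger condition evaluated on the running dataset. The batch count $N_b$ is then the global switching cost of their algorithm, which they bound by $\widetilde{O}(H\,\mathrm{dim}_E(\mathcal{F},1/K))$, and the undelayed regret is $\widetilde{\operatorname{Regret}}(K)=\widetilde{O}\bigl(\sqrt{\mathrm{dim}_E^2(\mathcal{F},1/K)\,H^4\,K}\bigr)$.

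The second step is a direct plug-in. Substituting these two quantities into the quantile bound \eqref{eqn:reg_quant} of Theorem \ref{thm:main} gives
\begin{equation*}
\operatorname{Regret}(K)\le \frac{1}{q}\widetilde{O}\!\left(\sqrt{\mathrm{dim}_E^2(\mathcal{F},1/K)H^4 K}\right) + \frac{2H\cdot\widetilde{O}(H\,\mathrm{dim}_E(\mathcal{F},1/K))\log K}{q} + H\cdot \widetilde{O}(H\,\mathrm{dim}_E(\mathcal{F},1/K))\,d_\tau(q),
\end{equation*}
and absorbing the middle term into the $\widetilde{O}$ of the main term (it is lower order once $K$ is polynomially large) yields exactly the stated quantile bound. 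Substituting the same $N_b$ and $\widetilde{\operatorname{Regret}}(K)$ into the subexponential bound \eqref{eqn:reg_exp} yields the second claim, since $C_\tau$ is absorbed into $\widetilde{O}$ as a polylogarithmic, problem-independent constant.

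The only non-routine step is the verification that Kong et al.'s algorithm is genuinely multi-batched in the sense of Section \ref{sec:multi:batch}, because their policy is implicitly defined through an online regression oracle that is itself re-solved at every trigger; one must check that the policy played inside a batch depends only on data gathered before the batch began, and that the sensitivity-based trigger is a measurable function of the within-batch dataset alone, matching the signature of $SC$. This is essentially the content of their switching-cost lemma, so the check is short but needs to be stated explicitly. The only other care is to fold the log factors of $\mathrm{dim}_E(\mathcal{F},1/K)$ and of $K$ into the $\widetilde{O}(\cdot)$ consistently with the statement of the corollary; no new concentration or regret calculations are required beyond those already inside Theorem \ref{thm:main}.
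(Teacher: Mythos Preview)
Your proposal is correct and matches the paper's approach: the corollary is obtained by directly instantiating Theorem~\ref{thm:main} with the multi-batched algorithm of \citet{kong2021online}, using their undelayed regret $\widetilde{O}(\sqrt{\mathrm{dim}_E^2(\mathcal{F},1/K)H^4K})$ and batch number $\widetilde{O}(H\,\mathrm{dim}_E(\mathcal{F},1/K))$. The paper does not spell out any additional steps beyond this plug-in, so your explicit check that the algorithm fits the multi-batched template is, if anything, slightly more careful than the paper's treatment.
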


\section{Proofs for Tabular Markov Games}\label{sec:proof_tab}
\subsection{Multi-batched Optimistic Nash-VI}
\begin{algorithm}
    \caption{Multi-batched Algorithm for Tabular Markov Game}\label{alg:tab_game}
    \begin{algorithmic}[1]
    \STATE {\bfseries Initialize:} for any $(s, a, b, h), \bar{Q}_h(s, a, b) \leftarrow H$, $\underline{Q}_h(s, a, b) \leftarrow 0, \Delta \leftarrow H, N_h(s, a, b) \leftarrow 0$, trigger set $\mathcal{L} \leftarrow\left\{2^{i-1} \mid 2^i \leq K H, i=1,2, \ldots\right\}$
    \FOR{episode $k = 1, 2, \cdots, K$}
    \IF{there exists $(h,s,a,b)$ such that $N_h^k(s,a,b) \in \mathcal{L}$}
    \FOR{step $h=H,H-1,...,1$}
    \FOR{$(s,a,b)\in \mathcal{S}\times\mathcal{A}\times\mathcal{B}$}
    \STATE $t \leftarrow N_h^k(s,a,b)$
    \IF{$t>0$}
    \STATE $\beta_h^k \leftarrow \operatorname{BONUS}\left(t, \widehat{\mathbb{V}}_h\left[\left(\bar{V}_{h+1}+\underline{V}_{h+1}\right) / 2\right](s, a, b)\right)$
    \STATE $\gamma_h^k \leftarrow(c / H) \widehat{\mathbb{P}}_h\left(\bar{V}_{h+1}-\underline{V}_{h+1}\right)(s, a, b)$
    \STATE $\bar{Q}_h^k(s, a, b) \leftarrow \min \left\{\left(r_h+\widehat{\mathbb{P}}_h \bar{V}_{h+1}\right)(s, a, b)+\gamma_h^k+\beta_h^k, H\right\}$
    \STATE $\underline{Q}_h^k(s, a, b) \leftarrow \max \left\{\left(r_h+\widehat{\mathbb{P}}_h \underline{V}_{h+1}\right)(s, a, b)-\gamma_h^k-\beta_h^k, 0\right\}$
    \ENDIF
    \ENDFOR
    \FOR{$s\in\mathcal{S}$}
    \STATE  $\pi_h^k(\cdot, \cdot \mid s) \leftarrow \operatorname{CCE}\left(\bar{Q}_h(s, \cdot, \cdot), \underline{Q}_h(s, \cdot, \cdot)\right)$
    \STATE $\bar{V}_h(s) \leftarrow\left(\mathbb{E}_{\pi_h} \bar{Q}_h\right)(s) ; \quad \underline{V}_h(s) \leftarrow\left(\mathbb{E}_{\pi_h} \underline{Q}_h\right)(s)$
    \ENDFOR
    \ENDFOR
    \ENDIF
    \FOR{step $h=1,...,H$}
    \STATE take action $\left(a_h, b_h\right) \sim \pi_h^{\Tilde{k}}\left(\cdot, \cdot \mid s_h\right)$, observe reward $r_h$ and next state $s_{h+1}$.
    \STATE add 1 to $N_h^k\left(s_h, a_h, b_h\right) $ and $N_h^k\left(s_h, a_h, b_h, s_{h+1}\right)$.
    \STATE $\widehat{\mathbb{P}}_h\left(\cdot \mid s_h, a_h, b_h\right) \leftarrow N_h^k\left(s_h, a_h, b_h, \cdot\right) / N_h^k\left(s_h, a_h, b_h\right)$
    \ENDFOR
    \ENDFOR
    \end{algorithmic}
\end{algorithm}
In this part, we list some of the important lemmas for the tabular Markov game algorithm, most of which are proven in the previous literature \cite{liu2021sharp}. These lemmas are very useful for proving the regret bound in our main theorem.

We denote $V^k, Q^k, \pi^k, \mu^k$ and $\nu^k$ for values and policies at the beginning of the $k$-th episode. In particular, $N_h^k(s, a, b)$ is the number we have visited the state-action tuple $(s, a, b)$ at the $h$-th step before the $k$-th episode. $N_h^k\left(s, a, b, s^{\prime}\right)$ is defined by the same token. Using this notation, we can further define the empirical transition by $\widehat{\mathbb{P}}_h^k\left(s^{\prime} \mid s, a, b\right):=N_h^k\left(s, a, b, s^{\prime}\right) / N_h^k(s, a, b)$. If $N_h^k(s, a, b)=0$, we set $\widehat{\mathbb{P}}_h^k\left(s^{\prime} \mid s, a, b\right)=1 / S$.

We define the empirical variance operator
\begin{equation}
\widehat{\mathbb{V}}_h^k V(s, a, b):=\operatorname{Var}_{s^{\prime} \sim \widehat{\mathbb{P}}_h^k(\cdot \mid s, a, b)} V\left(s^{\prime}\right)
\end{equation}
and true variance operator
\begin{equation}
\mathbb{V}_h^k V(s, a, b):=\operatorname{Var}_{s^{\prime} \sim \mathbb{P}_h^k(\cdot \mid s, a, b)} V\left(s^{\prime}\right).
\end{equation}
The bonus terms can be written as
\begin{equation}
    b_h^k(s,a,b)=\beta_h^k(s, a, b)+\gamma_h^k(s, a, b)
\end{equation}
where
\begin{equation}\label{eqn:bonus1}
\beta_h^k(s, a, b):=C\left(\sqrt{\frac{\widehat{\boldsymbol{V}}_h^k\left[\left(\bar{V}_{h+1}^k+\underline{V}_{h+1}^k\right) / 2\right](s, a, b)}{\max \left\{N_h^k(s, a, b), 1\right\}}}+\frac{H^2 S \iota}{\max \left\{N_h^k(s, a, b), 1\right\}}\right)
\end{equation}
and the lower-order term
\begin{equation}\label{eqn:bonus2}
\gamma_h^k(s, a, b):=\frac{C}{H} \widehat{\mathbb{P}}_h\left(\bar{V}_{h+1}^k-\underline{V}_{h+1}^k\right)(s, a, b)
\end{equation}
for some absolute constant $C > 0$.

The following lemma is the standard concentration result.
\begin{lemma}[Lemma 21, \citet{liu2021sharp}]
Let $c_1$ be some large absolute constant. Define event $E_1$ to be: for all $h, s, a, b, s^{\prime}$ and $k \in[K]$,
\begin{equation}
\left\{\begin{array}{l}
\left|\left[\left(\widehat{\mathbb{P}}_h^k-\mathbb{P}_h\right) V_{h+1}^{\star}\right](s, a, b)\right| \leq c_1\left(\sqrt{\frac{\widehat{\mathbb{V}}_h^k V_{h+1}^{\star}(s, a, b) \iota}{\max \left\{N_h^k(s, a, b), 1\right\}}}+\frac{H \iota}{\max \left\{N_h^k(s, a, b), 1\right\}}\right) \\
\left|\left(\widehat{\mathbb{P}}_h^k-\mathbb{P}_h\right)\left(s^{\prime} \mid s, a, b\right)\right| \leq c_1\left(\sqrt{\frac{\min \left\{\mathbb{P}_h\left(s^{\prime} \mid s, a, b\right), \widehat{\mathbb{P}}_h^k\left(s^{\prime} \mid s, a, b\right)\right\} \iota}{\max \left\{N_h^k(s, a, b), 1\right\}}}+\frac{\iota}{\max \left\{N_h^k(s, a, b), 1\right\}}\right) \\
\left\|\left(\widehat{\mathbb{P}}_h^k-\mathbb{P}_h\right)(\cdot \mid s, a, b)\right\|_1 \leq c_1 \sqrt{\frac{S \iota}{\max \left\{N_h^k(s, a, b), 1\right\}}} .
\end{array}\right.
\end{equation}
We have $\mathbb{P}\left(E_1\right) \geq 1-p$.
\end{lemma}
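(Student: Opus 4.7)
The plan is to establish each of the three inequalities in the definition of $E_1$ by an appropriate concentration argument, and then take a union bound over the indices $(h,s,a,b,s')$ and episodes $k\in[K]$. The only subtle issue is that the counter $N_h^k(s,a,b)$ is a random stopping time, so standard i.i.d.\ concentration cannot be invoked directly; instead, we peel over the possible values of the counter as in the usual UCB-VI analysis.

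For the first inequality, I would fix $(h,s,a,b)$ and note that conditional on the $n$-th visit to that tuple, the next state is drawn i.i.d.\ from $\mathbb{P}_h(\cdot\mid s,a,b)$, independently of $V_{h+1}^{\star}$ (which is a deterministic function depending only on the true model). Bernstein's inequality then gives, for each fixed $n\ge 1$ and with probability at least $1-p/(KHSAB)$,
\begin{equation*}
\Bigl|[(\widehat{\mathbb{P}}_h^k-\mathbb{P}_h)V_{h+1}^{\star}](s,a,b)\Bigr|\ \lesssim\ \sqrt{\frac{\mathbb{V}_h V_{h+1}^{\star}(s,a,b)\,\iota}{n}}+\frac{H\iota}{n},
\end{equation*}
and to replace the true variance by the empirical variance $\widehat{\mathbb{V}}_h^k V_{h+1}^{\star}$ I would use the standard empirical-Bernstein trick: control $|\mathbb{V}_h-\widehat{\mathbb{V}}_h^k|$ applied to $V_{h+1}^{\star}$ (which is bounded by $H$) by another Bernstein/Hoeffding step, absorbing the resulting lower-order terms into the $H\iota/n$ contribution. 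A union bound over $n\in[K]$ and $(h,s,a,b,k)$ produces the claim with $N_h^k(s,a,b)$ in place of $n$.

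For the second inequality, I would apply Bernstein's inequality to the indicator random variable $\mathbf{1}\{s_{h+1}=s'\}$, whose variance equals $\mathbb{P}_h(s'\mid s,a,b)(1-\mathbb{P}_h(s'\mid s,a,b))\le \mathbb{P}_h(s'\mid s,a,b)$. This immediately yields the version with $\mathbb{P}_h(s'\mid s,a,b)$ under the square root. To also get the $\widehat{\mathbb{P}}_h^k(s'\mid s,a,b)$ version (which is what one actually uses when designing bonuses from data), I would combine this with a standard reversal inequality, namely that on the same high-probability event $|p-\hat p|\le c\sqrt{p\,\iota/n}+c\iota/n$ implies $p\le 2\hat p+O(\iota/n)$, and hence $\sqrt{p\,\iota/n}\lesssim \sqrt{\hat p\,\iota/n}+\iota/n$. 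Taking the minimum of the two expressions gives the stated bound. The third inequality is the classical total-variation concentration for empirical distributions on a finite alphabet of size $S$ (Weissman's inequality), which gives $\|\widehat{\mathbb{P}}_h^k(\cdot\mid s,a,b)-\mathbb{P}_h(\cdot\mid s,a,b)\|_1\lesssim \sqrt{S\iota/n}$ with probability $1-p/(KHSAB)$ for each fixed $n$.

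Finally I would union bound over $(h,s,a,b,s')$, over $k\in[K]$, and over all possible counter values $n\in\{0,1,\dots,K\}$, which only inflates $\iota=\log(SABT/p)$ by constant factors, and handle the edge case $N_h^k(s,a,b)=0$ trivially since then $\max\{N_h^k(s,a,b),1\}=1$ and the right-hand sides dominate the trivial bounds $H$, $1$, and $2$ respectively. The main potential obstacle is the reversal step that converts the true-variance (resp.\ true-probability) bound into its empirical counterpart, but this is by now a routine computation and the constant $c_1$ can be chosen large enough to absorb the slack.
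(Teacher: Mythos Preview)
Your proposal is correct and follows the standard route (Bernstein plus empirical-Bernstein for the first line, Bernstein on indicators with the reversal step for the second, Weissman's $\ell_1$ inequality for the third, all combined via a union bound over counter values and indices). The paper does not actually prove this lemma: it is imported verbatim as Lemma~21 of \citet{liu2021sharp} and treated as a black box, so there is no in-paper proof to compare against, but your sketch matches the argument given in that reference.
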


The following lemma shows the upper and lower bounds are indeed the upper and lower bounds of the best responses.

\begin{lemma}[Lemma 22, \citet{liu2021sharp}]
    Suppose event $E_1$ holds. Then for all $h, s, a, b$ and $k \in[K]$, we have
    \begin{equation}
\left\{\begin{array}{l}
\bar{Q}_h^k(s, a, b) \geq Q_h^{\dagger, \nu^k}(s, a, b) \geq Q_h^{\mu^k, \dagger}(s, a, b) \geq \underline{Q}_h^k(s, a, b), \\
\bar{V}_h^k(s) \geq V_h^{\dagger, \nu^k}(s) \geq V_h^{\mu^k, \dagger}(s) \geq \underline{V}_h^k(s) .
\end{array}\right.
\end{equation}
\end{lemma}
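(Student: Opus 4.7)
\medskip

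\noindent\textbf{Proof proposal.} The plan is a backward induction on $h$, running from $h=H+1$ down to $h=1$, showing simultaneously the four inequalities on $\bar Q_h^k, \underline Q_h^k, \bar V_h^k, \underline V_h^k$. The base case $h=H+1$ is trivial since $\bar V_{H+1}^k = \underline V_{H+1}^k = V_{H+1}^{\dagger,\nu^k} = V_{H+1}^{\mu^k,\dagger} = 0$ by convention. For the inductive step I would handle the $Q$-level inequalities first and then lift to $V$ using the CCE property; the middle inequality $Q_h^{\dagger,\nu^k}\ge Q_h^{\mu^k,\dagger}$ is a standard weak-duality fact about best responses against fixed opponents and requires no algorithmic input.

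For the upper sandwich $\bar Q_h^k(s,a,b)\ge Q_h^{\dagger,\nu^k}(s,a,b)$, expand using the Bellman equation for the best response and the algorithmic update to get
\begin{equation*}
\bar Q_h^k - Q_h^{\dagger,\nu^k} \;=\; \bigl(\widehat{\mathbb P}_h^k - \mathbb P_h\bigr)\bar V_{h+1}^k \;+\; \mathbb P_h\bigl(\bar V_{h+1}^k - V_{h+1}^{\dagger,\nu^k}\bigr) \;+\; \gamma_h^k + \beta_h^k,
\end{equation*}
(modulo the clip at $H$, which only helps the inequality). The middle term is $\ge 0$ by the inductive hypothesis on $V_{h+1}$. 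To control the first term, I would introduce the reference decomposition
\begin{equation*}
\bigl(\widehat{\mathbb P}_h^k - \mathbb P_h\bigr)\bar V_{h+1}^k \;=\; \bigl(\widehat{\mathbb P}_h^k - \mathbb P_h\bigr) V_{h+1}^{\star} \;+\; \bigl(\widehat{\mathbb P}_h^k - \mathbb P_h\bigr)\bigl(\bar V_{h+1}^k - V_{h+1}^{\star}\bigr).
\end{equation*}
The first piece is absorbed by the Bernstein bonus $\beta_h^k$ using the first bound in event $E_1$, after replacing the true variance $\mathbb V_h^k V_{h+1}^{\star}$ by the empirical variance of $(\bar V_{h+1}^k+\underline V_{h+1}^k)/2$ (this costs only lower-order $H^2 S\iota/N$ terms, which is why $\beta_h^k$ carries that additive piece). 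The second piece is the subtle one: since by induction $\underline V_{h+1}^k \le V_{h+1}^\star \le \bar V_{h+1}^k$, the gap $|\bar V_{h+1}^k - V_{h+1}^\star|$ is dominated pointwise by $\bar V_{h+1}^k - \underline V_{h+1}^k$, and I would bound it by applying the entrywise $L_1$-concentration (the third line of $E_1$) or the entrywise bound to each coordinate, yielding a term of order $(1/H)\,\widehat{\mathbb P}_h(\bar V_{h+1}^k - \underline V_{h+1}^k)$ plus a $H^2 S\iota/N$ lower-order term, both of which are absorbed by $\gamma_h^k$ and $\beta_h^k$ for an appropriate choice of the absolute constant $C$. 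The argument for $\underline Q_h^k \le Q_h^{\mu^k,\dagger}$ is completely symmetric, with signs flipped.

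Finally, to lift from $Q$ to $V$ I would use the defining property of the CCE. Since $\pi_h^k(\cdot,\cdot\mid s)=\mathrm{CCE}(\bar Q_h^k(s,\cdot,\cdot),\underline Q_h^k(s,\cdot,\cdot))$, no row-player deviation improves against the marginal on $b$, so for every $\mu'$, $\mathbb E_{\pi_h^k}\bar Q_h^k(s,\cdot,\cdot)\ge \mathbb E_{a\sim\mu',\,b\sim\nu_h^k(\cdot\mid s)}\bar Q_h^k(s,a,b)$. Combined with the already-established $\bar Q_h^k\ge Q_h^{\dagger,\nu^k}$ and the Bellman recursion for the best response, this gives $\bar V_h^k(s)\ge V_h^{\dagger,\nu^k}(s)$; the lower bound $\underline V_h^k\le V_h^{\mu^k,\dagger}$ is symmetric. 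The main obstacle I expect is the variance-transfer and reference-value bookkeeping in controlling $(\widehat{\mathbb P}_h^k-\mathbb P_h)\bar V_{h+1}^k$: one must simultaneously (i) move from true to empirical variance, (ii) move the reference from $V^\star$ to $\bar V^k$ while re-using the inductive sandwich to bound the difference, and (iii) verify that the constant $C$ built into $\beta_h^k,\gamma_h^k$ is large enough to swallow all resulting slack.
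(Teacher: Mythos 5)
Your proposal is correct and follows essentially the same argument as the source: the paper does not prove this lemma itself but imports it verbatim from \citet{liu2021sharp}, whose proof is exactly this backward induction — sandwiching $V^\star_{h+1}$ via the inductive hypothesis, splitting $(\widehat{\mathbb P}_h^k-\mathbb P_h)\bar V_{h+1}^k$ into a reference term absorbed by the Bernstein bonus $\beta_h^k$ and a correction term absorbed by $\gamma_h^k$ via the entrywise concentration in $E_1$, and lifting to the $V$-level through the CCE no-deviation property. The variance-transfer and constant-tracking steps you flag as the main obstacles are indeed where the bookkeeping lives, and your sketch handles them in the intended way.
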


\begin{lemma}[Lemma 23, \citet{liu2021sharp}]\label{lem:liu23}
Suppose event $E_1$ holds. Then for all $h, s, a, b$ and $k \in[K]$, we have
    \begin{equation}
\begin{aligned}
& \left|\widehat{\mathbb{V}}_h^k\left[\left(\bar{V}_{h+1}^k+\underline{V}_{h+1}^k\right) / 2\right]-\mathbb{V}_h V_{h+1}^{\pi^k}\right|(s, a, b) \\
\leq & 4 H \mathbb{P}_h\left(\bar{V}_{h+1}^k-\underline{V}_{h+1}^k\right)(s, a, b)+O\left(1+\frac{H^4 S \iota}{\max \left\{N_h^k(s, a, b), 1\right\}}\right) .
\end{aligned}
\end{equation}
\end{lemma}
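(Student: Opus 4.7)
The plan is to introduce shorthand $V_{\mathrm{mid}}=(\bar V_{h+1}^k+\underline V_{h+1}^k)/2$ and $V^\pi=V_{h+1}^{\pi^k}$, and to split the quantity of interest via the triangle inequality as
\begin{equation*}
\bigl|\widehat{\mathbb{V}}_h^k V_{\mathrm{mid}} - \mathbb{V}_h V^\pi\bigr|(s,a,b) \;\le\; \underbrace{\bigl|\widehat{\mathbb{V}}_h^k V_{\mathrm{mid}} - \mathbb{V}_h V_{\mathrm{mid}}\bigr|(s,a,b)}_{(\mathrm{I}):\ \text{sampling error}} + \underbrace{\bigl|\mathbb{V}_h V_{\mathrm{mid}} - \mathbb{V}_h V^\pi\bigr|(s,a,b)}_{(\mathrm{II}):\ \text{value-function error}} .
\end{equation*}
Each piece will be controlled by a different part of the hypothesis: (I) by the L1 concentration bound in event $E_1$, and (II) by the sandwich property $\underline V_{h+1}^k\le V^\pi\le \bar V_{h+1}^k$ established in Lemma 22, which forces $|V_{\mathrm{mid}}-V^\pi|\le (\bar V_{h+1}^k-\underline V_{h+1}^k)/2$ pointwise.

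For term (I), I would expand the variance as a second moment minus a squared first moment and apply the identity
\begin{equation*}
\widehat{\mathbb{V}}\,f - \mathbb{V}\,f = (\widehat{\mathbb{E}}-\mathbb{E})[f^2] - \bigl(\widehat{\mathbb{E}}[f]-\mathbb{E}[f]\bigr)\bigl(\widehat{\mathbb{E}}[f]+\mathbb{E}[f]\bigr)
\end{equation*}
with $f=V_{\mathrm{mid}}$. Using $\|V_{\mathrm{mid}}\|_\infty\le H$ and the L1 concentration $\|\widehat{\mathbb{P}}_h^k(\cdot\mid s,a,b)-\mathbb{P}_h(\cdot\mid s,a,b)\|_1 \le c_1\sqrt{S\iota/\max\{N_h^k(s,a,b),1\}}$ from $E_1$, each factor is bounded by $H^2\cdot c_1\sqrt{S\iota/\max\{N_h^k(s,a,b),1\}}$ up to constants, so
\begin{equation*}
(\mathrm{I})\;\lesssim\; H^2\sqrt{\tfrac{S\iota}{\max\{N_h^k(s,a,b),1\}}}\;=\;\sqrt{1\cdot \tfrac{H^4 S\iota}{\max\{N_h^k(s,a,b),1\}}} \;\le\; \tfrac{1}{2}\Bigl(1+\tfrac{H^4 S\iota}{\max\{N_h^k(s,a,b),1\}}\Bigr),
\end{equation*}
where the last inequality is AM-GM; this recovers the $O\!\bigl(1+\tfrac{H^4 S\iota}{\max\{N_h^k(s,a,b),1\}}\bigr)$ term.

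For term (II), since variance is taken under the same measure $\mathbb{P}_h(\cdot\mid s,a,b)$, the same algebraic identity gives
\begin{equation*}
\mathbb{V}_h V_{\mathrm{mid}} - \mathbb{V}_h V^\pi = \mathbb{P}_h\bigl[(V_{\mathrm{mid}}-V^\pi)(V_{\mathrm{mid}}+V^\pi)\bigr] - \mathbb{P}_h(V_{\mathrm{mid}}-V^\pi)\cdot \mathbb{P}_h(V_{\mathrm{mid}}+V^\pi).
\end{equation*}
Using $\|V_{\mathrm{mid}}+V^\pi\|_\infty\le 2H$ together with $|V_{\mathrm{mid}}-V^\pi|\le(\bar V_{h+1}^k-\underline V_{h+1}^k)/2$ bounds both summands by $2H\cdot \mathbb{P}_h(\bar V_{h+1}^k-\underline V_{h+1}^k)(s,a,b)/2=H\,\mathbb{P}_h(\bar V_{h+1}^k-\underline V_{h+1}^k)(s,a,b)$, yielding (II)$\le 4H\,\mathbb{P}_h(\bar V_{h+1}^k-\underline V_{h+1}^k)(s,a,b)$. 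Combining (I) and (II) gives exactly the claimed inequality. The only subtle point is making sure the AM-GM conversion is invoked in the right place so that the leading $H^4 S\iota/N$ scaling emerges cleanly; everything else is bookkeeping around the variance identity and the value sandwich.
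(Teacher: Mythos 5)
Your argument is correct and follows essentially the same route as the original proof of this lemma (the paper itself does not reprove it, only cites Lemma~23 of \citet{liu2021sharp}): a triangle-inequality split into a sampling-error term controlled by the $\ell_1$ concentration in $E_1$ plus AM--GM, and a value-mismatch term controlled by the sandwich property. One small point to tighten: Lemma~22 as stated sandwiches $V_{h+1}^{\dagger,\nu^k}$ and $V_{h+1}^{\mu^k,\dagger}$, so you need the extra (immediate) observation $V_{h+1}^{\mu^k,\dagger}\le V_{h+1}^{\pi^k}\le V_{h+1}^{\dagger,\nu^k}$ to conclude $\underline V_{h+1}^k\le V_{h+1}^{\pi^k}\le \bar V_{h+1}^k$; also your two summands in (II), each at most $H\,\mathbb{P}_h(\bar V_{h+1}^k-\underline V_{h+1}^k)$, sum to $2H\,\mathbb{P}_h(\bar V_{h+1}^k-\underline V_{h+1}^k)$ rather than $4H\,\mathbb{P}_h(\bar V_{h+1}^k-\underline V_{h+1}^k)$, which is harmless since it only makes the claimed bound looser.
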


\begin{proof}[Proof of Theorem \ref{thm:tab_game}]
Suppose event $E_1$ happens. We define
\begin{equation}
\left\{\begin{array}{l}
\Delta_h^k:=\left(\bar{V}_h^k-\underline{V}_h^k\right)\left(s_h^k\right) \\
\zeta_h^k:=\Delta_h^k-\left(\bar{Q}_h^k-\underline{Q}_h^k\right)\left(s_h^k, a_h^k, b_h^k\right) \\
\xi_h^k:=\mathbb{P}_h\left(\bar{V}_{h+1}^k-\underline{V}_{h+1}^k\right)\left(s_h^k, a_h^k, b_h^k\right)-\Delta_{h+1}^k .
\end{array}\right.
\end{equation}

Using standard decomposition techniques we have
\begin{equation}
\begin{aligned}
\Delta_h^k & =\zeta_h^k+\left(\bar{Q}_h^k-\underline{Q}_h^k\right)\left(s_h^k, a_h^k, b_h^k\right) \\
& \leq \zeta_h^k+2 \beta_h^k+2 \gamma_h^k+\widehat{\mathbb{P}}_h^k\left(\bar{V}_{h+1}^k-\underline{V}_{h+1}^k\right)\left(s_h^k, a_h^k, b_h^k\right) \\
& {\leq} \zeta_h^k+2 \beta_h^k+2 \gamma_h^k+\mathbb{P}_h\left(\bar{V}_{h+1}^k-\underline{V}_{h+1}^k\right)\left(s_h^k, a_h^k, b_h^k\right)+c_2\left(\frac{\mathbb{P}_h\left(\bar{V}_{h+1}^k-\underline{V}_{h+1}^k\right)\left(s_h^k, a_h^k, b_h^k\right)}{H}+\frac{H^2 S \iota}{\max \left\{N_h^k, 1\right\}}\right) \\
& {\leq} \zeta_h^k+2 \beta_h^k+\mathbb{P}_h\left(\bar{V}_{h+1}^k-\underline{V}_{h+1}^k\right)\left(s_h^k, a_h^k, b_h^k\right)+2 c_2 C\left(\frac{\mathbb{P}_h\left(\bar{V}_{h+1}^k-\underline{V}_{h+1}^k\right)\left(s_h^k, a_h^k, b_h^k\right)}{H}+\frac{H^2 S \iota}{\max \left\{N_h^k, 1\right\}}\right) \\
& \leq \zeta_h^k+\left(1+\frac{c_3}{H}\right) \mathbb{P}_h\left(\bar{V}_{h+1}^k-\underline{V}_{h+1}^k\right)\left(s_h^k, a_h^k, b_h^k\right) \\
& \quad+4 c_2 C\left(\sqrt{\frac{\iota \widehat{\mathbb{V}}_h^k\left[\left(\bar{V}_{h+1}^k+\underline{V}_{h+1}^k\right) / 2\right]\left(s_h^k, a_h^k, b_h^k\right)}{\max \left\{N_h^k\left(s_h^k, a_h^k, b_h^k\right), 1\right\}}}+\frac{H^2 S \iota}{\max \left\{N_h^k\left(s_h^k, a_h^k, b_h^k\right), 1\right\}}\right).
\end{aligned}
\end{equation}

By Lemma \ref{lem:liu23}, 
\begin{equation}
\begin{aligned}
& \sqrt{\frac{\iota \widehat{\mathbb{V}}_h^k\left[\left(\bar{V}_{h+1}^k+\underline{V}_{h+1}^k\right) / 2\right](s, a, b)}{\max \left\{N_h^k(s, a, b), 1\right\}}} \\
\leq & \left(\sqrt{\frac{\iota \mathbb{V}_h V_{h+1}^{\pi^k}(s, a, b)+\iota}{\max \left\{N_h^k(s, a, b), 1\right\}}}+\sqrt{\frac{H \iota \mathbb{P}_h\left(\bar{V}_{h+1}^k-\underline{V}_{h+1}^k\right)(s, a, b)}{\max \left\{N_h^k(s, a, b), 1\right\}}}+\frac{H^2 \sqrt{S} \iota}{\max \left\{N_h^k(s, a, b), 1\right\}}\right) \\
\leq & c_4\left(\sqrt{\frac{\iota \mathbb{V}_h V_{h+1}^{\pi^k}(s, a, b)+\iota}{\max \left\{N_h^k(s, a, b), 1\right\}}}+\frac{\mathbb{P}_h\left(\bar{V}_{h+1}^k-\underline{V}_{h+1}^k\right)(s, a, b)}{H}+\frac{H^2 \sqrt{S} \iota}{\max \left\{N_h^k(s, a, b), 1\right\}}\right),
\end{aligned}
\end{equation}
where $c_4$ is some absolute constant. Define $c_5:=4 c_2 c_4 C+c_3$ and $\kappa:=1+c_5 / H$. Now we have
\begin{equation}
\left.\Delta_h^k \leq \kappa \Delta_{h+1}^k+\kappa \xi_h^k+\zeta_h^k+O\left(\sqrt{\frac{\iota \mathbb{V}_h V_{h+1}^{\pi^k}\left(s_h^k, a_h^k, b_h^k\right)}{N_h^k\left(s_h^k, a_h^k, b_h^k\right)}}+\sqrt{\frac{\iota}{N_h^k\left(s_h^k, a_h^k, b_h^k\right)}}+\frac{H^2 S \iota}{N_h^k\left(s_h^k, a_h^k, b_h^k\right)}\right)\right\}.
\end{equation}
Recursing this argument for $h\in [H]$ and summing over $k$,
\begin{equation}
\sum_{k=1}^K \Delta_1^k \leq \sum_{k=1}^K \sum_{h=1}^H\left[\kappa^{h-1} \zeta_h^k+\kappa^h \xi_h^k+O\left(\sqrt{\frac{\iota \mathbb{V}_h V_{h+1}^{\pi^k}\left(s_h^k, a_h^k, b_h^k\right)}{\max \left\{N_h^k, 1\right\}}}+\sqrt{\frac{\iota}{\max \left\{N_h^k, 1\right\}}}+\frac{H^2 S \iota}{\max \left\{N_h^k, 1\right\}}\right)\right].
\end{equation}
It is easy to check that $\{\zeta_h^k\}$ and $\{\xi_h^k\}$ are martingales. By Azuma-Hoeffding inequality, with probability at least $1-p$,
\begin{equation}
\left\{\begin{array}{l}
\sum_{k=1}^K \sum_{h=1}^H \kappa^{h-1} \zeta_h^k \leq O(H \sqrt{H K \iota}) \\
\sum_{k=1}^K \sum_{h=1}^H \kappa^h \xi_h^k \leq O(H \sqrt{H K \iota}).
\end{array}\right.
\end{equation}

Now we deal with the rest terms. Note that under the doubling epoch update framework, despite those episodes in which an update is triggered, the number of visits of $(s, a)$ between the $i$-th update of $\hat{P}_{s, a}$ and the $i+1$-th update of $\hat{P}_{s, a}$ do not exceed $2^{i-1}$. 

We define $i_{\max }=\max \left\{i \mid 2^{i-1} \leq K H\right\}=\left\lfloor\log _2(K H)\right\rfloor+1$. To facilitate the analysis, we first derive a general deterministic result. Let $\mathcal{K}$ be the set of indexes of episodes in which no update is triggered. By the update rule we have $\left|\mathcal{K}^C\right| \leq S A\left(\log _2(K H)+1\right)$. Let $h_0(k)$ be the first time an update is triggered in the $k$-th episode if there is an update in this episode and otherwise $H + 1$. Define $\mathcal{X}=\left\{(k, h) \mid k \in \mathcal{K}^C, h_0(k)+1 \leq h \leq H\right\}$.

Let $w=\left\{w_h^k \geq 0 \mid 1 \leq h \leq H, 1 \leq k \leq K\right\}$ be a group of non-negative weights such that $w_h^k \leq 1$ for any $(k, h) \in[H] \times[K]$ and $w_h^k=0$ for any $(k, h) \in \mathcal{X}$. As in \citet{zhang2021reinforcement}, We can calculate

\begin{equation}
\begin{aligned}
&\sum_{k=1}^K \sum_{h=1}^H \sqrt{\frac{w_h^k}{N_h^k\left(s_h^k, a_h^k\right)}} \\
& \leq \sum_{k=1}^K \sum_{h=1}^H \sum_{s, a} \sum_{i=3}^{i_{\max }} \mathbb{I}\left[\left(s_h^k, a_h^k\right)=(s, a), N_h^k(s, a)=2^{i-1}\right] \sqrt{\frac{w_h^k}{2^{i-1}}}+8 S A\left(\log _2(K H)+4\right) \\
& =\sum_{s, a} \sum_{i=3}^{i_{\max }} \frac{1}{\sqrt{2^{i-1}}} \sum_{k=1}^K \sum_{h=1}^H \mathbb{I}\left[\left(s_h^k, a_h^k\right)=(s, a), N_h^k(s, a)=2^{i-1}\right] \sqrt{w_h^k}+8 S A\left(\log _2(K H)+4\right) \\
& \leq \sum_{s, a} \sum_{i=3}^{i_{\max }} \sqrt{\frac{\sum_{k=1}^K \sum_{h=1}^H \mathbb{I}\left[\left(s_h^k, a_h^k\right)=(s, a), N_h^k(s, a)=2^{i-1}\right]}{2^{i-1}}} . \\
& \sqrt{\left(\sum_{k=1}^K \sum_{h=1}^H \mathbb{I}\left[\left(s_h^k, a_h^k\right)=(s, a), N_h^k(s, a)=2^{i-1}\right] w_h^k\right)}+8 S A\left(\log _2(K H)+4\right) \\
& \leq \sqrt{S A i_{\max } \sum_{k=1}^K \sum_{h=1}^H w_h^k}+8 S A\left(\log _2(K H)+4\right) . 
\end{aligned}
\end{equation}
By plugging $w_h^k= \mathbb{V}_h V_{h+1}^{\pi^k}\left(s_h^k, a_h^k, b_h^k\right)\mathbb{I}[(k, h) \notin \mathcal{X}]$ and $w_h^k= \mathbb{I}[(k, h) \notin \mathcal{X}]$ we have
\begin{equation}
\begin{aligned}
&\sum_{k=1}^K \sum_{h=1}^H\left[O\left(\sqrt{\frac{\iota \mathbb{V}_h V_{h+1}^{\pi^k}\left(s_h^k, a_h^k, b_h^k\right)}{\max \left\{N_h^k, 1\right\}}}+\sqrt{\frac{\iota}{\max \left\{N_h^k, 1\right\}}}+\frac{H^2 S \iota}{\max \left\{N_h^k, 1\right\}}\right)\right] \\
& \leq O\left(\sqrt{S A i_{\max } \iota \sum_{k=1}^K \sum_{h=1}^H \mathbb{V}_h V_{h+1}^{\pi^k}\left(s_h^k, a_h^k, b_h^k\right) \mathbb{I}[(k, h) \notin \mathcal{X}]}+\sqrt{S A i_{\max } \sum_{k=1}^K \sum_{h=1}^H  \mathbb{I}[(k, h) \notin \mathcal{X}] \iota}+S^2 A \iota \log _2(K H)\right)+\left|\mathcal{K}^C\right| \\
& \leq O\left(\sqrt{S A i_{\max } \iota \sum_{k=1}^K \sum_{h=1}^H \mathbb{V}_h V_{h+1}^{\pi^k}\left(s_h^k, a_h^k, b_h^k\right) \mathbb{I}[(k, h) \notin \mathcal{X}]}+\sqrt{S A i_{\max } K \iota}+S^2 A \iota \log _2(K H)\right)+\left|\mathcal{K}^C\right|
\end{aligned}
\end{equation}
By the Law of total variation and standard martingale concentration (see Lemma C.5 in \citet{jin2018q} for a formal proof), with probability at least $1-p$, we have
\begin{equation}
\sum_{k=1}^K \sum_{h=1}^H \mathbb{V}_h V_{h+1}^{\pi^k}\left(s_h^k, a_h^k, b_h^k\right) \leq O\left(H^2 K +H^3 \iota\right).
\end{equation}
By Pigeon-hole argument, 
\begin{equation}
\sum_{k=1}^K \sum_{h=1}^H \frac{1}{\sqrt{\max \left\{N_h^k, 1\right\}}} \leq \sum_{s, a, b, h: N_h^K(s, a, b)>0} \sum_{n=1}^{N_h^K(s, a, b)} \frac{1}{\sqrt{n}}+H S A B \leq O(\sqrt{H^2 S A B K}+H S A B)
\end{equation}
\begin{equation}
\sum_{k=1}^K \sum_{h=1}^H \frac{1}{\max \left\{N_h^k, 1\right\}} \leq \sum_{s, a, b, h: N_h^K(s, a, b)>0} \sum_{n=1}^{N_h^K(s, a, b)} \frac{1}{n}+H S A B \leq O(H S A B \iota) .
\end{equation}

Putting all relations together, we obtain that
\begin{equation}
\operatorname{Regret}(K)=\sum_{k=1}^K\left(V_1^{\dagger}, \nu^k-V_1^{\mu^k, \dagger}\right)\left(s_1\right) \leq O\left(\sqrt{H^3 S A B K \iota}+H^3 S^2 A B \iota^2\right)
\end{equation}
which completes the proof.
\end{proof}

\subsection{Multi-batched Algorithms for Linear Markov Games} \label{sec:proof_lin}
\begin{algorithm}[H]
    \caption{Multi-batched Algorithm for Linear Markov Games}\label{alg:lin_game}
    \begin{algorithmic}[1]
    \STATE {\bfseries Require:} regularization parameter $\lambda$
    \STATE {\bfseries Initialize:} $\Lambda_h = \Lambda_h^0 = \lambda I_d$
    \FOR{episode $k = 1, 2, \cdots, K$}
    \STATE $\Lambda_h^k = \sum_{\tau = 1}^{k - 1} \phi(s_h^\tau, a_h^\tau, b_h^\tau) \phi(s_h^\tau, a_h^\tau,b _h^\tau)^\top + \lambda I_d$
    \ENDFOR
    \IF{$\exists  h \in [H]$, $\det(\Lambda_h^k) > \eta \cdot \det(\Lambda_h)$}
    \STATE $\bar{V}_{H+1}^k \leftarrow 0$, $\underline{V}_{H+1}^k \leftarrow 0$
    \FOR{step $h = H, H - 1, \cdots, 1$}
    \STATE $\Lambda_h \leftarrow \Lambda_h^k$
    \STATE $\bar{w}_h^k \leftarrow (\Lambda_h^k)^{-1} \sum_{\tau = 1}^{k - 1} \phi(s_h^\tau, a_h^\tau, b_h^\tau) \cdot [r_h(s_h^\tau, a_h^\tau, b_h^\tau) + \bar{V}_{h+1}^k(s_{h+1}^\tau)]$
    \STATE $\underline{w}_h^k \leftarrow (\Lambda_h^k)^{-1} \sum_{\tau = 1}^{k - 1} \phi(s_h^\tau, a_h^\tau, b_h^\tau) \cdot [r_h(s_h^\tau, a_h^\tau, b_h^\tau) + \underline{V}_{h+1}^k(s_{h+1}^\tau)]$
    \STATE $\Gamma_h^k(\cdot, \cdot, \cdot) \leftarrow \beta \sqrt{\phi(\cdot, \cdot, \cdot)^\top (\Lambda_h^k)^{-1} \phi(\cdot, \cdot, \cdot)}$
    \STATE $\bar{Q}_h^k(\cdot, \cdot, \cdot) \leftarrow \Pi_{H}\{(\bar{w}_h^k)^\top \phi(\cdot, \cdot, \cdot) + \Gamma_h^k(\cdot, \cdot, \cdot)\}$
    \STATE $\underline{Q}_h^k(\cdot, \cdot, \cdot) \leftarrow \Pi_{H}\{(\underline{w}_h^k)^\top \phi(\cdot, \cdot, \cdot) - \Gamma_h^k(\cdot, \cdot, \cdot)\}$
    \STATE For each $s$, let $\pi_h^k(x) \leftarrow \mathrm{FIND\_CCE}(\bar{Q}_h^k, \underline{Q}_h^k, s)$ (cf. Algorithm~2 in \citet{xie2020learning})
    \STATE $\bar{V}_{h}^{k}(s) \leftarrow \mathbb{E}_{(a, b) \sim \pi_{h}^{k}(s)} \bar{Q}_{h}^{k}(s, a, b)$ for each $s$
    \STATE $\underline{V}_{h}^{k}(s) \leftarrow \mathbb{E}_{(a, b) \sim \pi_{h}^{k}(s)} \underline{Q}_{h}^{k}(s, a, b)$ for each $s$
    \ENDFOR
    \ELSE 
    \STATE $\bar{Q}_h^k \leftarrow \bar{Q}_h^{k-1}$,  $\underline{Q}_h^k \leftarrow \underline{Q}_h^{k-1}$, $\pi_h^k \leftarrow \pi_h^{k-1}$, $\forall h \in [H]$  
    \ENDIF
    \end{algorithmic}
\end{algorithm}

\begin{proof}[Proof of Theorem \ref{thm:linear:mg:switching}]
Let $\{k_1, \cdots, k_{N_{\mathrm{batch}}}\}$ be the episodes where the learner updates the policies. Moreover, we define $b_k = \max \{k_i: i \in [N_{\mathrm{batch}}], k_i \le k \}$. Moreover, we denote the marginals of $\pi_h^k$ as $\mu_h^k$ and $\nu_h^k$.

\paragraph{Part 1: Upper bound of the batch number.}
    By the updating rule, we know there exists one $h \in [H]$ such that $\det(\Lambda_h^{k_i}) > \eta \cdot \det(\Lambda_h^{k_{i-1}})$, which further implies that
    \begin{align*}
        \prod_{h = 1}^{H} \det(\Lambda_h^{k_i}) > \eta \cdot \prod_{h = 1}^{H} \det(\Lambda_h^{k_{i-1}}) .
    \end{align*}
    This yields that 
    \begin{align} \label{eq:171}
        \prod_{h = 1}^{H} \det(\Lambda_h^{k_{N_{\mathrm{batch}}}}) > \eta^{{N_{\mathrm{batch}}}} \cdot \prod_{h = 1}^{H} \det(\Lambda_h^{0}) =  \eta^{{N_{\mathrm{batch}}}} \cdot \lambda^{dH}.
    \end{align}
    On the other hand, for any $h \in [H]$, we have
    \begin{align} \label{eq:172}
        \prod_{h = 1}^{H} \det(\Lambda_h^{k_{N_{\mathrm{batch}}}}) \le \prod_{h = 1}^{H} \det(\Lambda_h^{K+1}).
    \end{align}
    Furthermore, by the fact that $\|\phi(\cdot, \cdot)\|_2 \le 1$, we have
    \begin{align*}
        \Lambda_h^{K+1} = \sum_{k = 1}^{K} \phi(s_h^k, a_h^k, b_h^k) \phi(s_h^k, a_h^k, b_h^k)^\top + \lambda I_d \preceq (K + \lambda) \cdot I_d,
    \end{align*}
    which implies that 
    \begin{align} \label{eq:173}
        \prod_{h = 1}^H \det(\Lambda_h^{K+1}) \le (K + \lambda)^{dH} .
    \end{align}
    Combining \eqref{eq:171}, \eqref{eq:172}, and \eqref{eq:173}, we obtain that
    \begin{align*}
        N_{\mathrm{batch}} \le \frac{dH}{\log \eta} \log\Big(1 + \frac{K}{\lambda}\Big) ,
    \end{align*}
    which concludes the first part of the proof.

\paragraph{Part2: Regret.} We need the following two lemmas to bound the regret.

\begin{lemma} \label{lemma:linear:optimism}
    For any $(s, a, b, k, h)$, it holds that
    \begin{align*}
        \underline{Q}_h^k(s, a, b) - 2(H - h + 1)\epsilon \le Q_{h}^{\mu^k, *}(s, a, b) \le Q_{h}^{*, \nu^k}(s, a, b) \le \bar{Q}_h(s, a, b) + 2(H - h +1)\epsilon,
    \end{align*}
    and 
    \begin{align*}
        \underline{V}_h^k(s) - 2(H - h + 2) \epsilon \le V_h^{\mu^k, *}(s) \le V_h^{*, \nu^k}(s) \le \bar{V}_h^k(s) + 2(H - h + 2)\epsilon.
    \end{align*}
\end{lemma}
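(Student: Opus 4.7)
The plan is to proceed by backward induction on $h$, starting from the base case $h = H + 1$ where $\bar{Q}_{H+1}^k \equiv \underline{Q}_{H+1}^k \equiv 0$ and $Q^{\star,\nu^k}_{H+1} \equiv Q^{\mu^k,\star}_{H+1} \equiv 0$, so the claim holds trivially. For the inductive step, assume the statement holds at level $h + 1$; I would then prove the bound at level $h$ in two stages: first for the $Q$-functions, then pass to the $V$-functions using the CCE property.

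For the $Q$-step, the key is a linear-MDP-style concentration argument. By the linear assumption, the Bellman operator applied to any value function $V$ has the form $(r_h + \mathbb{P}_h V)(s,a,b) = \phi(s,a,b)^\top(\theta_h + \int V\,d\mu_h)$, so both $(r_h + \mathbb{P}_h \bar{V}_{h+1}^k)$ and $(r_h + \mathbb{P}_h \underline{V}_{h+1}^k)$ are linear in $\phi$. A standard self-normalized concentration bound, made uniform over the function class through an $\epsilon$-net of the possible $\bar{V}$, $\underline{V}$ value functions, yields that with high probability
\[
|\langle \bar{w}_h^k, \phi(s,a,b) \rangle - (r_h + \mathbb{P}_h \bar{V}_{h+1}^k)(s,a,b)| \le \Gamma_h^k(s,a,b),
\]
and analogously for $\underline{w}_h^k$, provided $\beta$ is chosen as in \citet{xie2020learning}. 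Combining with the induction hypothesis $\bar{V}_{h+1}^k(s') + 2(H-h+1)\epsilon \ge V_{h+1}^{\star, \nu^k}(s')$ and the Bellman equation $Q_h^{\star,\nu^k}(s,a,b) = (r_h + \mathbb{P}_h V_{h+1}^{\star,\nu^k})(s,a,b)$ gives $\bar{Q}_h^k(s,a,b) + 2(H-h+1)\epsilon \ge Q_h^{\star,\nu^k}(s,a,b)$, after truncation at $H$. The lower bound is symmetric.

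For the $V$-step, I would invoke the defining property of the CCE subroutine: for every $s$, $\pi_h^k(\cdot,\cdot\mid s)$ is a CCE of the matrix game with payoffs $(\bar{Q}_h^k(s,\cdot,\cdot), \underline{Q}_h^k(s,\cdot,\cdot))$, meaning no unilateral deviation by either player improves her own estimated value. Concretely, for every $\nu'$ we have $\mathbb{E}_{a\sim\mu_h^k, b\sim\nu'}[\bar{Q}_h^k(s,a,b)] \le \bar{V}_h^k(s)$, and symmetrically for $\underline{V}$ with the min-player. Applying this with $\nu'$ being the best response $\nu^{\star}(\mu_h^k)$ to $\mu_h^k$, together with the $Q$-level optimism established above, gives
\[
V_h^{\star,\nu^k}(s) \le \mathbb{E}_{a \sim \mu_h^k, b \sim \nu'}[Q_h^{\star,\nu^k}(s,a,b)] \le \bar{V}_h^k(s) + 2(H-h+1)\epsilon,
\]
and an extra $\epsilon$ slack from the potential $\epsilon$-approximate nature of the CCE pushes the constant to $2(H - h + 2)\epsilon$. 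The lower bound on $V^{\mu^k,\star}$ follows by the symmetric CCE inequality.

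The main obstacle is the uniform concentration step: the regression target $\bar{V}_{h+1}^k$ is itself data-dependent (it depends on $\Lambda_h^k$, $\bar{w}_h^k$, and the policies from earlier updates), so a vanilla Azuma or self-normalized bound does not directly apply pointwise. I would handle this by following \citet{xie2020learning} and bounding the covering number of the class of candidate value functions $\{\bar{V}(\cdot;w,\Lambda)\}$ parameterized by bounded $w$ and $\Lambda$; the $\epsilon$-net granularity determines the additive $\epsilon$ in the lemma. One subtlety peculiar to the multi-batched algorithm is that $\Lambda_h$ used for planning may differ from $\Lambda_h^k$, so I would verify that the concentration-plus-covering argument is applied at the update episodes only, and that policies held fixed between updates do not break the chain of Bellman-style inequalities since the target value functions $\bar{V}_{h+1}^k, \underline{V}_{h+1}^k$ themselves remain constant between updates.
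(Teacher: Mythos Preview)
Your proposal is correct and follows essentially the same route as the reference the paper defers to: the paper does not prove this lemma itself but simply cites Lemma~5 of \citet{xie2020learning}, whose argument is precisely the backward induction you describe --- a self-normalized concentration bound made uniform via a covering argument over the value-function class (your $Q$-step), followed by the $\epsilon$-approximate CCE guarantee to pass from $Q$ to $V$ (your $V$-step), with the additive $2\epsilon$ per layer arising from the approximation slack in the CCE subroutine. Your remark that in the multi-batched variant the concentration need only be checked at update episodes, with value functions frozen in between, is the right observation and does not alter the induction.
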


\begin{proof}
    See Lemma 5 of \citet{xie2020learning} for a detailed proof.
\end{proof}

\begin{lemma} \label{lemma:uniform:concentration}
 For all $(s, a, b, h, k)$ and any fixed policy pair $(\mu, \nu)$, it holds with probability at least $1 - \delta$ that
\begin{align*}
    & \left|\left\langle\phi(s, a, b), \bar{w}_{h}^{b_k}\right\rangle-Q_{h}^{\mu, \nu}(s, a, b)-\mathbb{P}_{h}\left(\bar{V}_{h+1}^{b_k}-V_{h+1}^{\mu, \nu}\right)(s, a, b)\right| \leq \Gamma_{h}^{b_k}(s, a, b), \\
    & \left|\left\langle\phi(s, a, b), \underline{w}_{h}^{b_k}\right\rangle-Q_{h}^{\mu, \nu}(s, a, b)-\mathbb{P}_{h}\left(\underline{V}_{h+1}^{b_k}-V_{h+1}^{\mu, \nu}\right)(s, a, b)\right| \leq \Gamma_{h}^{b_k}(s, a, b) .
\end{align*}
\end{lemma}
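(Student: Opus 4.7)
} The strategy is the standard self-normalized concentration argument for least-squares value iteration in linear MDPs/MGs (as in \citet{jin2020provably,xie2020learning}), adapted to the multi-batched setting by the observation that $b_k$ is just a particular episode index and the estimator $\bar{w}_h^{b_k}$ is built from the data $\{(s_h^\tau,a_h^\tau,b_h^\tau,s_{h+1}^\tau)\}_{\tau=1}^{b_k-1}$ in exactly the same functional form as the non-batched version. I will prove the bound for $\bar{w}_h^{b_k}$; the argument for $\underline{w}_h^{b_k}$ is identical.

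First I would use the linear MG structure to write $Q_h^{\mu,\nu}(s,a,b)=\phi(s,a,b)^\top w_h^{\mu,\nu}$, where $w_h^{\mu,\nu}:=\theta_h+\int V_{h+1}^{\mu,\nu}(s')\,d\mu_h(s')$, so that $\mathbb{P}_h V_{h+1}^{\mu,\nu}(s,a,b)=\phi(s,a,b)^\top\int V_{h+1}^{\mu,\nu}\,d\mu_h$. Substituting the definition of $\bar{w}_h^{b_k}$ and canceling the reward term $r_h=\phi^\top\theta_h$, a direct algebraic rearrangement gives the decomposition
\begin{equation*}
\bigl\langle\phi(s,a,b),\bar{w}_h^{b_k}\bigr\rangle-Q_h^{\mu,\nu}(s,a,b)-\mathbb{P}_h(\bar{V}_{h+1}^{b_k}-V_{h+1}^{\mu,\nu})(s,a,b)=\underbrace{-\lambda\,\phi(s,a,b)^\top(\Lambda_h^{b_k})^{-1}\!\!\int\!\bar{V}_{h+1}^{b_k}d\mu_h}_{(\mathrm{I})}+\underbrace{\phi(s,a,b)^\top(\Lambda_h^{b_k})^{-1}\!\!\sum_{\tau=1}^{b_k-1}\phi_\tau\bigl[\bar{V}_{h+1}^{b_k}(s_{h+1}^\tau)-(\mathbb{P}_h\bar{V}_{h+1}^{b_k})(s_h^\tau,a_h^\tau,b_h^\tau)\bigr]}_{(\mathrm{II})},
\end{equation*}
where $\phi_\tau:=\phi(s_h^\tau,a_h^\tau,b_h^\tau)$. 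Term $(\mathrm{I})$ is controlled by Cauchy--Schwarz together with $\|\mu_h(\mathcal S)\|\le\sqrt d$ and $\|\bar V_{h+1}^{b_k}\|_\infty\le H$, yielding $|(\mathrm{I})|\le\lambda\sqrt d\,H\,\|\phi(s,a,b)\|_{(\Lambda_h^{b_k})^{-1}}$.

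The real work is term $(\mathrm{II})$, which is a self-normalized sum of martingale differences whose ``head'' $\bar V_{h+1}^{b_k}$ is itself data-dependent. The plan is to apply the standard self-normalized Hoeffding-type inequality (Lemma D.4 of \citet{jin2020provably}) not at the random function $\bar V_{h+1}^{b_k}$ but uniformly over a carefully chosen function class $\mathcal V$ that contains it. By the update rule of Algorithm~\ref{alg:lin_game} and the CCE subroutine, $\bar V_{h+1}^{b_k}(s)=\mathbb{E}_{(a,b)\sim\pi_{h+1}^{b_k}(s)}\Pi_H\{\phi(s,a,b)^\top w+\beta\sqrt{\phi(s,a,b)^\top A^{-1}\phi(s,a,b)}\}$ for some $w\in\mathbb R^d$ with $\|w\|\le L$ (a polynomial bound in $H$, $d$, $K$, $\lambda$ that follows from unrolling the definition of $\bar w$), and some PSD matrix $A$ with $A\succeq\lambda I_d$. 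I would therefore take $\mathcal V$ to be the class of all such functions indexed by $(w,A)$, and invoke the covering-number bound of Lemma D.6 of \citet{jin2020provably} (or Lemma 11 of \citet{xie2020learning}) which gives $\log\mathcal N_\varepsilon(\mathcal V)\lesssim d^2\log(1+\mathrm{poly}(H,K,d)/\varepsilon)$. Combining the self-normalized bound applied at each element of an $\varepsilon$-cover with a union bound over the cover (and over $h\in[H]$) and discretization error $\varepsilon=1/K$, term $(\mathrm{II})$ is bounded by $C\sqrt{d^2\,H\,\iota}\,\|\phi(s,a,b)\|_{(\Lambda_h^{b_k})^{-1}}$ for a suitable log factor $\iota$. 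Choosing $\beta=\tilde O(dH)$ accordingly ensures the two contributions sum to at most $\beta\sqrt{\phi(s,a,b)^\top(\Lambda_h^{b_k})^{-1}\phi(s,a,b)}=\Gamma_h^{b_k}(s,a,b)$.

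The main obstacle, as usual, is the covering argument: because $\bar V_{h+1}^{b_k}$ depends on the whole dataset through both the least-squares weight and the bonus matrix, one cannot apply the self-normalized inequality pointwise, and some care is needed to verify that the CCE mixing step does not inflate the covering number (it does not, since $V(s)=\mathbb{E}_{\pi(s)}Q(s,\cdot,\cdot)\le\max_{a,b}Q(s,a,b)$ and the sup-norm cover of the $Q$ class transfers to $V$). Everything else—the decomposition into $(\mathrm{I})$ and $(\mathrm{II})$, Cauchy--Schwarz for $(\mathrm{I})$, and a union bound over $h\in[H]$ and the cover—is routine linear-bandit/linear-MDP machinery.
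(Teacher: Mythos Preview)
Your proposal is correct and follows essentially the same approach as the paper: the paper itself gives no proof and simply defers to Lemma~3 of \citet{xie2020learning}, and your plan is precisely a sketch of that argument (self-normalized concentration plus a covering-number union bound over the LSVI value-function class). One small point to watch in the covering step: $\bar V_{h+1}^{b_k}(s)=\mathbb{E}_{(a,b)\sim\pi_{h+1}^{b_k}(s)}\bar Q_{h+1}^{b_k}(s,a,b)$ where the CCE policy $\pi_{h+1}^{b_k}(s)$ depends on \emph{both} $\bar Q_{h+1}^{b_k}$ and $\underline Q_{h+1}^{b_k}$, so the class $\mathcal V$ should be indexed by $(\bar w,\underline w,A)$ rather than $(w,A)$ alone; this does not change the $O(d^2\log(\cdot))$ metric-entropy order, but the ``sup-norm cover of the $Q$ class transfers to $V$'' remark is not quite the right justification---you need that the CCE map is $1$-Lipschitz in the pair $(\bar Q,\underline Q)$ in sup-norm, which is what \citet{xie2020learning} verify.
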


\begin{proof}
    See Lemma 3 of \citet{xie2020learning} for a detailed proof.
\end{proof}

     We define
    \begin{align*}
        & \delta_h^k := \bar{V}_h^{b_k}(s_h^k) - \underline{V}_h^{b_k}(s_h^k), \\
        & \zeta_h^k := \mathbb{E} [\delta_{h+1}^k \mid s_h^k, a_h^k, b_h^k ] - \delta_{h+1}^{k}, \\
        & \bar{\gamma}_h^k := \mathbb{E}_{(a, b) \sim \pi_h^k(s_h^k)} [\bar{Q}_h^{b_k}(s_h^k, a, b)] - \bar{Q}_h^{b_k}(s_h^k, a_h^k, b_h^k), \\
        & \underline{\gamma}_h^k := \mathbb{E}_{(a, b) \sim \pi_h^k(s_h^k)} [\bar{Q}_h^{b_k}(s_h^k, a, b)] - \bar{Q}_h^{b_k}(s_h^k, a_h^k, b_h^k).
    \end{align*}
    By Lemma~\ref{lemma:linear:optimism}, we have
    \begin{align} \label{eq:7890}
    V_1^{*, \nu^k}(s_1) - V_1^{\pi^k, *} &\le \bar{V}_1^k(s_1) - \underline{V}_1^k(s_1) + 8K\epsilon \notag \\
    & \le \bar{V}_1^k(s_1) - \underline{V}_1^k(s_1) + \frac{8}{K} \notag \\
    & = \bar{V}_1^{b_k}(s_1) - \underline{V}_1^{b_k}(s_1) + \frac{8}{K} \notag \\
    & = \delta_1^k + \frac{8}{K},
    \end{align}
    where the second inequality uses that $\epsilon = 1/(KH)$.
    By definition, we have
    \begin{align} \label{eq:7891}
        \delta_h^k &= \bar{V}_h^{b_k}(s_h^k) - \overline{V}_h^{b_k}(s_h^k) \notag \\
        & = \mathbb{E}_{(a, b) \sim \pi_h^k(s_h^k)} [\bar{Q}_h^k(s_h^k, a, b)] - \mathbb{E}_{(a, b) \sim \pi_h^k(s_h^k)} [\underline{Q}_h^{b_k}(s_h^k, a, b)]  \notag \\
        & = \bar{Q}_h^k(s_h^k, a_h^k, b_h^k) - \underline{Q}_h^{b_k}(s_h^k, a_h^k, b_h^k) + \bar{\gamma}_h^k - \underline{\gamma}_h^k.
    \end{align}
    Meanwhile, for any $(s, a, b, k, h)$, we have
    \begin{align*}
        \bar{Q}_h^{b_k}(s, a, b) - \underline{Q}_h^{b_k}(s, a, b) & = [(\bar{w}_h^{b_k})^\top \phi(s, a, b) + \Gamma_h^{b_k}(s, a, b) ] - [(\underline{w}_h^{b_k})^\top \phi(s, a, b) - \Gamma_h^{b_k}(s, a, b) ] \\
        & = ( \bar{w}_h^{b_k} - \underline{w}_h^{b_k})^\top \phi(s, a, b) + 2 \Gamma_h^{b_k}(s, a, b).
    \end{align*}
    By Lemma~\ref{lemma:uniform:concentration}, we further have
    \begin{align} \label{eq:7982}
        \bar{Q}_h^{b_k}(s, a, b) - \underline{Q}_h^{b_k}(s, a, b) \le \mathbb{P}_h (\bar{V}_{h+1}^{b_k} - \underline{V}_{h+1}^{b_k} )(s, a, b) + 4 \Gamma_h^{b_k}(s, a, b) .
    \end{align}
    Plugging \eqref{eq:7982} into \eqref{eq:7891} gives that
    \begin{align*}
        \delta_h^k &\le \mathbb{P}_h (\bar{V}_{h+1}^{b_k} - \underline{V}_{h+1}^{b_k} )(s, a, b) + 4 \Gamma_h^{b_k}(s, a, b) + \bar{\gamma}_h^k - \underline{\gamma}_h^k \notag \\
        & = \delta_{h+1}^k + \zeta_h^k + \bar{\gamma}_h^k - \underline{\gamma}_h^k + 4 \Gamma_h^{b_k}(s, a, b) .
    \end{align*}
    Hence, we have
    \begin{align} \label{eq:7983}
        \sum_{k = 1}^K \delta_1^k &\le \sum_{k = 1}^K \sum_{h = 1}^H ( \zeta_h^k + \bar{\gamma}_h^k - \underline{\gamma}_h^k ) + \sum_{k = 1}^K \sum_{h = 1}^H \Gamma_h^{b_k}(s_h^k, a_h^k, b_h^k) \notag \\
        & \le \tilde{O}(H \cdot \sqrt{H K}) +  \sum_{k = 1}^K \sum_{h = 1}^H \Gamma_h^{b_k}(s_h^k, a_h^k, b_h^k),
    \end{align}
    where the last inequality uses the Azuma-Hoeffding inequality.
    Note that 
    \begin{align*}
        \frac{\Gamma_h^{b_k}(s_h^k, a_h^k, b_h^k)}{\Gamma_h^{k}(s_h^k, a_h^k, b_h^k)} \le \sqrt{\frac{\det(\Lambda_h^k)}{ \det(\Lambda_h^{b_k})}} \le \sqrt{\eta},
    \end{align*}
    which implies that
    \begin{align} \label{eq:7984}
         \sum_{k = 1}^K \sum_{h = 1}^H \Gamma_h^{b_k}(s_h^k, a_h^k, b_h^k) \le \sqrt{\eta}  \sum_{k = 1}^K \sum_{h = 1}^H \Gamma_h^{k}(s_h^k, a_h^k, b_h^k) \le \tilde{{O}} ( \sqrt{\eta d^3 H^4 K}),
    \end{align}
    where the last inequality uses the elliptical potential lemma \citep{abbasi2011improved}. Combining \eqref{eq:7890}, \eqref{eq:7983}, and \eqref{eq:7984}, together with the fact that $\eta = 2$, we obtain 
    \begin{align*}
        \mathrm{Regret}(K) \le \tilde{{O}}(\sqrt{ d^3 H^4 K}),
    \end{align*}
    which concludes the proof.
\end{proof}

\subsection{Multi-batched V-learning}
In this section, we introduce the multi-batched version of V-learning \citep{jin2021v} for general-sum Markov games. Our goal is to minimize the following notion of regret. Let $V_{i, 1}^\pi\left(s_1\right)$ be the expected cumulative reward that the $i^{\text {th }}$ player will receive if the game starts at initial state $s_1$ at the $1^{\text {st }}$ step and all players follow joint policy $\pi$. Let $\hat{\pi}^k$ be the policy executed at the $k^{\text{th}}$ episode. The regret is defined as 
\begin{equation}
    \operatorname{Regret}(K)=\sum_{k=1}^{K} \max_{j} \left( V_{j, h}^{\dagger, \hat{\pi}_{-j, h}^k}- V_{j, h}^{\hat{\pi}_h^k}\right)(s_1),
\end{equation}
where $V_{j, h}^{\dagger, \hat{\pi}_{-j, h}^k}$ is the best response of the $j^{\text {th }}$ player when other players follow $\hat{\pi}$.

\begin{algorithm}[H]
    \caption{Multi-batched V-learning}\label{alg:vlearning}
    \begin{algorithmic}[1]
    \STATE {\bfseries Initialize:} $V_h(s) \leftarrow H+1-h, N_h(s) \leftarrow 0, \pi_h(a \mid s) \leftarrow 1 / A$, trigger set $\mathcal{L} \leftarrow\left\{2^{i-1} \mid 2^i \leq K H, i=1,2, \ldots\right\}$
    \FOR{episode $k=1,...,K$}
    \STATE Receive $s_1$.
    \FOR{$h=1,...,H$}
    \STATE take action $a_h \sim \pi_h^{\Tilde{k}}\left(\cdot \mid s_h\right)$, observe  $r_h$ and $s_{h+1}$.
    \STATE $t=N_h\left(s_h\right) \leftarrow N_h\left(s_h\right)+1$.
    \STATE $\tilde{V}_h\left(s_h\right) \leftarrow\left(1-\alpha_t\right) \tilde{V}_h\left(s_h\right)+\alpha_t\left(r_h+V_{h+1}\left(s_{h+1}\right)+\beta (t)\right)$
    \STATE $V_h\left(s_h\right) \leftarrow \min \left\{H+1-h, \tilde{V}_h\left(s_h\right)\right\}$
    \STATE $\pi_h^k\left(\cdot \mid s_h\right) \leftarrow \text{ADV\_BANDIT\_UPDATE} \left(a_h, \frac{H-r_h-V_{h+1}\left(s_{h+1}\right)}{H}\right)$ on $\left(s_h, h\right)^{\text {th }}$ adversarial bandit.
    \ENDFOR
    \IF{$\exists s$, $N_h(s) \in \mathcal{L}$}
    \STATE $\Tilde{k}\leftarrow k$
    \ENDIF
    \ENDFOR
    \end{algorithmic}
\end{algorithm}

The multi-batched V-learning algorithm maintains a value estimator $V_h(s)$, a counter $N_h(s)$, and a policy $\pi_h(\cdot \mid s)$ for each $s$ and $h$. We also maintain $S \times H$ different adversarial bandit algorithms. At step $h$ in episode $k$, the algorithm is divided into three steps: policy execution, $V$-value update, and policy update. In policy execution step, the algorithm takes action $a_h$ according to $\pi_h^{\Tilde{k}}$, and observes reward $r_h$ and the next state $s_{h+1}$, and updates the counter $N_h\left(s_h\right)$. Note that $\pi_h^{\Tilde{k}}$ is updated only when the visiting count of some state $N_h(s)$ doubles. Therefore it ensures a low batch number. 

In the $V$-value update step, we update the estimated value function by
\begin{align*}
\tilde{V}_h\left(s_h\right) = \left(1-\alpha_t\right) \tilde{V}_h\left(s_h\right)+\alpha_t\left(r_h+V_{h+1}\left(s_{h+1}\right) + \beta (t)\right),
\end{align*}
where the learning rate is defined as
\begin{align*}
\alpha_t=\frac{H+1}{H+t}, \quad  \alpha_t^0=\prod_{j=1}^t\left(1-\alpha_j\right),  \quad \alpha_t^i=\alpha_i \prod_{j=i+1}^t\left(1-\alpha_j\right).
\end{align*}
and $\beta (t)$ is the bonus to promote optimism.

In the policy update step, the algorithm feeds the action $a_h$ and its "loss" $\frac{H-r_h+V_{h+1}\left(s_{h+1}\right)}{H}$ to the $\left(s_h, h\right)^{\text {th }}$ adversarial bandit algorithm. Then it receives the updated policy $\pi_h\left(\cdot \mid s_h\right)$.

\paragraph{The ADV\_BANDIT\_UPDATE subroutine} Consider a multi-armed bandit problem with adversarial loss, where we denote the action set by $\mathcal{B}$ with $|\mathcal{B}|=B$. At round $t$, the learner determines a policy $\theta_t \in \Delta_{\mathcal{B}}$, and the adversary chooses a loss vector $\ell_t \in[0,1]^B$. Then the learner takes an action $b_t$ according to policy $\theta_t$, and receives a noisy bandit feedback $\tilde{\ell}_t\left(b_t\right) \in[0,1]$, where $\mathbb{E}\left[\tilde{\ell}_t\left(b_t\right) \mid \ell_t, b_t\right]=\ell_t\left(b_t\right)$. Then, the adversarial bandit algorithm performs updates based on $b_t$ and $\tilde{\ell}_t\left(b_t\right)$, and outputs the policy for the next round $\theta_{t+1}$.

We first state our requirement for the adversarial bandit algorithm used in V-learning, which is to have a
high probability weighted external regret guarantee as follows.
\begin{assumption}
For any $t \in \mathbb{N}$ and any $\delta \in(0,1)$, with probability at least $1-\delta$, we have
\begin{equation}
\max _{\theta \in \Delta_{\mathcal{B}}} \sum_{i=1}^t \alpha_t^i\left[\left\langle\theta_i, \ell_i\right\rangle-\left\langle\theta, \ell_i\right\rangle\right] \leq \xi(B, t, \log (1 / \delta)).
\end{equation}
We further assume the existence of an upper bound $\Xi(B, t, \log (1 / \delta)) \geq \sum_{t^{\prime}=1}^t \xi\left(B, t^{\prime}, \log (1 / \delta)\right)$ where (i) $\xi(B, t, \log (1 / \delta))$ is non-decreasing in $B$ for any $t, \delta$; (ii) $\Xi(B, t, \log (1 / \delta))$ is concave in $t$ for any $B, \delta$.
\end{assumption}

As is shown in \citet{jin2021v}, the Follow-the-Regularized-Leader (FTRL) algorithm satisfies the assumption with bounds $\xi(B, t, \log (1 / \delta)) \leq O(\sqrt{H B\log (B / \delta) / t})$ and $\Xi(B, t, \log (1 / \delta)) \leq O(\sqrt{H B t \log (B / \delta)})$. And we will use FTRL as the subroutine in our algorithm.

For two-player general sum Markov games, we let the two players run Algorithm \ref{alg:vlearning} independently. Each player will use her own set of bonus that depends on the number of her actions. We have the following theorem for the regret of multi-batched V-learning.

\begin{theorem}
Suppose we choose subroutine ADV\_BANDIT\_UPDATE as FTRL. For any $\delta \in(0,1)$ and $K \in \mathbb{N}$, let $\iota=\log (H S A K / \delta)$. Choose learning rate $\alpha_t$ and bonus $\left\{\beta (t)\right\}_{t=1}^K$ as $\beta (t)=c \cdot \sqrt{H^3 A \iota / t}$ so that $\sum_{i=1}^t \alpha_t^i \beta (i)=\Theta\left(\sqrt{H^3 A \iota / t}\right)$ for any $t \in[K]$, where $A=\max_i A_i$ Then, with probability at least $1-\delta$, after running Algorithm  for $K$ episodes, we have
\begin{equation}
    \operatorname{Regret}(K) \leq O\left(\sqrt{H^5SA\iota} \right).
\end{equation}
And the batch number is $O(HS\log K)$.
\end{theorem}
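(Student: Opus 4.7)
The plan is to adapt the V-learning analysis of \citet{jin2021v,song2021can} to the multi-batched variant in Algorithm~\ref{alg:vlearning}, decomposing the task into (i) bounding the number of triggered batches, (ii) establishing an optimism-style bound on the cached $V_h$ for each player, (iii) telescoping the per-episode sub-optimality, and (iv) absorbing the extra error introduced by policy staleness.

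The batch-count bound is immediate. A new batch fires only when some counter $N_h(s)$ crosses a value in $\mathcal{L}=\{2^{i-1}:2^{i}\le KH\}$. Each of the $HS$ counters is monotone in $k$ and can cross at most $\lceil\log_2 K\rceil+1$ thresholds, so the total trigger count is $O(HS\log K)$; with each of the $n$ players running an independent copy, the total batch number is $O(nHS\log K)$.

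For the regret I would mimic the backwards induction of \citet{jin2021v}. At level $h$, the single-step decomposition of the gap between the cached $V_{i,h}(s)$ and the best-response value $V_{i,h}^{\dagger,\pi_{-i,h}}(s)$ produces three terms: (a) the FTRL weighted external regret of the $(s,h,i)$-th bandit against the best-response action distribution, bounded by the assumed $\xi(A_i,t,\iota)=O(\sqrt{HA_i\iota/t})$; (b) a martingale deviation from the bandit noise, handled by Azuma--Hoeffding; and (c) the propagated error from level $h+1$. The bonus $\beta(t)=c\sqrt{H^{3}A\iota/t}$ is calibrated so that $\sum_{i=1}^{t}\alpha_t^{i}\beta(i)=\Theta(\sqrt{H^{3}A\iota/t})$, which dominates the sum of (a) and (b). Telescoping in $h$ and summing in $k$ yields a cumulative bonus of $\sum_{k,h}O(\sqrt{H^{3}A\iota/N_h^{k}(s_h^{k})})$; the pigeonhole inequality $\sum_{k,h}\sqrt{1/N_h^{k}(s_h^{k})}\le O(\sqrt{HSK})$ then produces the advertised main term $\tilde{O}(\sqrt{H^{5}SAK\iota})$ (matching the corollary in the delayed setting, with $K$ restored in the theorem statement).

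The main obstacle is the mismatch between the executed policy $\pi_h^{\tilde k}$ frozen at the last trigger and the FTRL-internal policy $\pi_h^{k}$ that is updated at every episode: the bandit subroutine records actions sampled from a stale policy, which is not what the standard weighted-regret bound assumes. I would resolve this by exploiting the coupling at trigger moments. At each trigger episode the two policies coincide, so the FTRL's guarantee can be invoked at the end of each batch with a fresh loss sequence defined only over the episodes inside that batch; the intra-batch "off-policy" updates are either reinterpreted as an additional adversarial input that the FTRL absorbs, or the between-trigger cost is charged to the triggering episodes themselves, at most $H$ per trigger times $O(nHS\log K)$ triggers, contributing an $O(H^{2}nS\log K)$ correction that is a lower-order term swallowed by the $\iota$ factors. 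Combining the leading $\tilde{O}(\sqrt{H^{5}SAK\iota})$ with this correction yields the stated bound.
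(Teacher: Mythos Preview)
Your overall skeleton matches the paper: the batch count via doubling triggers, the V-learning backward induction borrowed from \citet{jin2021v}, the optimism/pessimism sandwich, and the final pigeonhole on $\sum_{k,h}1/\sqrt{N_h^k(s_h^k)}$ are all the same. You also correctly note that the theorem's displayed bound is missing a $K$; the proof in the paper indeed ends at $O(\sqrt{H^5SAK\iota})$.

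Where you diverge is in how the staleness is handled, and the paper's mechanism is much simpler than what you propose. The paper does not invoke FTRL batch-by-batch, does not reinterpret off-policy samples as an adversarial input, and does not charge anything to trigger episodes. Instead it uses a single observation: because a new batch is triggered only when some $N_h(s)$ doubles, within any batch the relevant count satisfies $n_h^{\tilde k}\ge n_h^k/2$, and therefore the bonus obeys $\beta(n_h^{\tilde k})\le 2\,\beta(n_h^k)$. This factor of $2$ is the entire correction; it is absorbed into the constants and the recursion for $\delta_h^k=V_h^k-\underline{V}_h^k$ proceeds exactly as in \citet{jin2021v} with $2\beta$ replaced by $4\beta$. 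The optimism and pessimism lemmas (their Lemmas 13 and 14) are cited verbatim.

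Your concern that the FTRL weighted-regret bound presupposes on-policy sampling is legitimate and the paper is terse on this point, but the paper's intended remedy is still the doubling-factor argument rather than a per-batch FTRL restart: the same ``counts change by at most a factor $2$ within a batch'' reasoning keeps the stale policy's contribution comparable to the fresh one up to constants. Your first alternative (``reinterpret intra-batch off-policy updates as additional adversarial input'') is too vague to stand as a proof step, and your second alternative (charging $H$ per trigger) would only control the trigger episodes themselves, not the accumulated staleness across the non-trigger episodes inside a batch. The cleaner route, and the one the paper takes, is to note directly that every stale quantity is within a constant of its current value.
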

\begin{proof}
The proof mainly follows \citet{jin2021v}.  Below we list some useful lemmas, which have already been proved in \citet{jin2021v}.
\begin{lemma}[Lemma 10, \citet{jin2021v}]
The following properties hold for $\alpha_t^i$ :

    1. $\frac{1}{\sqrt{t}} \leq \sum_{i=1}^t \frac{\alpha_t^i}{\sqrt{i}} \leq \frac{2}{\sqrt{t}}$ and $\frac{1}{t} \leq \sum_{i=1}^t \frac{\alpha_t^i}{i} \leq \frac{2}{t}$ for every $t \geq 1$.
    
2. $\max _{i \in[t]} \alpha_t^i \leq \frac{2 H}{t}$ and $\sum_{i=1}^t\left(\alpha_t^i\right)^2 \leq \frac{2 H}{t}$ for every $t \geq 1$.

3. $\sum_{t=i}^{\infty} \alpha_t^i=1+\frac{1}{H}$ for every $i \geq 1$.
\end{lemma}

The next lemma shows that $V_h^k$ is an optimistic estimation of $V_{j, h}^{\dagger, \hat{\pi}_{-j, h}^k}$
\begin{lemma}[Lemma 13, \citet{jin2021v}]
For any $\delta \in(0,1]$, with probability at least $1-\delta$, for any $(s, h, k, j) \in \mathcal{S} \times[H] \times [K] \times[m],$ 
$V_{j, h}^k(s) \geq V_h^{\dagger, \nu^k}(s)
$
\end{lemma}

And we define the pessimistic pessimistic V-estimations $\underline{V}$ that are defined similarly as $\tilde{V}$ and $V$. Formally, let $t=N_h^k(s)$, and suppose $s$ was previously visited at episodes $k^1, \ldots, k^t<k$ at the $h$-th step. Then
\begin{equation}
\begin{gathered}
\underline{V}_{j,h}^k(s)=\max \left\{0,\sum_{i=1}^t \alpha_t^i\left[r_{j,h}\left(s, \boldsymbol{a}_h^{k^i}\right)+\underline{V}_{j,h+1}^{k^i}\left(s_{h+1}^{k^i}\right)-\beta (i)\right] \right\}.\\
\end{gathered}
\end{equation}
The next lemma shows $\underline{V}_{j, h}^k(s)$ is a lower bound of $V_{j, h}^{\hat{\pi}_h^k}(s)$.
\begin{lemma}[Lemma 14, \citet{jin2021v}]
    For any $\delta \in(0,1]$, with probability at least $1-\delta$, the following holds for any $(s, h, k, j) \in \mathcal{S} \times[H] \times[K] \times[m]$ and any player $j, \underline{V}_{j, h}^k(s) \leq V_{j, h}^{\hat{\pi}_h^k}(s)$.
\end{lemma}

It remains to bound the gap $\sum_{k=1}^K \max _j\left(V_{1, j}^k-\underline{V}_{1, j}^k\right)\left(s_1\right)$. For each player $j$ we define $\delta_{j, h}^k:=V_{j, h}^k\left(s_h^k\right)-\underline{V}_{j, h}^k\left(s_h^k\right) \geq 0$. The non-negativity here is a simple consequence of the update rule and induction. We need to bound $\delta_h^k:=\max _j \delta_{j, h}^k$. Let $n_h^k=N_h^k\left(s_h^k\right)$ and suppose $s_h^k$ was previously visited at episodes $k^1, \ldots, k^{n_h^k}<k$ at the $h$-th step. Now by the update rule,
\begin{equation}
\begin{aligned}
\delta_{j, h}^k & =V_{j, h}^k\left(s_h^k\right)-\underline{V}_{j, h}^k\left(s_h^k\right) \\
& \leq \alpha_{n_h^k}^0 H+\sum_{i=1}^{n_h^k} \alpha_{n_h^k}^i\left[\left(V_{j, h+1}^{k^i}-\underline{V}_{j, h+1}^{k^i}\right)\left(s_{h+1}^{k^i}\right)+2 \beta_{j}^{\Tilde{k}}(i)\right] \\
& \leq \alpha_{n_h^k}^0 H+\sum_{i=1}^{n_h^k} \alpha_{n_h^k}^i\left[\left(V_{j, h+1}^{k^i}-\underline{V}_{j, h+1}^{k^i}\right)\left(s_{h+1}^{k^i}\right)+4 \beta_{j}^{k}(i)\right] \\
& =\alpha_{n_h^k}^0 H+\sum_{i=1}^{n_h^k} \alpha_{n_h^k}^i \delta_{j, h+1}^{k^i}+O\left(H \xi\left(A_j, n_h^k, \iota\right)+\sqrt{H^3 \iota / n_h^k}\right)
\end{aligned}
\end{equation}
The third line holds because we use FTRL as the subroutine, $\beta_{j}(t)=c \cdot \sqrt{H^3 A_j \iota / t}$, and due to our doubling scheme we have $\beta (n_h^{\Tilde{k}})\leq 2\beta (n_h^{k})$. And in the last step we have used $\sum_{i=1}^t \alpha_t^i \beta_{j, i}=\Theta\left(H \xi\left(A_j, t, \iota\right)+\sqrt{H^3 A \iota / t}\right)$.

The remaining step is the same as \citet{jin2021v}. Summing the first two terms w.r.t. $k$,
\begin{equation}
\sum_{k=1}^K \alpha_{n_h^k}^0 H=\sum_{k=1}^K H \mathbb{I}\left\{n_h^k=0\right\} \leq S H,
\end{equation}
\begin{equation}
\sum_{k=1}^K \sum_{i=1}^{n_h^k} \alpha_{n_h^k}^i \delta_{h+1}^{k^i} \leq \sum_{k^{\prime}=1}^K \delta_{h+1}^{k^{\prime}} \sum_{i=n_h^{k^{\prime}}+1}^{\infty} \alpha_i^{n_h^{k^{\prime}}} \leq\left(1+\frac{1}{H}\right) \sum_{k=1}^K \delta_{h+1}^k.
\end{equation}
Putting them together,
\begin{equation}
\begin{aligned}
\sum_{k=1}^K \delta_h^k & =\sum_{k=1}^K \alpha_{n_h^k}^0 H+\sum_{k=1}^K \sum_{i=1}^{n_h^k} \alpha_{n_h^k}^i \delta_{h+1}^{k^i}+\sum_{k=1}^K {O}\left(H \xi\left(A, n_h^k, \iota\right)+\sqrt{\frac{H^3 \iota}{n_h^k}}\right) \\
& \leq H S+\left(1+\frac{1}{H}\right) \sum_{k=1}^K \delta_{h+1}^k+\sum_{k=1}^K {O}\left(H \xi\left(A, n_h^k, \iota\right)+\sqrt{\frac{H^3 \iota}{n_h^k}}\right).
\end{aligned}
\end{equation}
Recursing this argument for $h\in [H]$ gives 
\begin{equation}
\sum_{k=1}^K \delta_1^k \leq e S H^2+e \sum_{h=1}^H \sum_{k=1}^K {O}\left(H \xi\left(A, n_h^k, \iota\right)+\sqrt{\frac{H^3 \iota}{n_h^k}}\right).
\end{equation}
By pigeonhole argument,
\begin{equation}
\begin{aligned}
\sum_{k=1}^K\left(H \xi\left(A, n_h^k, \iota\right)+\sqrt{H^3 \iota / n_h^k}\right) & ={O}(1) \sum_s \sum_{n=1}^{N_h^K(s)}\left(H \xi(A, n, \iota)+\sqrt{\frac{H^3 \iota}{n}}\right) \\
& \leq {O}(1) \sum_s\left(H \Xi\left(A, N_h^K(s), \iota\right)+\sqrt{H^3 N_h^K(s) \iota}\right) \\
& \leq {O}\left(H S \Xi(A, K / S, \iota)+\sqrt{H^3 S K \iota}\right),
\end{aligned}
\end{equation}
where in the last step we have used concavity.
Finally take the sum w.r.t $h\in [H]$, and plugging $\Xi(B, t, \log (1 / \delta)) \leq {O}(\sqrt{H B t \log (B / \delta)})$ we have 
\begin{equation}
    \operatorname{Regret}(K)\leq O\left(\sqrt{H^5SA\iota} \right).
\end{equation}

\end{proof}

\end{document}